\newtheorem{theorem}{Theorem}[section]
\newcommand{\citep}[1]{\cite{#1}}
\title{Near--Optimal Correlation Clustering with Privacy}
\author{%
 Vincent Cohen-Addad \\ Google Research  \\ \texttt{cohenaddad@google.com}
 \and
 Chenglin Fan \\ Sorbonne University  \\ \texttt{fanchenglin@gmail.com}
 \and
 Silvio Lattanzi \\ Google Research  \\ \texttt{silviol@google.com}
 \and
 Slobodan Mitrović \\ UC Davis  \\ \texttt{slobo@mit.edu}
 \and
 Ashkan Norouzi-Fard \\ Google Research  \\ \texttt{ashkannorouzi@google.com}
 \and
 Nikos Parotsidis \\ Google Research  \\ \texttt{nikosp@google.com}
 \and
 Jakub Tarnawski \\ Microsoft Research  \\ \texttt{jakub.tarnawski@microsoft.com}%
}
\crefname{theorem}{Theorem}{Theorems}
\crefname{lemma}{Lemma}{Lemmas}
\Crefname{invariant}{Invariant}{Invariants}
\Crefname{claim}{Claim}{Claims}
\Crefname{observation}{Observation}{Observations}
\Crefname{algorithm}{Algorithm}{Algorithms}
\Crefname{figure}{Figure}{Figures}
\newtheorem{lemma}[theorem]{Lemma}
\newtheorem{definition}[theorem]{Definition}
\newtheorem{fact}[theorem]{Fact}
\newtheorem{claim}[theorem]{Claim}
\DeclareMathOperator{\Lap}{Lap}
\DeclareMathOperator{\cost}{cost}
\newcommand{\rb}[1]{\left( #1 \right)}
\newcommand{\bbR}{\mathbb{R}}
\newcommand{\cE}{\mathcal{E}}
\newcommand{\cA}{\mathcal{A}}
\newcommand{\cM}{\mathcal{M}}
\newcommand{\cS}{\mathcal{S}}
\newcommand{\cG}{\mathbb{G}}
\newcommand{\eps}{\epsilon}
\newcommand{\prob}[1]{\Pr \left[ #1 \right]}
\newcommand{\hd}{\hat{d}}
\newcommand{\hl}{\hat{l}}
\newcommand{\hG}{\hat{G}}
\newcommand{\myvec}[1]{\overline{#1}}
\newcommand{\AlgCC}{\textsc{Alg-CC}\xspace}
\newcommand{\Erem}{E_{\text{rem}}}
\newcommand{\tvlight}{TV-light\xspace}
\newcommand{\tvdis}{TV-disagree\xspace}
\newcommand{\tG}{\tilde{G}}
\newcommand{\tN}{\tilde{N}}
\definecolor{darkgreen}{rgb}{0.2,0.7,0.2}
\begin{document}

\maketitle

\begin{abstract}
Correlation clustering is a central problem in unsupervised learning, with applications spanning community detection, duplicate detection, automated labelling and many more. In the correlation clustering problem one receives as input a set of nodes and for each node a list of co-clustering preferences, and the goal is to output a clustering that minimizes the disagreement with the specified nodes' preferences. In this paper, we introduce a simple and computationally efficient algorithm for the correlation clustering problem with provable privacy guarantees. Our approximation guarantees are stronger than those shown in prior work and are optimal up to logarithmic factors.
\end{abstract}

\section{Introduction}
Clustering is a central problem in unsupervised machine learning. The goal of clustering is to partition a set of input objects so that similar objects are assigned to the same part while dissimilar objects are assigned to different parts of the partition. Clustering has been extensively studied throughout the years and many different formulations of the problem are known. In this paper we study the classic correlation clustering problem in the context of differential privacy. 

In the correlation clustering problem~\citep{bansal2004correlation} one gets a graph whose vertices are the objects to be clustered and whose edges represent clustering preferences between the objects. More specifically, the input of the problem is a (possibly edge-weighted) graph with positive and negative labels on the edges such that positive edges represent similarities between vertices and negative edges represent dissimilarities. Then the correlation clustering objective asks to minimize the sum of (the weights of) positive edges across clusters plus the sum of (the weights) of negative edges within clusters. Thanks to its simple and elegant formulation, the problem has received much attention and it is used to model several  practical applications including finding clustering ensembles \citep{bonchi2013overlapping}, duplicate detection \citep{arasu2009large}, community mining \citep{chen2012clustering},
 disambiguation tasks \citep{kalashnikov2008web}, automated labelling \citep{agrawal2009generating, chakrabarti2008graph} and many more. In this paper we focus on the most studied version of the problem where all edges have unit weight. In this case the best known algorithm~\citep{chawla2015near} has an approximation guarantee of $2.06$, which improves over a 2.5-approximation due to \citep{ailon2008aggregating}. 
 Furthermore, when the number of clusters is
upper-bounded by $k$, a polynomial-time approximation scheme is known~\citep{giotis2005correlation}. 
In the weighted case  a $O(\log n)$-approximation is known~\citep{demaine2006correlation}, and improving upon this
would lead to a better approximation algorithm for the notoriously difficult multicut problem.
For the maximization
version of the problem, where the goal is to maximize the sum of (the weights of) the positive
edges within clusters plus the sum of (the weights of) the negative 
edges across clusters, \cite{charikar2005clustering,swamy2004correlation} gave a  $0.7666$-approximation algorithm for the weighted case and a PTAS is known for the unweighted
case~\citep{bansal2004correlation}.
 
{\em Differential Privacy (DP)} is the \emph{de facto} standard for  user privacy~\citep{DBLP:conf/tcc/DworkMNS06}, and it is of fundamental importance to design solutions for classic unsupervised problems in this setting. In differential privacy, the focus is on datasets, $G$ and $G'$, that differ on a single entry.  An algorithm $\cA$ is $(\epsilon,\delta)$-\emph{differentially private} if the probabilities of observing any  set of possible outputs $S$ of $\cA$ when run on two ``almost identical'' inputs $G$ and $G'$ are similar:
$\prob{\cA(G) \in S} \leq e^{\epsilon} \cdot \prob{\cA(G') 
\in S} + \delta$.
Over the last decade, there have been many works considering problems related to private graphs, e.g.,~\citep{DBLP:conf/icdm/HayLMJ09, DBLP:conf/pods/RastogiHMS09, DBLP:conf/soda/GuptaLMRT10,DBLP:conf/tcc/GuptaRU12,DBLP:conf/innovations/BlockiBDS13,DBLP:journals/pvldb/KarwaRSY11,DBLP:conf/tcc/KasiviswanathanNRS13,DBLP:conf/focs/BunNSV15,DBLP:conf/nips/AroraU19, DBLP:conf/nips/UllmanS19,DBLP:conf/focs/BorgsCSZ18,DBLP:conf/soda/EliasKKL20,bun2021differentially, DBLP:conf/icml/NguyenV21,DBLP:journals/corr/abs-2106-00508}. We briefly review two possible definitions of privacy in graphs.

 {\bf Edge Privacy.} In the edge privacy setting, two datasets are considered to be close if they differ on a single edge. \cite{DBLP:conf/icdm/HayLMJ09} introduced a differentially edge-private algorithm for releasing the degree distribution of a graph. 
 They also proposed the notion of differential node privacy and highlighted some of the difficulties in achieving it.
\cite{DBLP:conf/tcc/GuptaRU12} showed how to answer cut queries in a private edge model. \cite{DBLP:conf/focs/BlockiBDS12} improved the error for small cuts.  \cite{DBLP:conf/soda/GuptaLMRT10} showed
how to privately release a cut close to the optimal error size.  \cite{DBLP:conf/nips/AroraU19} studied the private sparsification of graphs, which was exemplified by a proposed graph meta-algorithm for privately answering cut-queries with improved accuracy.
\cite{DBLP:conf/soda/EliasKKL20} studied the problem of 
private synthetic graph release while preserving all cuts.
Recently, \cite{DBLP:conf/icml/NguyenV21,DBLP:journals/corr/abs-2106-00508} proposed frameworks for the private densest subgraph problem.  

 \textbf{Node Privacy.} Node differential privacy
 requires the algorithm to hide the presence or absence of a single node and the (arbitrary) set of edges incident to that node. However, node-DP is often difficult to achieve without compromising accuracy, because even very simple graph statistics can be highly sensitive to adding or removing a single node~\citep{DBLP:conf/tcc/KasiviswanathanNRS13, DBLP:conf/innovations/BlockiBDS13, DBLP:conf/focs/BorgsCSZ18}. 

Differentially private clustering has been extensively studied~\cite{balcan,anamayclustering,badih_approximation,badih_local, clustering_with_convergence}. Nevertheless, until very recently
no theoretical results were known for differentially private correlation clustering. In this context, the privacy is on the edges of the graph: namely, two graphs on the same set of vertices are \emph{adjacent} if they differ by exactly one edge.

In a recent work, \cite{bun2021differentially} obtained the first differentially private correlation clustering algorithm
with approximation guarantees using differentially private synthetic graph release~\cite{DBLP:conf/tcc/GuptaRU12,DBLP:conf/soda/EliasKKL20}. The framework
proposed by~\cite{bun2021differentially} is very elegant and general as it allows one to run any non-private correlation clustering approximation algorithm on a modified version of the input graph that ensures privacy. Namely,
any $\alpha$-approximation algorithm to correlation clustering leads to a differentially private approximation algorithm with multiplicative approximation $\alpha$ and additive approximation $O(n^{1.75})$. This applies to the more general weighted version of the problem. 
In the same paper, the authors also obtain an $\Omega(n)$ lower bound on the additive approximation of differentially private algorithms, even on unweighted graphs that consist of a single path.
However, the framework from \citep{bun2021differentially} is rather impractical, and the additive error is far from matching the lower bound on the additive approximation.
These results prompt the natural question of
determining the best approximation guarantees
(multiplicative and additive)
that are possible under differential privacy.

As observed by~\cite{bun2021differentially},
instantiating the exponential mechanism~\cite{conf/focs/McSherryT07}
over the space of all clusterings
yields an algorithm with additive error $O(n \log n)$.
However, it is not known how to efficiently sample from the exponential mechanism for correlation clustering.
\cite{bun2021differentially} state finding a polynomial-time algorithm that matches the additive error of the exponential mechanism as an "exciting open problem given the prominent
position correlation clustering occupies both in theory and practice".




\paragraph{Our Results and Technical Overview.} In this paper we present a new differentially private algorithm for the unweighted setting which achieves a constant multiplicative approximation and a nearly optimal $O(n \log^2 n)$ additive approximation. 
More precisely, we show:
\begin{theorem} \label{thm:main}
    For any $\epsilon$ and $\delta$
    there is an algorithm for min-disagree correlation clustering
    on unweighted complete graphs
    that is $(\epsilon,\delta)$-differentially private
    and returns a solution of cost at most $O(1) \cdot \mathrm{OPT} + O\left(\frac{n \log^2 n \log(1/\delta)}{\epsilon^2}\right)$.
\end{theorem}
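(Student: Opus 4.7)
The plan is to build \AlgCC{} as a private variant of the pivot algorithm of Ailon--Charikar--Newman (ACN). At each iteration, pick a pivot $p$ uniformly at random among the unclustered vertices, scan every remaining vertex $v$, and invoke a noisy per-pair primitive $\cM(p,v)$ that returns a perturbed version of the $\pm 1$ label of edge $(p,v)$; vertices for which $\cM$ returns $+$ join $C_p$, the rest stay in the active set for later rounds. The key structural observation I want to exploit is the \emph{single-use property}: each edge $(u,v)$ is consulted by \AlgCC{} at most once, because once an endpoint is promoted to pivot it leaves the active set, making it impossible for the edge to be queried again.

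\textbf{Privacy.} I would prove $(\epsilon,\delta)$-DP by coupling the runs of \AlgCC{} on two neighboring inputs $G,G'$ differing on one edge $(u,v)$, using identical permutation and identical internal coins of $\cM$ on every pair other than $(u,v)$. The single-use property ensures that the executions agree on every query made before the one involving $(u,v)$. The residual divergence afterwards is controlled via the \tvlight{} property of the per-round mechanism (each round's output distribution changes by small TV distance) and \tvdis{} bookkeeping (identifying the rounds in which the two coupled executions actually disagree); combining these with advanced composition over the at-most-$n$ pivot rounds converts the per-query privacy cost of $\cM$ into the global $(\epsilon,\delta)$ guarantee, at a per-query budget on the order of $\epsilon/\sqrt{n \log(1/\delta)}$.

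\textbf{Approximation.} For the utility bound I would adapt the triangle-charging analysis of ACN: in the noise-free case, each unit of clustering cost is charged to an inconsistent triangle whose total mass is bounded by the LP optimum, yielding a constant multiplicative approximation of $\mathrm{OPT}$. The only new source of cost is a label flip by $\cM$, and by the single-use property the expected total number of such flips is a sum over all $\binom{n}{2}$ edges of the per-query error probability of $\cM$ at its chosen noise scale. Plugging in the scale forced by the privacy calibration above yields the claimed $O(n \log^2 n \log(1/\delta)/\epsilon^2)$ additive term, and a standard concentration inequality (Azuma on a martingale over rounds, or a Chernoff bound on the independent noisy coins) upgrades the in-expectation bound into a high-probability statement.

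\textbf{Main obstacle.} The heart of the argument is controlling the \emph{cascading effect} in the privacy analysis: flipping one edge can change the cluster $C_p$ in the round where one of its endpoints is the pivot, altering the active set from that point on and possibly promoting some vertex $v$ from ``clustered in round $k$'' to ``pivot in some later round''. The plan is to prevent this cascade from blowing up the privacy accounting by (i) driving pivot \emph{attempts} from a public permutation so the pivot schedule does not depend covertly on edges, and (ii) showing via the \tvlight{} lemma that the change induced by one edge propagates, in TV distance, only to a bounded number of subsequent noise calls rather than compounding across all later rounds. Making this localization precise while keeping the additive error at $\tilde O(n)$ --- rather than the $\Omega(n^2)$ one would get from naive per-edge randomized response at constant $\epsilon$ --- is where I expect the technical bulk of the proof to lie.
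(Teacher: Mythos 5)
Your proposal takes a genuinely different route from the paper. You build a private variant of the Ailon--Charikar--Newman pivot algorithm, whereas the paper's \cref{alg:main} is built on the agreement/connected-components framework of Cohen-Addad et al.~\cite{cohen2021correlation}. In the paper, every decision made about a vertex (whether it lies in $H$, which incident edges survive noised agreement, whether it is light) depends only on local neighborhood statistics plus Laplace noise; there is no sequential process whose later choices depend on earlier ones. That is precisely what lets the paper decompose privacy cleanly: Lines~\ref{line:noised_degree}--\ref{line:light} are analyzed by composition (\cref{lem:privacy_of_noised_degrees,lem:privacy_of_agreement,lem:privacy_of_light}), and \cref{line:component} is shown to be \emph{deterministically} insensitive to the one-edge difference outside a set of states of probability $O(\delta)$ (\cref{thm:last_step}, via \cref{lem:low_prob_bad}). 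No cascade can arise.

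In a pivot-based plan, by contrast, the cascade is the whole ball game, and your sketch leaves it unresolved. The single-use property caps the number of \emph{direct} queries to the flipped edge $(u,v)$ at one, but it does nothing about the downstream effect of that single query: once a vertex changes cluster (or changes whether it remains active), the remaining rounds can differ on up to $\Theta(n)$ comparisons, and a later pivot choice can differ entirely. Driving pivot \emph{attempts} from a public permutation does not fix this, because who is still active is data-dependent. You invoke ``\tvlight'' and ``\tvdis'' as if they were total-variation-distance bookkeeping devices for the coupling, but in the paper those are the \emph{Truly Very light} and \emph{Truly Very disagree} predicates used in the bad-event union bound of \cref{sec:proof_of_last_step}; they are about neighborhood overlaps of a fixed graph, not about divergence between coupled runs, and there is no corresponding lemma that bounds cascade propagation. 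There is also an accounting problem in your utility step: at the per-query budget $\epsilon'=\epsilon/\sqrt{n\log(1/\delta)}$ that you derive, a randomized-response-style label perturbation flips with probability $\tfrac12-\Theta(\epsilon')$, so summing the per-query error over $\binom{n}{2}$ edges gives $\Theta(n^2)$ expected flips, not $\tilde{O}(n)$; to reach the claimed additive error you would need a per-query flip probability of order $1/n$, which is incompatible with the privacy calibration as stated. This tension is consistent with the literature: the concurrent work of Liu cited in the paper carries out a pivot-plus-Laplace analysis and lands at additive error $O(n\log^4 n\sqrt{\Delta^*+\log n})$, i.e.\ $\tilde{O}(n^{1.5})$ worst case, suggesting that a qualitatively new idea (not just the single-use observation plus advanced composition) is needed to push a pivot scheme to the $O(n\log^2 n)$ regime the theorem requires.
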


Our algorithm is given in \cref{sec:algorithm} as \cref{alg:main}.
Its privacy is proved in \cref{sec:analysis_of_privacy} (\cref{thm:privacy_main}),
and \cref{sec:approximation_analysis} is devoted to the approximation guarantees (\cref{thm:approx_main}).

The lower bound on the additive approximation in~\cite{bun2021differentially}
does not preclude an $(\epsilon,\delta)$-DP algorithm for correlation clustering on complete graphs with sublinear error.
More precisely, \cite{bun2021differentially} show an $\Omega(n/\epsilon)$ lower bound for weighted paths and an $\Omega(n)$ lower bound for unweighted paths --
both non-complete graphs --
against pure $(\epsilon,0)$-DP algorithms.
Nevertheless, we prove that linear error is necessary even for complete unweighted graphs and $(\epsilon,\delta)$-privacy, showing that our algorithm is indeed near-optimal.

\begin{restatable}{theorem}{lowerbound} \label{thm:lower_bound}
    Any $(\epsilon,\delta)$-DP algorithm for correlation clustering on unweighted complete graphs has additive error $\Omega(n)$,  assuming $\epsilon \le 1$ and $\delta \le 0.1$.
\end{restatable}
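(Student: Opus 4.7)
The plan is to exhibit a family of hard instances indexed by bit strings, on which any accurate algorithm must implicitly recover the indexing string, and then to use differential privacy to forbid such recovery without $\Omega(n)$ additive slack.

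\textbf{Hard instances.} Partition $[n]$ into pairs $P_i = \{2i-1, 2i\}$ for $i \in [n/2]$, and for each $b \in \{0,1\}^{n/2}$ let $G_b$ be the complete graph whose intra-pair edge on $P_i$ is labelled $+$ iff $b_i = 1$ and whose every remaining edge is labelled $-$. The clustering whose only non-singleton clusters are $\{P_i : b_i = 1\}$ has zero disagreement cost, so $\mathrm{OPT}(G_b) = 0$. For any clustering $C$, write $\mathrm{bit}_i(C) \in \{0,1\}$ for the indicator that the two vertices of $P_i$ lie in the same cluster of $C$; counting disagreements pair-by-pair gives $\cost(C; G_b) \ge |\{i : \mathrm{bit}_i(C) \ne b_i\}|$. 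Hence if an algorithm $\cA$ has expected additive error at most $\beta n$ on every $G_b$, it satisfies $\mathbb{E}\|\mathrm{bit}(\cA(G_b)) - b\|_1 \le \beta n$ for every $b$.

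\textbf{DP plus single-coordinate averaging.} Let $p_i(b) = \prob{\mathrm{bit}_i(\cA(G_b)) = 1}$. Because $G_b$ and $G_{b \oplus e_i}$ differ in exactly one edge, $(\eps, \delta)$-DP yields $p_i(b \oplus e_i) \le e^{\eps} p_i(b) + \delta$ for every $b$. Averaging over $b_{-i}$ chosen uniformly from $\{0,1\}^{n/2-1}$ and letting $q_i^c := \mathbb{E}_{b_{-i}}[p_i(b_{-i}, c)]$ gives $q_i^1 \le e^{\eps} q_i^0 + \delta$. Taking $b$ uniform, the expected per-pair error is $\alpha_i := \tfrac{1}{2}(q_i^0 + 1 - q_i^1)$. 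Using only this DP inequality together with $q_i^0, q_i^1 \in [0,1]$, a two-variable optimization (minimize $q_i^0 + 1 - q_i^1$ under $q_i^1 \le \min(e^\eps q_i^0 + \delta,\, 1)$ and $q_i^0 \ge 0$) yields $\alpha_i \ge (1 - \delta)/(2 e^{\eps})$, which is a positive constant under the standing hypotheses $\eps \le 1$ and $\delta \le 0.1$. Summing over the $n/2$ pairs, $\beta n \ge \sum_i \alpha_i = \Omega(n)$, forcing $\beta = \Omega(1)$, and in particular some specific $b$ witnesses additive error $\Omega(n)$.

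\textbf{What I expect to be hard.} The pair-based encoding is designed precisely so that flipping one coordinate of $b$ touches exactly one edge of $G_b$, which lets one invoke DP directly and avoid any group-privacy blow-up; this is the obstacle that would doom a naive packing argument, since two arbitrary hard instances may differ on $\Theta(n^2)$ edges and the DP guarantee would degrade exponentially. The only genuine calculation is the small per-pair optimization above, where one must verify that the adversary cannot erase all error by pushing $q_i^0$ up to the boundary $(1-\delta)/e^\eps$ and setting $q_i^1 = 1$. This is also precisely where the quantitative conditions $\eps \le 1$ and $\delta \le 0.1$ are used: the constant $(1-\delta)/(2 e^\eps)$ has to remain bounded away from $0$.
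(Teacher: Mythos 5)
Your proposal is correct and takes essentially the same route as the paper: the same matching-indexed family of zero-cost complete instances, the same observation that flipping one coordinate of $b$ changes a single edge so that $(\epsilon,\delta)$-DP applies directly to the per-pair marginal probabilities, and the same averaging-over-$b$ argument to lower-bound the total expected error. The paper phrases it as a proof by contradiction with specific constants while you state the per-pair optimization and sum directly, but the construction and the key inequality $q_i^1 \le e^\epsilon q_i^0 + \delta$ are identical.
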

The proof of \cref{thm:lower_bound} is given in \cref{sec:lower_bound}.

Our approach follows the recent result of~\cite{cohen2021correlation} for obtaining the first $O(1)$-rounds, $O(1)$-approximation algorithm for correlation clustering in the parallel setting. However, to obtain our bounds in the context of differential privacy we introduce several critical new ideas.

At a high level, the algorithm of~\cite{cohen2021correlation} trims the input graph in two steps. First, it only keeps the positive edges between those vertices that have very similar neighborhoods of positive edges (such pairs of vertices are said to be \emph{in agreement}). More precisely, for two nodes to be in agreement the 
size of the intersection of the positive neighborhoods should be  larger than some threshold $\beta$ times the positive degrees of each endpoint.
Second, it removes the positive edges whose endpoints have lost a significant fraction of its positive neighbors during the first step (such a vertex is called a \emph{light} vertex). Finally, the resulting clusters are given by the connected components induced by the remaining positive edges.

Our approach consists in making each of the above steps (agreement computation, light vertex computation, connected components) differentially private.

A natural way to make the agreement computation  differentially private is to add Laplace noise to the size of the intersection of the neighborhoods for each pair of vertices $u,v$ and to decide that $u,v$ are in agreement if the noisy intersection size is larger than $\beta$ times the positive degrees of $u$ and $v$.
One of the crucial challenges here is to make sure that the total amount of noise needed to make the entire procedure
differentially private is bounded,
so that we can still obtain strong approximation guarantees.

The second step, the computation of light vertices, can be made differentially private in a very natural way: simply add Laplace noise to the degree of each vertex after the removal of the edges whose endpoints are not in agreement and decide whether a vertex is light based on the noisy degree.

The third step, the connected components computation, is the most challenging. Here we need to argue that computing  connected components of the graph induced by the positive edges  is differentially private.
In other words, we have to show that the graph induced by these edges has no ``bridge''; that is, there is not a single positive edge whose removal would increase the number of  connected components.
This is done by establishing new properties of the algorithm and showing that if all the previous steps have succeeded, the presence of a bridge is in fact a very unlikely event.
Moreover, to guarantee privacy, we must carefully modify the way we treat light vertices, as well as those of low degree.

\paragraph{Discussion of Recent Work.}
We note that the work of \cite{cohen2021correlation} has been followed by the work of \cite{DBLP:conf/innovations/Assadi022}, who improved it in the context of streaming algorithms; however, it is not clear that there would be a benefit in using the framework of \cite{DBLP:conf/innovations/Assadi022}, either in terms of running time, multiplicative, or additive approximation.

In concurrent and independent work, Liu~\cite{Daogao2022} proposed an $(\epsilon,\delta)$-DP algorithm that  achieves a multiplicative approximation of $O(\log n)$ and an additive error of $\tilde{O}(n^{1.5})$ for general weighted graphs (assuming constant $\epsilon$ and $\delta$),
improving upon the $O(n^{1.75})$ error of~\cite{bun2021differentially}.
This result arises via a more careful analysis of differentially private synthetic graph release in the case of correlation clustering.
Furthermore, for unweighted complete graphs Liu obtained an algorithm with constant multiplicative approximation and an additive error of $O(n \log^4 n \cdot \sqrt{\Delta^* + \log n})$, where $\Delta^*$ is the maximum degree of positive edges in the graph.
The latter is a pivot-based algorithm augmented with Laplace noise.
This results in an $O(n^{1.5}\log^4 n)$ worst-case additive error.
Our algorithm yields an additive error of $O(n \log^2 n)$ (\cref{thm:main}),
which significantly improves upon the result of Liu.

\section{Preliminaries} \label{sec:prelim}

\paragraph{Correlation clustering.}
In the min-disagree variant of the correlation clustering problem in complete graphs one receives as input a complete signed graph $G = (V,E^+, E^-)$, where $E^+$ (resp., $E^-$) denotes the set of ``+'' edges (resp., ``-''), and the objective is to compute a clustering $\mathcal{C}=\{C_1, \dots, C_t\}$ of $V$ that minimizes the number of ``-" edges whose endpoints are part of the same cluster plus the number of ``+" edges whose endpoints belong to the distinct clusters.

In the sequel we will use $G=(V, E)$ to refer to $G = (V,E^+, E^-)$ (with $E = E^+$).

\paragraph{Differential privacy.}
The definition of differential privacy (DP) is due to~\cite{DBLP:conf/tcc/DworkMNS06};
we use the 
precise formulation introduced by~\cite{DBLP:conf/icalp/Dwork06}.
\begin{definition}[Differential Privacy] \label{def:dp}
	A mechanism (randomized algorithm) $M$
	with domain $\cG$ and range $\cM$,
	which we will write as $M : \cG \to \cM$,
	is $(\epsilon,\delta)$-differentially private if for any two adjacent datasets $G, G' \in \cG$ and set of outcomes $S \subseteq \cM$ we have
	\begin{equation}
	    \label{eq:def_dp}
    	\prob{M(G) \in S} \leq e^\epsilon \cdot \prob{M(G')\in S} + \delta \,. 
	\end{equation}
\end{definition}
Recall that in this work, two datasets (graphs) are adjacent if they have the same set of vertices and differ by one edge.


An important property of differential privacy is that we can compose multiple differentially private subroutines into a larger DP algorithm with privacy guarantees.

\begin{lemma}[\cite{dwork2014algorithmic}, Theorem B.1] \label{lem:composition}
Let $M_1 : \cG \to \cM_1$ be a randomized algorithm that is $(\epsilon_1,\delta_1)$-DP.
Further let $M_2 : \cG \times \cM_1 \to \cM_2$ be a randomized algorithm such that for every fixed $m_1 \in \cM_1$, the mechanism $\cG \ni G \mapsto M_2(G, m_1) \in \cM_2$ is $(\epsilon_2,\delta_2)$-DP.
Then the composed mechanism $\cG \ni G \mapsto M_2(G, M_1(G)) \in \cM_2$ is $(\epsilon_1 + \epsilon_2, \delta_1 + \delta_2)$-DP.
\end{lemma}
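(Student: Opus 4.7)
The plan is to fix adjacent inputs $G, G' \in \cG$ and an arbitrary measurable set $S \subseteq \cM_2$, and to prove
\[
    \prob{M_2(G, M_1(G)) \in S} \;\le\; e^{\epsilon_1 + \epsilon_2} \prob{M_2(G', M_1(G')) \in S} + \delta_1 + \delta_2.
\]
My first step is to condition on the intermediate outcome $m_1$ produced by $M_1$. Writing $\mu_G, \mu_{G'}$ for the output distributions of $M_1$ on $G, G'$ and $R_{G, m_1}, R_{G', m_1}$ for those of $M_2(\cdot, m_1)$, we have
\[
    \prob{M_2(G, M_1(G)) \in S} \;=\; \int_{\cM_1} R_{G, m_1}(S) \, d\mu_G(m_1),
\]
and analogously with $G'$ in place of $G$. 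The hypothesis on $M_2$ gives the pointwise bound $R_{G, m_1}(S) \le e^{\epsilon_2} R_{G', m_1}(S) + \delta_2$ for every $m_1$, and the hypothesis on $M_1$ relates $\mu_G$ to $\mu_{G'}$ on every measurable subset of $\cM_1$.

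The second step is to combine these two bounds without inflating the additive slack. To avoid the spurious $e^{\epsilon_2} \delta_1$ term produced by naive substitution, I would invoke the standard bad-event characterization of approximate DP: there is a set $B \subseteq \cM_1$ with $\mu_G(B) \le \delta_1$ such that $\mu_G(A) \le e^{\epsilon_1} \mu_{G'}(A)$ holds cleanly for every measurable $A \subseteq \cM_1 \setminus B$. I then split the integral along $B$ and $\cM_1 \setminus B$: on the good part I apply the pointwise $M_2$-bound and then change the integrating measure from $\mu_G$ to $\mu_{G'}$ via the clean multiplicative inequality; on $B$ the integrand is at most $1$, so the contribution is bounded by $\mu_G(B) \le \delta_1$. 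The $\delta_2$ slack integrates to at most $\delta_2$. Summing the pieces yields the claimed inequality.

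The main obstacle is precisely this $\delta$-accounting; the remaining algebra is routine. An equivalent and arguably cleaner presentation, which I would carry in parallel, is via privacy-loss random variables. Define $L_1(m_1) = \log (d\mu_G / d\mu_{G'})(m_1)$ and, for fixed $m_1$, $L_2(m_2; m_1) = \log (d R_{G, m_1} / d R_{G', m_1})(m_2)$; the privacy loss of the composed mechanism at $(m_1, m_2)$ is then $L_1(m_1) + L_2(m_2; m_1)$, and the approximate-DP assumptions yield $\prob{L_1 > \epsilon_1} \le \delta_1$ and $\prob{L_2 > \epsilon_2}\le\delta_2$ (the latter holding for each fixed $m_1$, hence after integration as well). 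A union bound shows that outside an event of probability at most $\delta_1 + \delta_2$ the composed loss is at most $\epsilon_1 + \epsilon_2$, which immediately delivers $(\epsilon_1 + \epsilon_2, \delta_1 + \delta_2)$-DP. Either route finishes the lemma.
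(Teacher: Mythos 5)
The paper cites this lemma from the Dwork--Roth monograph without proving it, so there is no in-paper proof to compare against; reviewing your argument on its own terms, both of your parallel routes rest on the same incorrect characterization of approximate DP, which creates a genuine gap. You assert that $(\epsilon_1,\delta_1)$-DP of $M_1$ yields a set $B \subseteq \cM_1$ with $\mu_G(B) \le \delta_1$ on whose complement the density ratio $d\mu_G/d\mu_{G'}$ is at most $e^{\epsilon_1}$ -- equivalently, in your privacy-loss language, that $\prob{L_1 > \epsilon_1} \le \delta_1$. This implication is false; it is the \emph{converse} that holds. A two-point counterexample: on $\cM_1 = \{0,1\}$, let $\mu_G = (0.5,0.5)$ and $\mu_{G'} = (0.4,0.6)$. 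This pair is $(0,0.1)$-DP, yet $L_1(0) = \ln(5/4) > 0$, so any bad event must contain the outcome $0$ and therefore has $\mu_G$-mass at least $0.5$, far exceeding $\delta_1 = 0.1$. What $(\epsilon,\delta)$-DP actually provides is a uniform subtraction of $\delta$ from probabilities (the $\delta$-approximate max divergence), not a small exceptional event; bad-event or loss-tail characterizations that \emph{do} follow from $(\epsilon,\delta)$-DP carry degraded constants, which would cost you the tight additive $\delta_1 + \delta_2$.

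The fix needs no such characterization and stays entirely within your own first step. After conditioning on $m_1$, applying the pointwise $M_2$-bound, and capping at $1$, you are left needing $\int f\,d\mu_G \le e^{\epsilon_1}\int f\,d\mu_{G'} + \delta_1$ for $f(m_1) := \min\bigl(e^{\epsilon_2} R_{G',m_1}(S),\,1\bigr) \in [0,1]$. This is a direct consequence of the raw DP inequality via the layer-cake identity $\int f\,d\mu_G = \int_0^1 \mu_G(\{f > t\})\,dt$: apply $\mu_G(\{f>t\}) \le e^{\epsilon_1}\mu_{G'}(\{f>t\}) + \delta_1$ at each level $t \in [0,1]$ and integrate over $t$. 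Because the interval has length $1$, the additive slack integrates to exactly $\delta_1$, with no bad event and no loss of constants. Dropping the cap and rolling up the earlier $\delta_2$ then gives $\prob{M_2(G,M_1(G)) \in S} \le e^{\epsilon_1+\epsilon_2}\prob{M_2(G',M_1(G')) \in S} + \delta_1 + \delta_2$, as required.
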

We also use the following property to analyse the privacy of our algorithm.
\begin{restatable}{lemma}{lowprobbad} \label{lem:low_prob_bad}
Let $M_1 : \cG \to \cM_1$ be a randomized algorithm that is $(\epsilon,\delta)$-DP.
Suppose $B \subseteq \cM_1$ is a set of "bad outcomes" with $\prob{M_1(G) \in B} \le \delta^*$
for any $G \in \cG$.
Further let $M_2 : \cG \times \cM_1 \to \cM_2$ be a \emph{deterministic} algorithm such that for every fixed "non-bad" $m_1 \in \cM_1 \setminus B$
we have $M_2(G,m_1) = M_2(G',m_1)$ for adjacent $G,G' \in \cG$.
Then the composed mechanism $\cG \ni G \mapsto M_2(G, M_1(G)) \in \cM_2$ is $(\epsilon, \delta + \delta^*)$-DP.
\end{restatable}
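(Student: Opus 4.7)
The plan is to fix an adjacent pair $G, G' \in \cG$ and an arbitrary measurable output set $S \subseteq \cM_2$, and then bound $\prob{M_2(G, M_1(G)) \in S}$ by $e^\epsilon \cdot \prob{M_2(G', M_1(G')) \in S} + (\delta + \delta^*)$ by conditioning on whether the intermediate output $M_1(G)$ lands in the bad set $B$ or not.

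First, I would split
\[
\prob{M_2(G, M_1(G)) \in S} \le \prob{M_1(G) \in B} + \prob{M_2(G, M_1(G)) \in S \text{ and } M_1(G) \notin B}.
\]
The first term is at most $\delta^*$ by the hypothesis on $B$. For the second term, I would use the fact that $M_2$ is deterministic and graph-independent on non-bad inputs: define the pre-image
\[
T \;=\; \{ m_1 \in \cM_1 \setminus B : M_2(G, m_1) \in S \}.
\]
The key observation is that because $M_2(G, m_1) = M_2(G', m_1)$ for every $m_1 \in \cM_1 \setminus B$, the set $T$ does not depend on whether we use $G$ or $G'$ in its definition. So the second term equals $\prob{M_1(G) \in T}$, and applying $(\epsilon,\delta)$-DP of $M_1$ to the event $T$ yields $\prob{M_1(G) \in T} \le e^\epsilon \prob{M_1(G') \in T} + \delta$. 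Since $\prob{M_1(G') \in T} \le \prob{M_2(G', M_1(G')) \in S}$, adding everything up gives the desired inequality with failure probability $\delta + \delta^*$.

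The only slightly delicate step is justifying that $T$ is the same set whether defined via $G$ or via $G'$. The lemma's assumption is stated only for adjacent pairs, but that is exactly the pair we have fixed, so the argument goes through without invoking any stronger global invariance. Measurability of $T$ is routine since $M_2(G, \cdot)$ is deterministic. I do not expect any real obstacle; the proof is a clean case analysis plus one application of the DP guarantee of $M_1$.
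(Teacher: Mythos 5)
Your proposal is correct and follows essentially the same route as the paper's proof: split on whether $M_1(G)$ lands in $B$, define the pre-image of $S$ under the non-bad part of $M_2(G,\cdot)$, note this set is $G$-independent on $\cM_1 \setminus B$, and apply the DP guarantee of $M_1$ to that event.
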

The proof is routine:
\begin{proof}
Fix $G,G' \in \cG$ and a set of outcomes $S_2 \subseteq \cM_2$.
Define
\[
    S_1^* := \{ m_1 \in \cM_1 \setminus B : M_2(G,m_1) \in S_2 \} \,.
\]
By assumption we have
\begin{equation} \label{eq:M2GM2G}
    S_1^* = \{ m_1 \in \cM_1 \setminus B : M_2(G',m_1) \in S_2 \} \,.
\end{equation}
Now we can write
\begin{align*}
    \prob{M_2(G,M_1(G)) \in S_2} &\le  \prob{M_1(G) \in B} + \prob{M_1(G) \not \in B \text{ and } M_2(G,M_1(G)) \in S_2} \\
    &\le \delta^* + \prob{M_1(G) \in S_1^*} \\
    &\overset{\mathrm{DP}}{\le} \delta^* + e^\epsilon \cdot \prob{M_1(G') \in S_1^*} + \delta \\
    &\overset{\eqref{eq:M2GM2G}}{=} \delta^* + e^\epsilon \cdot \prob{M_1(G') \not \in B \text{ and } M_2(G',M_1(G')) \in S_2} + \delta \\
    &\le \delta^* + e^\epsilon \cdot \prob{M_2(G',M_1(G')) \in S_2} + \delta  \,.
\end{align*}
\end{proof}

If we have $k$ mechanisms that are $(\epsilon,\delta)$-DP, their (adaptive) composition is $(k \epsilon, k \delta)$-DP. However, it is also possible to reduce the linear dependency on $k$ in the first parameter to roughly $\sqrt{k}$ by accepting higher additive error.

\begin{theorem} [Advanced Composition Theorem~\cite{DBLP:conf/focs/DworkRV10}] \label{thm:advanced_composition}
For all $\epsilon, \delta'  \geq 0$, an adaptive composition of $k$
$(\epsilon,0)$-differentially private mechanisms satisfies $(\epsilon', \delta')$-differential
privacy  for
\[
\epsilon' = \sqrt{2 k \ln(1/\delta')} \epsilon + k \epsilon (e^\epsilon - 1) \,.
\]
\end{theorem}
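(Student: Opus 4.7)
The plan is to prove the Advanced Composition Theorem via the standard \emph{privacy loss random variable} technique. For adjacent inputs $G, G'$ and a transcript $o = (o_1, \ldots, o_k)$ of the adaptive composition $M(G) = (M_1(G), M_2(G,o_1), \ldots, M_k(G, o_1, \ldots, o_{k-1}))$, define the privacy loss
\[
L(o) = \ln \frac{\prob{M(G) = o}}{\prob{M(G') = o}} \,,
\]
which by the chain rule of conditional probability decomposes as $L = \sum_{i=1}^k L_i$, where $L_i$ depends only on the prefix $(o_1, \ldots, o_i)$. The overall strategy is to show that $L$ is tightly concentrated around a small mean, and then invoke the standard conversion from ``privacy loss exceeds $\eps'$ with probability at most $\delta'$'' to $(\eps',\delta')$-DP.

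Next, I would establish two bounds on each $L_i$ conditioned on the history $(o_1, \ldots, o_{i-1})$. The almost-sure bound $|L_i| \leq \eps$ follows immediately from the fact that, for any fixed prefix, $M_i$ is itself $(\eps, 0)$-DP. The more delicate bound is on the conditional expectation: using $\ln(1+x) \leq x$ together with the pointwise ratio constraint $e^{-\eps} \leq \prob{M_i = o_i \mid \cdot}/\prob{M_i'= o_i \mid \cdot} \leq e^{\eps}$, one computes that the KL divergence between the two conditional output distributions satisfies
\[
\E{L_i \mid o_1, \ldots, o_{i-1}} \;\leq\; \eps(e^{\eps} - 1) \,.
\]
This is the key inequality and, in my view, the main technical step of the proof; everything else is concentration and bookkeeping.

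With both bounds in hand, $L - \sum_i \E{L_i \mid \cdot}$ is a martingale whose increments lie in an interval of length at most $2\eps$ (in fact one can sharpen to $\eps$ with a bit of care, since the a.s.\ bound already contains the mean). Azuma--Hoeffding then yields
\[
\prob{L > k\eps(e^{\eps}-1) + \eps\sqrt{2k\ln(1/\delta')}} \;\leq\; \delta' \,.
\]
Finally, I would invoke the standard equivalence (see e.g.\ Lemma 3.17 of \cite{dwork2014algorithmic}): a mechanism $M$ is $(\eps',\delta')$-DP if and only if, for every pair of adjacent inputs, the privacy loss random variable exceeds $\eps'$ with probability at most $\delta'$. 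Applying this with $\eps' = \sqrt{2k\ln(1/\delta')}\,\eps + k\eps(e^{\eps}-1)$ completes the proof.

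The main obstacle is the conditional expectation bound $\E{L_i \mid \cdot} \leq \eps(e^\eps - 1)$; the surrounding structure (decomposition into a telescoping sum, martingale concentration, and the high-probability-to-DP conversion) is routine, but getting the right constant in the mean-loss bound is what makes the $k\eps(e^\eps-1)$ term appear in the final statement rather than the looser $k\eps^2$ one sometimes sees.
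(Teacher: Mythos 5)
The paper does not prove this theorem; it is imported verbatim from Dwork, Rothblum, and Vadhan (FOCS 2010, \cite{DBLP:conf/focs/DworkRV10}), so there is no in-paper argument to compare against. Your sketch is the standard proof of advanced composition as presented, e.g., in \cite{dwork2014algorithmic} (Theorem 3.20 and its supporting lemmas), and it is correct in outline: the privacy-loss random variable $L$ decomposes via the chain rule into a sum of per-round losses $L_i$; each $L_i$ is almost surely bounded by $\eps$ in magnitude because each round is $(\eps,0)$-DP given the transcript; the conditional KL bound $\E{L_i \mid o_1,\dots,o_{i-1}} \le \eps(e^\eps-1)$ is exactly Lemma 3.18 there; centering gives a martingale with increments confined to an interval of length $2\eps$, and Azuma--Hoeffding yields $\prob{L > k\eps(e^\eps-1) + \eps\sqrt{2k\ln(1/\delta')}} \le \delta'$, which translates to $(\eps',\delta')$-DP.

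One small imprecision: the conversion you invoke is not an equivalence. The direction you need --- if $\prob{L > \eps'} \le \delta'$ over every pair of adjacent inputs then the mechanism is $(\eps',\delta')$-DP --- is straightforward and correct, but the converse requires the more delicate reformulation of Lemma 3.17 (approximate max-divergence via a modified distribution), and stating it as an ``if and only if'' between DP and a tail bound on the raw privacy loss is not accurate. Since your proof only uses the easy direction, this does not create a gap, but the phrasing should be fixed. Otherwise the proposal correctly reconstructs the argument underlying the cited theorem.
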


Let $\Lap(b)$ be the Laplace distribution with parameter $b$ and mean $0$.
We will use the following two properties of Laplacian noise.

\begin{fact}\label{lem:folk}
    Let $Y \sim \Lap(b)$ and $z > 0$. Then
    \[
        \prob{Y > z} = \frac{1}{2} \exp{\left (-\frac{z}{b} \right )}
    \qquad \text { and} \qquad
        \prob{|Y| > z} =  \exp{\left (-\frac{z}{b} \right )}.
    \]
\end{fact}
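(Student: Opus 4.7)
The plan is to prove both identities by direct computation from the density of the Laplace distribution. Recall that $\Lap(b)$ with mean $0$ has density $f(y) = \frac{1}{2b} \exp(-|y|/b)$ for $y \in \bbR$. For the first identity, since $z > 0$ the event $\{Y > z\}$ lies entirely in the right tail, where $|y| = y$, so I would simply integrate:
\[
\prob{Y > z} = \int_z^\infty \frac{1}{2b} e^{-y/b} \, dy = \left[ -\tfrac{1}{2} e^{-y/b} \right]_z^\infty = \tfrac{1}{2} e^{-z/b}.
\]

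For the second identity I would exploit the symmetry of $\Lap(b)$ about $0$: since $f(-y) = f(y)$, an identical calculation gives $\prob{Y < -z} = \tfrac{1}{2} e^{-z/b}$. Adding the two disjoint tails,
\[
\prob{|Y| > z} = \prob{Y > z} + \prob{Y < -z} = e^{-z/b}.
\]
There is no real obstacle here; the only points to mind are the normalization constant $\tfrac{1}{2b}$ (which produces the leading $\tfrac{1}{2}$ in the first identity and is then doubled away by symmetry in the second) and the assumption $z > 0$, which ensures that both tails $\{Y > z\}$ and $\{Y < -z\}$ are disjoint and lie in the respective one-sided exponential regions of the density.
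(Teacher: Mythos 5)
Your computation is correct and complete; since the paper states this as a standard Fact and omits any proof, there is nothing to compare against. Your direct integration of the Laplace density $\frac{1}{2b}e^{-|y|/b}$ for the one-sided tail and the appeal to symmetry for the two-sided tail is exactly the expected argument.
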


\begin{theorem}[\cite{dwork2014algorithmic}, Theorem 3.6]
    \label{theorem:mechanism}
    Let $f : \cG \to \mathbb{R}^k$ be a function.
    Denote by $\Delta f$
    its \emph{$\ell_1$-sensitivity},
    which is the maximum value of $\| f(G) - f(G') \|_1$ over adjacent datasets $G, G' \in \cG$.
    Then $f + (Y_1, ..., Y_k)$,
    where the variables $Y_i \sim \Lap(\Delta f / \epsilon)$ are iid,
    is $(\epsilon,0)$-DP.
\end{theorem}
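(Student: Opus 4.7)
The plan is the textbook density-ratio argument. Write $b = \Delta f / \epsilon$, and for a dataset $G$ denote by $p_G : \mathbb{R}^k \to \mathbb{R}_{\ge 0}$ the density of the random output $f(G) + (Y_1,\dots,Y_k)$, i.e.,
\[
    p_G(z) \;=\; \prod_{i=1}^{k} \frac{1}{2b} \exp\!\left(-\frac{|z_i - f(G)_i|}{b}\right) \,.
\]
Since this is a product of absolutely continuous densities, it suffices by a standard measure-theoretic argument to bound the pointwise ratio $p_G(z)/p_{G'}(z)$ uniformly for adjacent $G,G'$ and then integrate over any Borel set $S \subseteq \mathbb{R}^k$ to recover the definition of $(\epsilon,0)$-DP with $\delta = 0$.

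The core step is to bound $p_G(z)/p_{G'}(z)$ by $e^\epsilon$ for every $z \in \mathbb{R}^k$. Taking logarithms,
\[
    \ln \frac{p_G(z)}{p_{G'}(z)} \;=\; \frac{1}{b} \sum_{i=1}^{k} \bigl( |z_i - f(G')_i| - |z_i - f(G)_i| \bigr) \,.
\]
For each coordinate the triangle inequality gives $|z_i - f(G')_i| - |z_i - f(G)_i| \le |f(G)_i - f(G')_i|$, so the sum is at most $\|f(G) - f(G')\|_1 / b$, which by the definition of $\ell_1$-sensitivity is at most $\Delta f / b = \epsilon$. Exponentiating yields $p_G(z) \le e^\epsilon \, p_{G'}(z)$.

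Finally, integrate this pointwise inequality over any measurable $S \subseteq \mathbb{R}^k$:
\[
    \Pr[f(G) + Y \in S] \;=\; \int_S p_G(z) \, dz \;\le\; e^\epsilon \int_S p_{G'}(z) \, dz \;=\; e^\epsilon \, \Pr[f(G') + Y \in S] \,,
\]
which is \eqref{eq:def_dp} with $\delta = 0$. There is no real obstacle here; the only point to be careful about is that adjacency is a symmetric relation, so the same bound applies with the roles of $G$ and $G'$ swapped, and that the triangle-inequality step is tight enough only because the noise scale $b$ is calibrated to the $\ell_1$-sensitivity (this is what makes the per-coordinate slacks telescope into exactly $\Delta f$ in the worst case).
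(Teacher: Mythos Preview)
Your proof is correct and is exactly the standard density-ratio argument for the Laplace mechanism. Note, however, that the paper does not actually prove this statement: it is quoted as a preliminary (Theorem~3.6 of \cite{dwork2014algorithmic}) and used as a black box, so there is no ``paper's own proof'' to compare against. Your write-up matches the textbook proof from that reference.
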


\paragraph{Graph notation.}
Given an input graph $G$ and a vertex $v$, we denote its set of neighbors by $N(v)$ and its degree by $d(v) = |N(v)|$.
As in~\citep{cohen2021correlation},
we adopt the convention that $v \in N(v)$ for every $v \in V$ (one can think that we have a self-loop at every vertex that is not removed at any step of the algorithm).

\section{Private Algorithm for Correlation Clustering}
\label{sec:algorithm}

\newcommand{\epsagr}{\epsilon_{\mathrm{agr}}}
\newcommand{\deltagr}{\delta_{\mathrm{agr}}}

In this section we formally define our algorithm, whose pseudo code is available in \cref{alg:main} together with \cref{definition:noised-agreement}.
The algorithm uses a number of constants, which we list here for easier reading and provide feasible settings for their values:
\begin{itemize}[itemsep=-1.5pt,topsep=10pt]
    \item $\epsilon > 0$ and $\delta \in (0,\frac12)$ are user-provided privacy parameters.
    \item $\beta$ and $\lambda$, used the same way as in \cite{cohen2021correlation}, parametrize the notions of agreement (\cref{definition:noised-agreement}) and lightness (\cref{line:light}), respectively.
    For the privacy analysis, any $\beta, \lambda \le 0.2$ would be feasible.
    These parameters also control the approximation ratio, which is $O(1/(\beta \lambda))$ assuming that $\beta$, $\lambda$ are small enough; as in \cite{cohen2021correlation}, one can set e.g.~$\beta = \lambda = 0.8/36 \approx 0.02$.
    \item $\epsagr$, $\deltagr$ and $\gamma$ are auxiliary parameters that control the noise used in computing agreements; they are functions of $\epsilon$ and $\delta$ and are defined in \cref{definition:noised-agreement}.
    \item $\beta'$ and $\lambda'$ are used in the privacy analysis; both can be set to $0.1$.
    \item $T_0$ is a degree threshold; we return vertices whose (noised) degree is below that threshold as singletons,
    which incurs an additive loss of $O(T_0 n)$ in the approximation guarantee.
    We set \[ T_0 = T_1 + \frac{8\log (16/\delta)}{\epsilon} \,, \]
    where for the privacy analysis we require $T_1$ to be a large enough constant;
    namely, one can take the maximum of the right-hand sides of~\eqref{eq:T1_large_enough_1}, \eqref{eq:T1_large_enough_2}, \eqref{eq:T1_large_enough_3}, \eqref{eq:T1_large_enough_4}, \eqref{eq:T1_large_enough_5}, \eqref{eq:T1_large_enough_6}, \eqref{eq:T1_large_enough_7}, and \eqref{eq:T1_large_enough_8};
    asymptotically in terms of $\epsilon$ and $\delta$, this is of the order $O\left( \ln(1/(\epsilon \delta))^2 \ln(1/\delta) \epsilon^{-2} \right)$ (assuming $\epsilon \le O(1)$).
    To additionally obtain a constant-factor approximation guarantee, we further require a polylogarithmic $T_1$, namely
    of the order $O(\log^2 n \log(1/\delta) \epsilon^{-2})$
    -- see Eq.~\eqref{eq:constraint-on-T_1} in the proof of 
    \cref{thm:approx-proof}.
\end{itemize}

The following notion is central to our algorithm and is used as part of \cref{line:noised_agreement} of \cref{alg:main}.

\begin{definition}[Noised Agreement]
\label{definition:noised-agreement}
Let us define $\epsagr = \epsilon / 5.8$, $\deltagr = \delta / 9.6$, and
\[ \gamma = \frac{\sqrt{\frac{4 \epsagr}{\ln(1/\deltagr)} + 1} + 1}{\sqrt{2}}.\]
(Note that $\gamma \ge \sqrt{2}$.)
Further
let
$H$ be as defined in \cref{line:noised_degree} of \cref{alg:main}.
For each pair of vertices $u, v \in H$,
let $\cE_{u,v}$ be an
independent random
variable such that 
\[\cE_{u,v} \sim \Lap\left( \max \left( 1, \frac{\gamma \sqrt{\max(5,d(u),d(v)) \cdot \ln (1/\deltagr)}}{\epsagr} \right) \right).\]
We say that two vertices $u \ne v \in H$ are in \emph{$i$-noised  agreement} if $|N(u)\triangle N(v)|+\cE_{u,v} < i\beta \cdot 
\max(d(u), d(v))$.

If $u$ and $v$ are in $1$-noised  agreement,
we also say that $u$ and $v$ are in noised agreement; otherwise they are not in noised agreement.
\end{definition}
Note that we do not require that $(u,v) \in E$, although \cref{alg:main} will only look at agreement of edges.

\begin{algorithm2e}
\LinesNumbered
  \Input{$G=(V,E)$: a graph \\
		$\eps$, $\delta$: privacy parameters}

	Let $\hd(v)=d(v)+Z_v$ denote the \emph{noised degree}  of $v$, where $Z_v \sim \Lap(8/\epsilon)$.
	Let $H = \{v \in V : \hd(v) \ge T_0 \}$ denote the set of high-degree vertices.
	\label{line:noised_degree}
	
	Discard from $G$ the edges that are not in noised agreement (see \cref{definition:noised-agreement}). (First compute the set of these edges. Then remove this set. Note that this includes all edges with an endpoint not in $H$.)
    \label{line:noised_agreement}
   
    Let $l(v)$ be the number of edges incident to $v$ discarded in the previous step, and define $\hat{l}(v)=l(v)+Y_v$, where $Y_v \sim \Lap(8/\epsilon)$. Call a vertex $v$ \emph{light} if $\hat{l}(v)>\lambda d(v)$, and  otherwise call $v$ \emph{heavy}.
    \label{line:light}
 
    Discard all edges whose both endpoints are light.
    Call the current graph  $\hG$, or the \emph{sparsified graph}. Compute its connected components.
    Output  the heavy vertices in each component $C$ as a cluster. Each light vertex  is output as a singleton.
    \label{line:component}
\label{line:light-to-singletons}

  \caption{Private correlation clustering  using noised agreement}
	\label{alg:main}
\end{algorithm2e}

\section{Analysis of Privacy}
\label{sec:analysis_of_privacy}

Our analysis proceeds by fixing two adjacent datasets $G$, $G'$ (i.e., $G$ and $G'$ are graphs on the same vertex set that differ by one edge)
and analyzing the privacy loss of each step of the algorithm.
We compose steps up to \cref{line:light} by repeatedly invoking \cref{lem:composition}, which allows us to assume, when analyzing the privacy of a step, that the intermediate outputs (states of the algorithm) up to that point are the same between the two executions.
Finally, we use \cref{lem:low_prob_bad} to argue that
if the intermediate outputs before \cref{line:component} are the same,
then
the output of \cref{line:component} (i.e., the final output of \cref{alg:main}) does not depend on whether the input was $G$ or $G'$,
except on a small fragment of the probability space that we can charge to the additive error $\delta$.

We begin the analysis by reasoning about \cref{line:noised_degree}.

\begin{lemma}
    \label{lem:privacy_of_noised_degrees}
    Consider \cref{line:noised_degree} as a randomized algorithm that outputs $H$.
    It is $(\epsilon/4,0)$-DP.
\end{lemma}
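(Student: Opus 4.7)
The plan is to apply the Laplace mechanism (\cref{theorem:mechanism}) directly to the degree vector, and then appeal to the post-processing property of differential privacy to conclude that $H$ inherits the same guarantee.

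First I would consider the function $f : \cG \to \mathbb{R}^n$ that maps a graph on vertex set $V$ to the vector $(d(v))_{v \in V}$ of vertex degrees. Its $\ell_1$-sensitivity is $\Delta f = 2$, since adding or removing a single edge $(u,v)$ changes $d(u)$ and $d(v)$ each by exactly $1$ and leaves all other coordinates unchanged.

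Next, by \cref{theorem:mechanism}, releasing $f(G) + (Z_v)_{v \in V}$ with iid $Z_v \sim \Lap(\Delta f / \epsilon') = \Lap(2/\epsilon')$ is $(\epsilon',0)$-DP for any $\epsilon' > 0$. \cref{alg:main} uses $Z_v \sim \Lap(8/\epsilon)$, so choosing $\epsilon' = \epsilon/4$ matches $2/\epsilon' = 8/\epsilon$. Hence the vector $(\hd(v))_{v \in V}$ is $(\epsilon/4,0)$-DP.

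Finally, the set $H = \{v \in V : \hd(v) \ge T_0\}$ is a deterministic function of the noised degree vector and does not look at $G$ otherwise. By the standard post-processing property of differential privacy, this composition is still $(\epsilon/4,0)$-DP, which gives the lemma. There is no real obstacle here; the only point to verify carefully is the sensitivity bound $\Delta f = 2$ (as opposed to $1$), which comes from the fact that flipping one edge affects two entries of the degree vector simultaneously, and this is precisely the factor that makes the noise scale $8/\epsilon$ (rather than $4/\epsilon$) appropriate for an $\epsilon/4$ privacy budget.
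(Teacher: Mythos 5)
Your argument is correct and is essentially the same as the paper's: both compute the $\ell_1$-sensitivity of the degree vector as $2$, invoke \cref{theorem:mechanism} to conclude that the noised degrees are $(\epsilon/4,0)$-DP, and then note that $H$ is obtained by deterministic post-processing.
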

\begin{proof}
    The sensitivity $\Delta d$ of the function $d$ is $2$, as adding an edge changes the degree of two vertices by $1$.
    Therefore, by \cref{theorem:mechanism}, $\hd$ is $(\epsilon/4,0)$-DP.
    Furthermore, $H$ is a  function that only depends on the input (set of edges) deterministically via $\hd$.
\end{proof}

Now we fix adjacent $G$, $G'$;
in this proof we will think that the domain of \cref{alg:main} is $\cG = \{G,G'\}$
and the notion of ``$(\cdot,\cdot)$-DP'' always refers to these two fixed inputs.
Denote $\{(x,y)\} = E(G) \triangle E(G')$
to be the edge on which $G$ and $G'$ differ.

\begin{restatable}{lemma}{privacyofagreement}
    \label{lem:privacy_of_agreement}
    Under fixed $H$,
    consider \cref{line:noised_agreement} as a randomized algorithm that, given $G$ or $G'$, outputs the noised-agreement status of all edges in $E(G) \cup E(G')$.
    It is $(2.9 \epsagr, 2.4 \deltagr)$-DP.
\end{restatable}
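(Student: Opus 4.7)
Fix adjacent inputs $G$ and $G' = G \cup \{(x,y)\}$ and a fixed value of $H$. The plan is to reduce the mechanism of \cref{line:noised_agreement} to a heterogeneous advanced composition of per-edge Laplace releases: only the $O(d(x) + d(y))$ edges of $E(G) \cup E(G')$ incident to $\{x,y\}$ contribute to the privacy loss, and the numerical constants in the composition are calibrated to the defining identity of $\gamma$. Under fixed $H$, \cref{line:noised_agreement} is -- up to post-processing -- equivalent to releasing, for each edge $(u,v) \in E(G) \cup E(G')$ with both endpoints in $H$, the noised value $|N(u) \triangle N(v)| + \mathcal{E}_{u,v}$. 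For any pair $(u,v)$ with $\{u,v\} \cap \{x,y\} = \emptyset$, both the central value $|N(u)\triangle N(v)| - \beta\max(d(u),d(v))$ and the noise scale $b_{u,v}$ (which depends only on $d(u), d(v)$) are identical under $G$ and $G'$, so such pairs contribute nothing to the privacy loss.

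For each of the at most $d_{G'}(x) + d_{G'}(y) - 1$ remaining pairs, I would first bound the $\ell_1$-sensitivity of $|N(u) \triangle N(v)| - \beta \max(d(u),d(v))$: the symmetric difference shifts by at most $1$ (pairs with exactly one endpoint in $\{x,y\}$) or $2$ (the pair $\{x,y\}$ itself, because of the self-loop convention), and $\max(d(u),d(v))$ shifts by at most $1$, so the sensitivity is bounded by $1 + \beta$ in the first case and $2 + \beta$ in the second. Combined with $b_{u,v} \ge \gamma \sqrt{\max(5,d(u),d(v))\ln(1/\deltagr)}/\epsagr$ this gives a per-pair $(\epsilon_{u,v}, 0)$-DP bound with $\epsilon_{u,v}$ of order $\epsagr/\bigl(\gamma \sqrt{\max(5,d(u),d(v))\ln(1/\deltagr)}\bigr)$, and summing the squares,
\[
\sum_{(u,v)} \epsilon_{u,v}^2 \;\le\; \frac{O(1) \cdot \epsagr^2}{\gamma^2 \ln(1/\deltagr)} \cdot \Bigl( \tfrac{d_{G'}(x)}{d(x)} + \tfrac{d_{G'}(y)}{d(y)} \Bigr) \;=\; O\!\left( \tfrac{\epsagr^2}{\gamma^2 \ln(1/\deltagr)} \right).
\]
Applying \cref{thm:advanced_composition} with parameter $\delta'$ a suitable constant multiple of $\deltagr$, the first term $\sqrt{2\ln(1/\delta')\sum \epsilon_{u,v}^2}$ contributes on the order of $\epsagr/\gamma$, while the second-order term $\sum \epsilon_{u,v}(e^{\epsilon_{u,v}} - 1)$ is (to leading order) a constant multiple of $\epsagr^2/(\gamma^2 \ln(1/\deltagr))$. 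The defining identity $\gamma(\gamma - \sqrt{2}) = 2\epsagr/\ln(1/\deltagr)$ is then used to rewrite the quadratic correction as a multiple of $(\gamma - \sqrt{2})\epsagr/\gamma$, so that the two contributions telescope and sum to a constant multiple of $\epsagr$; the constants $5.8$ hidden in $\epsagr$ and $9.6$ hidden in $\deltagr$ are calibrated so that the resulting bound is exactly $(2.9 \epsagr, 2.4 \deltagr)$.

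The main technical subtlety, which I expect to be the crux of the formal argument, is that the Laplace scale $b_{u,v}$ is itself an input-dependent quantity, so for pairs touching $\{x,y\}$ one is not comparing two shifts of the same Laplace distribution but rather $\Lap(b_{u,v}^G)$ to $\Lap(b_{u,v}^{G'})$ with slightly different scale parameters; consequently \cref{theorem:mechanism} cannot be invoked verbatim. I would handle this by a direct density-ratio calculation: for $b^G \le b^{G'}$, the ratio of densities at any point $r$ is bounded by $(b^{G'}/b^G)\exp(|f(G) - f(G')|/b^{G'})$, so the per-pair log-ratio picks up an extra $\log(b^{G'}/b^G) \le \tfrac12 \log((d+1)/d) \le 1/(2d)$ on top of the usual sensitivity-over-scale term. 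The $\max(5,\cdot)$ guard in \cref{definition:noised-agreement} keeps $d \ge 5$ and so this ratio is at most $\sqrt{6/5}$, while the $\max(1,\cdot)$ guard handles very sparse vertices. These overheads sum to $O(1)$ per endpoint and are exactly what is absorbed by the gaps $2.9 \epsagr - 2\sqrt{2}\epsagr$ and $2.4\deltagr - 2\deltagr$ between the collapsed composition bound and the claimed privacy guarantee.
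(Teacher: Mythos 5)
Your decomposition by whether an edge touches $\{x,y\}$, the per-edge sensitivity bounds ($1+\beta$ for edges meeting exactly one of $x,y$, and $2+\beta$ for the edge $(x,y)$ itself), and the use of advanced composition separately at each of the two endpoints all mirror the paper's proof; the paper likewise isolates $(x,y)$ as its own $(0.5\epsagr,0)$ term rather than folding it into the sums. A minor structural difference is that the paper treats the $k \le d(x)$ edges at $x$ with a single worst-case $\epsilon_x$ (using $\max(d(x),d(v)) \ge d(x) \ge k$) so that the homogeneous \cref{thm:advanced_composition} applies verbatim, whereas you sum $\epsilon_{u,v}^2$ heterogeneously; that requires a heterogeneous advanced-composition statement which the paper does not provide, though it is a standard fact and would not be hard to supply.

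The substantive problem is your treatment of the input-dependent Laplace scale, the very point you identify as the crux. First, the density bound you quote is one-sided: for $b^G < b^{G'}$, the ratio $p_{G'}(r)/p_G(r)$ is \emph{unbounded} as $|r|\to\infty$, because the heavier-tailed density eventually dominates any constant multiple of the lighter one, so a per-release pure $\epsilon$-DP bound of the kind you write down does not exist in that direction; one would have to route this through a tail event and charge it to $\delta$. Second, the absorption claim fails quantitatively: the cumulative overhead $\sum_v \tfrac12\log\bigl(b^{G'}_{x,v}/b^{G}_{x,v}\bigr)$ you compute is an absolute constant of order $1$, independent of $\epsilon$, while the slack $2.9\epsagr - 2\sqrt{2}\epsagr \approx 0.07\epsagr$ scales linearly in $\epsilon$ and vanishes as $\epsilon \to 0$; a fixed constant cannot be charged to a vanishing slack, so the closing step of your argument is false for all small $\epsilon$. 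For what it is worth, the paper's own proof does not explicitly address the scale change either, so your instinct that something must be said here is reasonable, but the patch you sketch does not close the gap.
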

Note that \cref{alg:main} only computes the noised-agreement status of edges that are present in its input graph (which might be the smaller of the two), but without loss of generality we can think that it computes the status of all edges in $E(G) \cup E(G')$
(and possibly does not use this information for the extra edge).

In the proof of \cref{lem:privacy_of_agreement} we apply the Advanced Composition Theorem 
to the noised-agreement status of edges incident on $x$ or $y$
to argue that the Laplacian noise $\cE_{u,v}$ of magnitude roughly $\sqrt{\max(d(u),d(v))}$ is sufficient.
\begin{proof}
    We will show that the sequence 
    \begin{equation} \label{eq:sequence}
        (|N(u) \triangle N(v)| + \cE_{u,v} - \beta \cdot \max(d(u),d(v)) : (u,v) \in E(G) \cup E(G'))
    \end{equation}
    (which determines the noised-agreement status of these edges)
    has the desired privacy guarantee.

    Define $E_x$ to be those edges in $E(G) \cup E(G')$ that are adjacent to $x$, but are not $(x,y)$, and $E_y$ similarly. The sequence \eqref{eq:sequence} can be decomposed into \textbf{four parts} (with independent randomness):
    on $(E(G) \cup E(G')) \setminus (E_x \cup E_y)$,
    on $E_x$,
    on $E_y$,
    and
    on ${(x,y)}$.
    
    For the \textbf{first part}, the function
    \begin{equation*}
        (|N(u) \triangle N(v)| - \beta \cdot \max(d(u),d(v)) : (u,v) \in (E(G) \cup E(G')) \setminus (E_x \cup E_y))
    \end{equation*}
    has sensitivity $0$, as for these edges we have $\{u,v\} \cap \{x,y\} = \emptyset$.
    
    For the \textbf{second part}, we will show that the sequence
    \begin{equation} \label{eq:sequence2}
        (|N(x) \triangle N(v)| + \cE_{x,v} - \beta \cdot \max(d(x),d(v)) : (x,v) \in E_x)
    \end{equation}
    is $(1.2 \epsagr, 1.2 \deltagr)$-DP.
    To that end, we use the Advanced Composition Theorem (\cref{thm:advanced_composition}).
    Let $k = |E_x|$; we have $k \le d(x)$
    (note that $d$ is the degree function of the input graph, which might be the smaller of $G$, $G'$, but we also have $(x,y) \not \in E_x$).
    Thus the sequence~\eqref{eq:sequence2} can be seen as a composition of $k$ functions (each with independent randomness),
    each of which is a sum of a function $|N(x) \triangle N(v)| - \beta \cdot \max(d(x),d(v))$, which has sensitivity at most $1 + \beta$,
    and Laplace noise $\cE_{x,v}$, which has magnitude at least
    \[
        \max \left( 1, \frac{\gamma \sqrt{k \cdot \ln (1/\deltagr)}}{\epsagr} \right)
    \]
    (where we used $\max(d(x),d(v)) \ge d(x) \ge k$).
    Define $\epsilon_x$ to be the inverse of this value, i.e.,
    \[
    \epsilon_x = \min \left( 1, \frac{\epsagr}{\gamma \sqrt{k \cdot \ln(1/\deltagr)}} \right) \,.
    \]
    Thus by \cref{theorem:mechanism} each of the $k$ functions is $((1 + \beta) \epsilon_x,0)$-DP.
    By \cref{thm:advanced_composition}, sequence \eqref{eq:sequence2} is $((1 + \beta) \epsilon', (1+\beta) \deltagr)$-DP,
    where
    \begin{align*}
        \epsilon' &= \sqrt{2 k \ln(1/\deltagr)} \epsilon_x + k \epsilon_x (e^{\epsilon_x} - 1) \\
        &\le \frac{\sqrt{2} \cdot \epsagr}{\gamma} + 2 k \epsilon_x^2 \\
        &\le \frac{\sqrt{2} \cdot \epsagr}{\gamma} + 2 k \frac{\epsagr^2}{\gamma^2 k \ln(1/\deltagr)} \\
        &= \epsagr \,,
    \end{align*}
    where for the first inequality we used that $e^{\epsilon_x} - 1 \le 2 \epsilon_x$ for $\epsilon_x \in [0,1]$,
    and the last equality follows by substituting the value of $\gamma$ (see \cref{definition:noised-agreement}) and reducing;
    our setting of $\gamma$ is in fact obtained by solving the quadratic equation $\frac{\sqrt{2} \cdot \epsagr}{\gamma} + \frac{2 \epsagr^2}{\gamma^2 \ln(1/\deltagr)} = \epsagr$.
    
    Finally, we have $1 + \beta \le 1.2$, and thus the sequence~\eqref{eq:sequence2} is $(1.2 \epsagr, 1.2 \deltagr)$-DP.

    The \textbf{third part} is analogous to the second.
    
    For the \textbf{fourth part},
    the sensitivity of the function
    $|N(x) \triangle N(y)| - \beta \cdot \max(d(x),d(y))$
    is $2 + \beta$
    (when edge $(x,y)$ is added, $x$ and $y$ disappear from $N(x) \triangle N(y)$, and $\max(d(x),d(y))$ increases by $1$).
    The Laplace noise $\cE_{x,y}$ yields $(\epsilon^*,0)$-differential privacy with
    \[
    \epsilon^* \le \frac{(2 + \beta) \epsagr}{\gamma \sqrt{\max(5,d(x),d(y)) \cdot \ln(1/\deltagr)}} \le \frac{2.2 \epsagr}{\sqrt{2} \cdot \sqrt{5 \cdot \ln(10)}} < 0.5 \epsagr \,,
    \]
    where we used that $\beta \le 0.2$, $\gamma \ge \sqrt{2}$ and $\deltagr \le \frac{0.5}{9.6} < 0.1$.

    Finally, we have a composition of four mechanisms with respective guarantees
    $(0,0)$, $(1.2 \epsagr, 1.2 \deltagr)$ (twice), and $(0.5 \epsagr, 0)$,
    and we can conclude the proof of the lemma using \cref{lem:composition}.
\end{proof}
Next we turn our attention to \cref{line:light}.
\begin{lemma}
    \label{lem:privacy_of_light}
    Under fixed
    noised-agreement status of all edges in $E(G) \cup E(G')$,
    consider \cref{line:light} as a randomized algorithm that,
    given $G$ or $G'$,
    outputs the heavy/light status of all vertices.
    It is $(\epsilon/4,0)$-DP.
\end{lemma}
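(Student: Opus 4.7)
The plan is to realize \cref{line:light} as a Laplace-mechanism release followed by deterministic post-processing. First I would observe that a vertex $v$ is declared light exactly when $l(v) - \lambda d(v) + Y_v > 0$, so the heavy/light assignment is a deterministic function of the noised vector $\hat g := f(G) + Y$, where $f(G) := (l(v) - \lambda d(v))_{v \in V}$ and $Y = (Y_v)_v$ has iid $\Lap(8/\epsilon)$ coordinates. By post-processing, it then suffices to show that releasing $\hat g$ is $(\epsilon/4, 0)$-DP.

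Next I would bound $\|f(G) - f(G')\|_1$ for adjacent $G, G'$ differing on a single edge $(x,y)$, WLOG with $(x,y) \in E(G)$. For every $v \notin \{x,y\}$, the set of edges incident to $v$ is identical in both graphs and the fixed noised-agreement status of each such edge is unchanged, so $f(G)_v = f(G')_v$. For $v = x$, the degree $d(x)$ drops by $1$ going from $G$ to $G'$, while $l(x)$ drops by $0$ or $1$ depending on whether the fixed noised-agreement status of $(x,y)$ is ``agree'' or ``disagree''; a short case analysis gives $|f(G)_x - f(G')_x| \in \{\lambda,\ 1 - \lambda\}$, which is at most $1-\lambda$ since $\lambda \le 0.2$. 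The same bound applies at $y$, so $\|f(G) - f(G')\|_1 \le 2(1-\lambda) \le 2$. \cref{theorem:mechanism} then yields that releasing $\hat g$ with iid $\Lap(8/\epsilon)$ noise is $(\epsilon/4, 0)$-DP, and taking signs to decide heavy/light is post-processing, which preserves this guarantee.

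The main (and really only) subtlety is noticing the cancellation between the change in $l(x)$ and the deterministic change in $\lambda d(x)$: a naive coordinate-wise bound would give $1 + \lambda$ per changed coordinate and hence $\ell_1$-sensitivity $2(1+\lambda) > 2$, which would no longer match the noise scale $8/\epsilon$ fixed by the algorithm. Aligning the signs of the two changes correctly is what makes the stated scale sufficient.
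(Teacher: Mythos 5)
Your proposal is correct and takes essentially the same approach as the paper: both compute the $\ell_1$-sensitivity of the vector $\bigl(l(v) - \lambda d(v)\bigr)_{v \in V}$ under fixed noised-agreement status, bound it by $2\max(\lambda, 1-\lambda) \le 2$, and invoke the Laplace mechanism (\cref{theorem:mechanism}) followed by post-processing. Your explicit case analysis of the two sign alignments (agree vs.\ disagree) at $x$ and $y$, and the remark on why the naive $1+\lambda$ bound would be too crude, merely spell out in more detail the cancellation that the paper's one-line sensitivity claim relies on.
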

\begin{proof}
    Under fixed
    noised-agreement status of all edges in $E(G) \cup E(G')$,
    the function $(l(v) - \lambda \cdot d(v) : v \in V)$ has sensitivity at most $2 \max(\lambda, 1-\lambda) \le 2$
    (when edge $(x,y)$ is added, the degrees $d(x)$ and $d(y)$ increase by $1$, and $l(x)$, $l(y)$ possibly increase by $1$ if $x$ and $y$ are not in noised agreement).
    Therefore the sequence $(\hl(v) - \lambda \cdot d(v) : v \in V)$ is $(\epsilon/4,0)$-DP by \cref{theorem:mechanism}, and it determines the heavy/light status of all vertices.
\end{proof}

Now we analyze the last line (\cref{line:component}).
\begin{restatable}{theorem}{laststep}
\label{thm:last_step}
By a \emph{state} let us denote the noised-agreement status of all edges in $E(G) \cup E(G')$ and heavy/light status of all vertices.
Under a fixed state,
consider \cref{line:component} as a \emph{deterministic} algorithm that,
given $G$ or $G'$,
outputs the final clustering.
Then
this clustering does not depend on whether the input graph is $G$ or $G'$,
except on a set of states that arises with probability at most $\frac34 \delta$ (when steps before \cref{line:component} are executed on either of $G$ or $G'$).
\end{restatable}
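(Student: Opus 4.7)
The plan is to fix adjacent inputs $G, G'$ differing by a single edge $(x,y)$, WLOG $(x,y) \in E(G) \setminus E(G')$, and observe that, under any fixed state, \cref{line:component} is a deterministic function of the sparsified graph $\hG$ together with the heavy/light status. Since $\hG(G)$ and $\hG(G')$ can differ on at most the edge $(x,y)$ (everywhere else the two filters of \cref{line:noised_agreement} and \cref{line:light-to-singletons} act identically), and the clusterings coincide whenever $\hG(G) = \hG(G')$ or whenever the discrepant edge $(x,y)$ fails to be a bridge of $\hG(G)$, the set of ``bad'' states reduces to
\[
 B \;=\; \{\mathrm{state} : (x,y) \in \hG(G) \text{ and } x,y \text{ are disconnected in } \hG(G)\setminus\{(x,y)\}\}.
\]
Membership of $(x,y)$ in $\hG(G)$ in particular forces $x,y \in H$, forces $(x,y)$ to be in noised agreement, and forbids both endpoints from being light. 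The goal is then to show $\prob{B} \le \tfrac{3}{4}\delta$, where the probability is taken over the state-generation randomness $(Z_v,\cE_{u,v},Y_v)$.

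The plan to bound $\prob{B}$ is to argue that, whenever $(x,y)$ survives into $\hG(G)$, with high probability there is a two-hop alternate path $x$--$z$--$y$ through a common neighbor $z \in N(x) \cap N(y)$ satisfying $(x,z),(y,z) \in \hG(G)$. First I would use the noised-agreement inequality on $(x,y)$ together with Laplace concentration (\cref{lem:folk}) for $\cE_{x,y}$ to deduce $|N(x) \cap N(y)| = \Omega(\max(d(x),d(y)))$; that $\max(d(x),d(y))$ is itself large follows from $x,y \in H$ combined with the setting of $T_0$ and the concentration of $Z_x, Z_y$. Second, I would adapt the structural argument of \cite{cohen2021correlation} to the noised setting to show that a $(1 - O(\beta+\lambda))$-fraction of these common neighbors are \emph{structurally good}: their true symmetric differences $|N(x) \triangle N(z)|$ and $|N(y) \triangle N(z)|$ are small relative to the relevant max-degree, and their true lightness $l(z)/d(z)$ stays strictly below the heavy/light threshold $\lambda$. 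Third, for each structurally good $z$, the events that $(x,z)$ is in noised agreement, $(y,z)$ is in noised agreement, and $z$ is heavy each fail with inverse-polynomially small probability, thanks to the scaling $\cE_{u,v} \sim \Lap(\Omega(\sqrt{\max(d(u),d(v))\log(1/\deltagr)}/\epsagr))$ in \cref{definition:noised-agreement} and $Y_z \sim \Lap(8/\epsilon)$. A union bound over the $\Omega(\max(d(x),d(y)))$ candidate intermediaries, added to the earlier concentration failures, can be calibrated to at most $\tfrac{3}{4}\delta$.

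The main obstacle will be the second step: controlling the structurally-good set in the noised regime. In the non-private analysis of \cite{cohen2021correlation} the agreement and lightness inequalities hold exactly, whereas here they hold only up to the independent Laplace slacks $\cE_{x,y}, \cE_{x,z}, \cE_{y,z}, Y_z$ of different magnitudes. I expect the calculation to proceed by first conditioning on a typical realization of $\cE_{x,y}$ so that $x$ and $y$ enjoy quantitatively strong \emph{true} agreement, and then applying a symmetric-difference triangle inequality to argue that most $z \in N(x) \cap N(y)$ inherit true agreement with both $x$ and $y$ up to a slack of order $\sqrt{\max(d(x),d(y))}/\epsagr$, which is then absorbed by $\cE_{x,z}$ and $\cE_{y,z}$ at the price of a tail term of order $\deltagr$ per intermediary. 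Once this combinatorial core is in place, the remainder is a clean union bound over intermediaries together with a calibration of $T_1$, $\beta$ and $\lambda$ so that the accumulated failure masses from the three phases sum to the required $\tfrac{3}{4}\delta$.
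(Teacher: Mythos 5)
Your high-level reduction — fixing adjacent inputs differing on $(x,y)$, noting that under a fixed state the sparsified graphs $\hG(G)$ and $\hG(G')$ can differ only on the edge $(x,y)$, and restricting attention to states where $(x,y)$ survives — is sound and matches the paper. Your heavy-heavy analysis (bound $|N(x)\cap N(y)|$, use concentration and triangle-inequality arguments, deduce a surviving common neighbor) is also broadly in the spirit of the paper's \cref{lem:heavy_heavy}, though the paper does it more crisply: it bounds $|\tN(x)\cap \tN(y)|$ by a single counting inequality, $|\tN(x)\cap\tN(y)| \ge |N(x)\cap N(y)| - l(x) - l(y)$, rather than reasoning about which individual common neighbors are ``structurally good.''

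The real gap is in the heavy–light case. Say $x$ is heavy and $y$ is light. Your plan is to exhibit a two-hop alternate path $x$--$z$--$y$ through a common neighbor $z$ that survives into $\hG$, and because $y$ is light you correctly require $z$ to be heavy. But two things go wrong. First, the structural claim in your step two — that a constant fraction of $N(x)\cap N(y)$ has true lightness $l(z)/d(z)$ below $\lambda$ — does not follow from the agreement structure around $x$ and $y$: the lightness of $z$ is governed by how $z$ relates to \emph{its own} neighborhood, and since $y$ is light (so $y$ itself is disagreeing with a $\lambda$-fraction of its neighbors), the common neighbors of $x$ and $y$ can very well be TV-light en masse. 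Second, your bad set $B$ (the bridge condition) includes states where $y$'s only surviving neighbor in $\hG$ is $x$; such states arise naturally when $y$ is light, and in them $(x,y)$ is indeed a bridge yet the output is unchanged, because $y$ is returned as a singleton anyway. Bounding $\prob{B}$ therefore requires ruling out scenarios that the output criterion does not care about, and the two-hop argument cannot do it.

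The paper resolves both issues by changing the target of the argument. Rather than proving that $(x,y)$ is not a bridge, it proves that if $y$ is light then for every \emph{heavy} $z\in \tN(y)\setminus\{x,y\}$, the vertices $x$ and $z$ already share a common neighbor in $\tG$ (see \cref{lem:heavy_light}); the case where $y$ has no such $z$ is handled separately because then $y$ is isolated from the heavy part and the clustering is unchanged. This is a \emph{three-hop} path $x$--$w$--$z$--$y$, which sidesteps the non-transitivity of noised agreement: you only ever need each consecutive pair to agree, and the bound on $|N(x)\triangle N(z)|$ comes from the triangle inequality through $y$. Moreover, the paper does not need a deterministic structural claim that $z$ is not TV-light; instead it conditions on the probabilistic bad event ``$z$ is heavy but TV-light'' (bad event 6a) not happening, with a per-$z$ failure probability of $\delta/(8d(y))$ that union-bounds cleanly over $z\in N(y)$. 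Your write-up would need to (a) refine $B$ to track the heavy-component structure rather than raw connectivity of $x$ and $y$, and (b) replace the two-hop path by a three-hop path (or an equivalent counting argument on $|\tN(x)\cap\tN(z)|$), before the calibration in your final step can go through.
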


\cref{sec:proof_of_last_step} below is devoted to the proof of \cref{thm:last_step}.
Here, to get a flavor of the arguments, let us showcase what happens in the case when $x$ and $y$ are both heavy.
If they are not in noised agreement, we are done;
otherwise, the edge $(x,y)$ impacts the final solution
only if $x$ and $y$ are not otherwise connected in the sparsified graph $\tG$.
However, they are in fact likely to have many common neighbors in $\tG$,
as they are in noised agreement and both heavy.
Indeed, if all noise were zero, this would mean that $N(x) \cap N(y)$ is a large fraction of $\max(d(x),d(y))$
and that $x$ and $y$ do not lose many neighbors in \cref{line:noised_agreement}.
We show that with probability $1 - O(\delta)$,
all relevant noise is below a small fraction of $\max(d(x),d(y))$,
and the same argument still applies.
To get this, it is crucial that the agreement noise $\cE_{u,v}$
is of magnitude only roughly $\sqrt{\max(d(u),d(v)}$
(as opposed to e.g. $\max(d(u),d(v)$).

Once we have \cref{thm:last_step}, we can conclude:
\begin{theorem}
    \label{thm:privacy_main}
    \cref{alg:main} is $(\epsilon,\delta)$-DP.
\end{theorem}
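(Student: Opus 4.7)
The plan is to assemble the four preceding results (\cref{lem:privacy_of_noised_degrees}, \cref{lem:privacy_of_agreement}, \cref{lem:privacy_of_light}, and \cref{thm:last_step}) via the composition lemmas, tracking how the privacy budgets $(\epsilon,\delta)$ split across lines 1--4. Concretely, I would view \cref{alg:main} as an adaptive composition of four mechanisms: $M_1$ produces the noised degrees and the set $H$; $M_2$ produces the noised-agreement status of all edges (under fixed $H$); $M_3$ produces the heavy/light status of all vertices (under fixed state); and $M_4$ is the deterministic clustering step of \cref{line:component}.

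First I would invoke \cref{lem:composition} to chain $M_1$, $M_2$, $M_3$. Plugging in the values from \cref{definition:noised-agreement}, we get $2.9\epsagr = 2.9 \cdot \epsilon/5.8 = \epsilon/2$ and $2.4\deltagr = 2.4 \cdot \delta/9.6 = \delta/4$, so the privacy guarantees of the three mechanisms are $(\epsilon/4,0)$, $(\epsilon/2, \delta/4)$ and $(\epsilon/4,0)$ respectively. Their composition is therefore $\left(\frac{\epsilon}{4} + \frac{\epsilon}{2} + \frac{\epsilon}{4},\; 0 + \frac{\delta}{4} + 0\right) = (\epsilon, \delta/4)$-DP.

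Next I would fold $M_4$ into the picture using \cref{lem:low_prob_bad}. The ``state'' produced by $M_1 \circ M_2 \circ M_3$ is exactly the object on which \cref{thm:last_step} bases its reasoning: the set of noised agreements plus the heavy/light labels. That theorem provides precisely the hypothesis needed to apply \cref{lem:low_prob_bad}: except on a set $B$ of states of probability at most $\delta^* = \tfrac{3}{4}\delta$ (on either input), the deterministic map $M_4$ gives the same clustering for $G$ and $G'$. Taking $M_1$ of \cref{lem:low_prob_bad} to be the composed mechanism $M_1 \circ M_2 \circ M_3$ (which is $(\epsilon, \delta/4)$-DP) and $M_2$ there to be $M_4$, we conclude that the full algorithm is $(\epsilon,\, \delta/4 + 3\delta/4)$-DP, i.e., $(\epsilon,\delta)$-DP, which is the desired statement.

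Since all the heavy lifting has been carried out in the earlier lemmas and in \cref{thm:last_step}, the only real ``obstacle'' in this final theorem is bookkeeping: checking that the arithmetic from the parameter choices $\epsagr = \epsilon/5.8$, $\deltagr = \delta/9.6$ and the bound $\delta^* \le \tfrac{3}{4}\delta$ from \cref{thm:last_step} combine to exactly $(\epsilon,\delta)$. No further probabilistic argument is needed beyond invoking \cref{lem:composition} once and \cref{lem:low_prob_bad} once.
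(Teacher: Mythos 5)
Your proposal is correct and follows essentially the same route as the paper: chain Lines 1--3 via \cref{lem:composition} to get $(\epsilon/4,0)+(2.9\epsagr,2.4\deltagr)+(\epsilon/4,0)=(\epsilon,\delta/4)$, then fold in the deterministic last step via \cref{lem:low_prob_bad} and \cref{thm:last_step} at an additional cost of $(0,\tfrac{3}{4}\delta)$. The arithmetic with $\epsagr=\epsilon/5.8$ and $\deltagr=\delta/9.6$ is carried out correctly, so the conclusion $(\epsilon,\delta)$-DP is sound.
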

\begin{proof}
    We repeatedly invoke \cref{lem:composition}
    to argue that the part of \cref{alg:main} consisting of Lines~\ref{line:noised_degree}--\ref{line:light}
    (that outputs noised-agreement and heavy/light statuses)
    is $(\epsilon/4,0) + (2.9 \epsagr, 2.4 \deltagr) + (\epsilon/4,0) = (\epsilon, \delta/4)$-DP
    by \cref{lem:privacy_of_noised_degrees,lem:privacy_of_agreement,lem:privacy_of_light}
    (recall the setting of $\epsagr$ and $\deltagr$ in \cref{definition:noised-agreement}).
    To conclude the proof,
    we argue about the last step using \cref{lem:low_prob_bad} and
    \cref{thm:last_step},
    which incurs a privacy loss of $(0,\frac34 \delta)$.
\end{proof}

\subsection{Proof of \cref{thm:last_step}} \label{sec:proof_of_last_step}

Let us analyze how adding a single edge $(x,y)$ can influence the output of \cref{line:component}.
Namely, we will show that it cannot, unless at least one of certain bad events happens.
We will list a collection of these bad events,
and then we will upper-bound their probability.

First, \textbf{if $x$ and $y$ are not in noised agreement}, then $(x,y)$ was removed in \cref{line:noised_agreement} and the two outputs will be the same.
In the remainder we assume that $x$ and $y$ are in noised agreement.
Similarly, we can assume that $x, y \in H$ (otherwise they cannot be in noised agreement).
    
\textbf{If $x$ and $y$ are both light}, then similarly $(x,y)$ will be removed in \cref{line:component}
    and the two outputs will be the same.
    
\textbf{If $x$ and $y$ are both heavy},
    then $(x,y)$ will survive in $\tG$.
    It will affect the output if and only if it connects two components that would otherwise not be connected.
    However, intuitively this is unlikely, because $x$ and $y$ are heavy and in noised agreement and thus they should have common neighbors in $\tG$.
    Below (\cref{lem:heavy_heavy}) we will show that if no bad events (also defined below) happen,
    then $x$ and $y$ indeed have common neighbors in $\tG$.
    
\textbf{If $x$ is heavy and $y$ is light},
    then similarly
    $(x,y)$ will survive in $\tG$,
    and it will affect the output if and only if it connects two components that would otherwise not be connected and that each contain a heavy vertex.
    More concretely, we claim that
    if the outputs are not equal,
    then $y$ must have a heavy neighbor $z \ne x$ (in $\tG$) that has no common neighbors with $x$ (except possibly $y$).
    For otherwise:
    \begin{itemize}
        \item if $y$ has a heavy neighbor $z \ne x$ that does have a common neighbor with $x$ (that is not $y$), then $x$ and $y$ are in the same component in $\tG$ regardless of the presence of $(x,y)$,
        \item if $y$ has no heavy neighbor except $x$,
        then (as light-light edges are removed) $y$ only has at most $x$ as a neighbor and therefore $(x,y)$ does not influence the output.
    \end{itemize}
    Let us call such a neighbor $z$ a \emph{bad neighbor}.
    Below (\cref{lem:heavy_light}) we will show that if no bad events (also defined below) happen,
    then $y$ has no bad neighbors.

Finally, \textbf{if $x$ is light and $y$ is heavy}:
analogous to the previous point.
We will require that $x$ have no bad neighbor,
i.e., neighbor $z \ne y$ that has no common neighbors with $y$.

\paragraph{Bad events.}
We start with two helpful definitions.

\begin{definition}
    We say that a vertex $v$ is \emph{\tvlight} (Truly Very light) if $l(v) \ge (\lambda + \lambda') d(v)$,
    i.e., $v$ lost a $(\lambda+\lambda')$-fraction of its neighbors in \cref{line:noised_agreement}.
\end{definition}

\begin{definition}
    We say that two vertices $u$, $v$ \tvdis (Truly Very disagree) if $|N(u) \triangle N(v)| \ge (\beta + \beta') \max(d(u), d(v))$.
\end{definition}

Recall from \cref{sec:algorithm} that we can set $\lambda' = \beta' = 0.1$.

Our bad events are the following:
\begin{enumerate}
    \item $x$ and $y$ \tvdis but are in noised agreement,
    \item $x$ is \tvlight but is heavy,
    \item the same for $y$,
    \item $x \in H$ but $d(x) < T_1$,
    \item the same for $y$,
    \item for each $z \in N(y) \setminus \{x,y\}$:
        \begin{enumerate}
            \item[6a.] $y$ and $z$ do not \tvdis, and $z$ is \tvlight but is heavy, (or)
            \item[6b.] $y$ and $z$ \tvdis, but are in noised agreement.
        \end{enumerate}
    \item similarly for each $z \in N(x) \setminus \{x,y\}$.
\end{enumerate}
Recall that we can assume that $x,y \in H$, so if bad event 4 does not happen, we have
\begin{equation}
    \label{eq:high_degree}
    d(x) \ge T_1
\end{equation}
and similarly for $y$ and bad event 5.

\paragraph{Heavy--heavy case.}
Let us denote the neighbors of a vertex $v$ in $\tG$ by $\tN(v)$;
also here we adopt the convention that $v \in \tN(v)$.
\begin{lemma}
    \label{lem:heavy_heavy}
    If $x$ and $y$ are heavy and bad events 1--5 do not happen,
    then $|\tN(x) \cap \tN(y)| \ge 3$,
    i.e., $x$ and $y$ have another common neighbor in $\tG$.
\end{lemma}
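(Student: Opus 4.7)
}
The plan is to show that $x$ and $y$ have many common neighbors in $\tG$ by carefully counting how many neighbors they lose in \cref{line:noised_agreement} and how large $N(x) \cap N(y)$ is to begin with, then invoking $d(x), d(y) \ge T_1$ to turn a large-enough constant fraction of $\max(d(x), d(y))$ into the number~$3$.

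First I would WLOG assume $d(y) \ge d(x)$. Let $N_2(v)$ denote the neighbors of $v$ remaining after \cref{line:noised_agreement} (together with $v$ itself under our self-loop convention). Since bad event $1$ does not happen, $x$ and $y$ do not \tvdis, which by inclusion--exclusion gives
\[
|N(x) \cap N(y)| \ge d(y) - |N(x) \triangle N(y)| \ge (1 - (\beta + \beta')) d(y).
\]
Since bad events $2$ and $3$ do not happen, $x$ and $y$ are not \tvlight, which yields $l(v) < (\lambda + \lambda') d(v)$ and hence $|N(v) \setminus N_2(v)| < (\lambda + \lambda') d(v)$ for $v \in \{x,y\}$. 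Combining these bounds with a union-bound argument,
\[
|N_2(x) \cap N_2(y)| \ge |N(x) \cap N(y)| - l(x) - l(y) \ge \bigl(1 - (\beta + \beta') - 2(\lambda + \lambda')\bigr) d(y).
\]

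Next I would argue that $\tN(v) = N_2(v)$ for $v \in \{x,y\}$: since both $x$ and $y$ are heavy, \cref{line:component} only discards light--light edges, and no edge incident to $x$ or $y$ has both endpoints light. Hence common neighbors in $\tG$ coincide with common neighbors in $N_2$. Using bad events $4,5$ not happening we have $d(y) \ge T_1$, so
\[
|\tN(x) \cap \tN(y)| \ge \bigl(1 - (\beta + \beta') - 2(\lambda + \lambda')\bigr) T_1.
\]
With $\beta, \lambda \le 0.2$ and $\beta' = \lambda' = 0.1$ the constant factor is a positive absolute constant, so the bound $\ge 3$ holds provided $T_1$ exceeds some absolute constant; this contributes one of the constraints of the form~\eqref{eq:T1_large_enough_1}--\eqref{eq:T1_large_enough_8} required on $T_1$.

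The main conceptual step is the observation that heaviness of both endpoints guarantees no further losses in \cref{line:component}, which lets us reduce the question about connectivity in $\tG$ to purely combinatorial accounting of $N_2$. The arithmetic itself is routine; the mildly delicate point is making sure the chosen constants $\beta, \beta', \lambda, \lambda'$ leave enough slack so that the coefficient $1 - (\beta + \beta') - 2(\lambda + \lambda')$ is bounded away from zero, and that $T_1$ is taken large enough to absorb this coefficient and yield $\ge 3$.
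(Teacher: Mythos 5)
Your proof is correct and follows essentially the same route as the paper: bad event~1 bounds $|N(x)\cap N(y)|$ from below, bad events~2--3 bound $l(x),l(y)$ from above, heaviness of both endpoints guarantees $\tN(v)=N_2(v)$ for $v\in\{x,y\}$ (i.e., no further loss in \cref{line:component}), and bad events~4--5 supply $\max(d(x),d(y))\ge T_1$ to convert a constant fraction of the degree into the absolute count~$3$. The only difference is cosmetic arithmetic (you work with $\max(d(x),d(y))$ while the paper works with $d(x)+d(y)$ via the symmetric-difference identity $|A\cap B|=(|A|+|B|-|A\triangle B|)/2$), and you could be slightly more explicit that deducing ``$x,y$ do not \tvdis'' from bad event~1 and ``$d(x),d(y)\ge T_1$'' from bad events~4--5 relies on the standing assumptions (established before the lemma in \cref{sec:proof_of_last_step}) that $x,y$ are in noised agreement and hence $x,y\in H$ -- the paper's proof recalls this explicitly.
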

\begin{proof}
    Recall that we can assume that $x$ and $y$ are in noised agreement (otherwise the two outputs are equal).
    Since bad event 1 does not happen, $x$ and $y$ do not \tvdis, i.e.,
    \[|N(x) \triangle N(y)| < (\beta + \beta') \max(d(x), d(y)) \,. \]
    From this we get $\min(d(x),d(y)) \ge (1 - \beta - \beta') \max(d(x),d(y))$
    and thus $d(x) + d(y) = \min(d(x),d(y)) + \max(d(x),d(y)) \ge (2 - \beta - \beta') \max(d(x),d(y))$ and so
    \[
    |N(x) \triangle N(y)| < \frac{\beta + \beta'}{2 - \beta - \beta'} (d(x) + d(y)) \,.
    \]
    Since $x$ is heavy but bad event 2 does not happen,
    $x$ is not \tvlight,
    i.e.,
    $l(x) < (\lambda + \lambda') d(x)$.
    Moreover, $l(x) = |N(x) \setminus \tN(x)|$
    because $x$ is heavy (so there are no light-light edges incident to it).
    We use bad event 3 similarly for $y$.

    We will use the following property of any two sets $A$, $B$:
    \[
    |A \cap B| = \frac{|A| + |B| - |A \triangle B|}{2} \,.
    \]
    Taking these together, we have
    \begin{align*}
        |\tN(x) \cap \tN(y)| &\ge |N(x) \cap N(y)| - |N(x) \setminus \tN(x)| - |N(y) \setminus \tN(y)| \\
        &= \frac{d(x) + d(y) - |N(x) \triangle N(y)|}{2} - l(x) - l(y) \\
        &\ge \frac{1 - \beta - \beta'}{2 - \beta - \beta'} (d(x) + d(y)) - (\lambda + \lambda')(d(x) + d(y)) \\
        &= \left( \frac{1 - \beta - \beta'}{2 - \beta - \beta'} - \lambda - \lambda' \right) (d(x) + d(y)) \\
        &\ge 3 \,,
    \end{align*}
    where the last inequality follows
    since \[ \frac{1 - \beta - \beta'}{2 - \beta - \beta'} - \lambda - \lambda' \ge \frac{1 - 0.2 - 0.1}{2} - 0.2 - 0.1 = 0.05 > 0 \]
    and
    as, by \eqref{eq:high_degree}, we have $d(x) + d(y) \ge 2T_1$,
    and $T_1$ is large enough:
    \begin{equation} \label{eq:T1_large_enough_1}
        T_1 \ge \frac{1.5}{\frac{1 - \beta - \beta'}{2 - \beta - \beta'} - \lambda - \lambda'} \,.
    \end{equation}
\end{proof}

\paragraph{Heavy--light case.}
Without loss of generality assume that $x$ is heavy and $y$ is light.
Recall that a bad neighbor of $y$ is a vertex $z \in \tN(y) \setminus \{x,y\}$ that is heavy and has no common neighbors with $x$ (except possibly $y$).
\begin{lemma}
    \label{lem:heavy_light}
    If $x$ is heavy, $y$ is light, and
    bad events do not happen,
    then $y$ has no bad neighbors.
\end{lemma}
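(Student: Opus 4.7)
The plan is to mirror the proof of \cref{lem:heavy_heavy}, replacing the pair $\{x,y\}$ with the chain $x\!-\!y\!-\!z$: for each candidate heavy $z \in \tN(y) \setminus \{x,y\}$, I aim to exhibit a vertex $w \in \tN(x) \cap \tN(z)$ with $w \notin \{x,y,z\}$, which certifies that $z$ is not a bad neighbor.

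From the absence of bad events I would extract the following structural facts. First, since $(y,z) \in \tG$, $y$ and $z$ are in noised agreement, so by the contrapositive of bad event $6b$ they do not \tvdis, giving $|N(y) \triangle N(z)| < (\beta+\beta') \max(d(y), d(z))$. Second, because $z$ is heavy and bad event $6a$ does not occur (combined with the previous fact), $z$ is not \tvlight, so $l(z) < (\lambda+\lambda') d(z)$. Third, bad events $1$ and $2$ supply the analogous statements for the pair $(x,y)$ and the vertex $x$ itself. The two agreement inequalities yield $\min(d(u), d(v)) \ge (1-\beta-\beta') \max(d(u), d(v))$ for $(u,v) \in \{(x,y),(y,z)\}$, so all three degrees are pairwise within a factor $c := 1-\beta-\beta'$; in particular $d(z) \ge c \cdot d(y) \ge c\, T_1$ via bad events $4$ and $5$.

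The core calculation then uses the triangle inequality for symmetric differences,
\[
    |N(x) \triangle N(z)| \le |N(x) \triangle N(y)| + |N(y) \triangle N(z)| < (\beta+\beta') \left( \max(d(x),d(y)) + \max(d(y),d(z)) \right),
\]
to lower-bound $|N(x) \cap N(z)| = \tfrac12(d(x)+d(z)-|N(x) \triangle N(z)|)$ by a positive constant (depending on $\beta, \beta'$) times $d(y)$. Since $x$ and $z$ are heavy, no edge incident to either of them is a light-light edge, so $|N(x) \setminus \tN(x)| = l(x)$ and $|N(z) \setminus \tN(z)| = l(z)$. Consequently
\[
    |\tN(x) \cap \tN(z) \setminus \{x,y,z\}| \ge |N(x) \cap N(z)| - l(x) - l(z) - 3,
\]
which by the previous bounds and degree comparability is at least (positive constant) $\cdot\, d(y) - 3$. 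Imposing one further explicit lower bound on $T_1$, analogous to \eqref{eq:T1_large_enough_1}, makes this quantity strictly positive and produces the desired $w$.

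The main obstacle I anticipate is keeping the bookkeeping consistent between $G$ and $G'$: the neighborhoods, degrees, symmetric differences, and $l(\cdot)$ values can shift by $O(1)$ at the endpoints of the differing edge $(x,y)$. Since every inequality above has slack proportional to $d(y) \ge T_1$, these $O(1)$ discrepancies can be absorbed into the additive constants, but one must verify that the common neighbor $w$ extracted from the counting argument lies in $\tG$ under both executions.
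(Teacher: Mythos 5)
Your proposal is correct and takes essentially the same route as the paper's proof: extract near-agreement and non-TV-lightness for $x$, $y$, $z$ from the absence of bad events 1, 2, 6a, 6b, apply the triangle inequality to bound $|N(x) \triangle N(z)|$, subtract $l(x)$ and $l(z)$ to pass from $N(\cdot)$ to $\tN(\cdot)$, and choose $T_1$ large enough (cf.~\eqref{eq:T1_large_enough_2}) so that $\tN(x) \cap \tN(z)$ contains a vertex outside $\{x,y,z\}$. Your final caveat about $O(1)$ discrepancies between $G$ and $G'$ is sound but is handled by the surrounding framework (\cref{lem:low_prob_bad} fixes a single input), not within this lemma, and the paper accordingly does not address it here either.
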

\begin{proof}
    Suppose
    that a vertex $z \in \tN(y) \setminus \{x,y\}$ is heavy;
    we will show that $z$ must have common neighbors with $x$.
    
    Since $z \in \tN(y)$, we have that $y$ and $z$ must be in noised agreement (otherwise $(y,z)$ would have been removed).
    Since bad event 6b does not happen, $y$ and $z$ do not \tvdis,
    i.e.,
    \[ |N(y) \triangle N(z)| < (\beta + \beta') \max(d(y), d(z)) \]
    which also implies that $d(z) \ge (1 - \beta - \beta') d(y)$.
    
    Since bad event 6a does not happen, and $y$ and $z$ do not \tvdis, and $z$ is heavy, thus $z$ is not \tvlight,
    i.e.,
    $l(z) < (\lambda + \lambda')d(z)$.
    
    As in the proof of \cref{lem:heavy_heavy},
    since bad events 1 and 2 do not happen, we have
    \[|N(x) \triangle N(y)| < (\beta + \beta') \max(d(x), d(y)) \,, \]
    which also implies that $d(x) \ge (1 - \beta - \beta') d(y)$
    and $l(x) < (\lambda + \lambda') d(x)$.
    Similarly as in that proof, we write
    \begin{align*}
        |\tN(x) \cap \tN(z)| &\ge |N(x) \cap N(z)| - |N(x) \setminus \tN(x)| - |N(z) \setminus \tN(z)| \\
        &= \frac{d(x) + d(z) - |N(x) \triangle N(z)|}{2} - l(x) - l(z) \\
        &\ge \frac{d(x) + d(z) - |N(x) \triangle N(y)| - |N(y) \triangle N(z)|}{2} - l(x) - l(z) \\
        &\ge \frac{d(x) + d(z) - (\beta + \beta') (d(x) + d(z))}{2} - (\lambda + \lambda') (d(x) + d(z)) \\
        &= \left( 1 - \beta - \beta' - 2(\lambda + \lambda') \right) \frac{d(x) + d(z)}{2} \\
        &\ge \left( 1 - \beta - \beta' - 2(\lambda + \lambda') \right) \frac{d(x) + (1 - \beta - \beta')d(y)}{2} \\
        &\ge \left( 1 - \beta - \beta' - 2(\lambda + \lambda') \right) \frac{2 - \beta - \beta'}{2} T_1 \\
        &\ge 2 \,,
    \end{align*}
    where the second-last inequality follows as, by \eqref{eq:high_degree}, we have $d(x), d(y) \ge T_1$,
    and the last inequality follows because
    \[ 1 - \beta - \beta' - 2(\lambda + \lambda') \ge 1 - 0.2 - 0.1 - 2\cdot(0.2 + 0.1) \ge 0.1 > 0 \]
    and $T_1$ is large enough:
    \begin{equation} \label{eq:T1_large_enough_2}
        T_1 \ge \frac{2 \cdot 2}{\left( 1 - \beta - \beta' - 2(\lambda + \lambda') \right) (2 - \beta - \beta')} \,.
    \end{equation}
\end{proof}

\paragraph{Bounding the probability of bad events.}
Roughly, our strategy is to union-bound over all the bad events.

\begin{fact}
    \label{fact:degree_enough}
    Let $A, c, d \ge 0$.
    If $d \ge \frac{\ln\left( \frac{c/2}{\delta} \right)}{A}$,
    then $\frac12 \exp(-A \cdot d) \le \frac{\delta}{c}$.
\end{fact}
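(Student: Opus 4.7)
The claim reduces to a one-line algebraic manipulation, so my plan is simply to chase the hypothesis $d \ge \ln(c/(2\delta))/A$ through a sequence of monotone transformations until I reach the conclusion $\tfrac12 \exp(-Ad) \le \delta/c$. The only care required is to apply each transformation in the correct direction, given that the hypothesis is an inequality and $\exp$ is monotone increasing while negation flips the sign.

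Concretely, assuming $A > 0$ (the case $A = 0$ can be excluded as degenerate, since then the right-hand side of the hypothesis is either $+\infty$ or undefined), I would multiply the hypothesis through by $A$ to obtain $A \cdot d \ge \ln(c/(2\delta))$. Negating flips the inequality, giving $-A \cdot d \le -\ln(c/(2\delta)) = \ln(2\delta/c)$. Exponentiating both sides yields $\exp(-A \cdot d) \le 2\delta/c$, and dividing through by $2$ gives exactly $\tfrac12 \exp(-A \cdot d) \le \delta/c$, as claimed.

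There is no real obstacle here; the fact is a convenience lemma that will presumably be combined with \cref{lem:folk} --- which bounds the tail of a Laplace random variable by a quantity of the form $\tfrac12 \exp(-z/b)$ --- in order to produce clean upper bounds on the probabilities of the seven bad events enumerated earlier in \cref{sec:proof_of_last_step}. Its purpose is to translate a sufficient magnitude of the degree threshold $T_1$ (or of the effective argument $A \cdot d$ appearing in the Laplace tail) into a target additive failure probability of the form $\delta/c$, which is precisely the regime needed to union-bound the bad events against the $\tfrac34 \delta$ budget of \cref{thm:last_step}.
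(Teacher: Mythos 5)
Your proof is correct and is the same straightforward algebraic manipulation the paper has in mind; the paper's own proof of this fact consists solely of the remark ``A straightforward calculation.'' Nothing further is needed.
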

\begin{proof}
    A straightforward calculation.
\end{proof}

\begin{claim} \label{claim:bad_event_1}
    The probability of bad event 1,
    conditioned on bad events 4 and 5 not happening,
    is at most $\delta/8$.
\end{claim}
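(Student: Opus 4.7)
The plan is to reduce bad event 1 to a tail bound on the Laplace noise $\cE_{x,y}$. First, I would observe that if $x$ and $y$ \tvdis then $|N(x)\triangle N(y)| \ge (\beta+\beta')\max(d(x),d(y))$, while being in noised agreement requires $|N(x)\triangle N(y)| + \cE_{x,y} < \beta\max(d(x),d(y))$. Subtracting the two, bad event 1 can occur only if $\cE_{x,y} < -\beta'\max(d(x),d(y))$, which in particular implies $|\cE_{x,y}| > \beta'\max(d(x),d(y))$.

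Next I would leverage the conditioning. Since being in noised agreement requires $x,y \in H$ by \cref{definition:noised-agreement}, and since bad events 4 and 5 do not happen, we have $d(x),d(y) \ge T_1$. Writing $D := \max(d(x),d(y)) \ge T_1$, \cref{lem:folk} gives
\[
    \prob{|\cE_{x,y}| > \beta' D} \;=\; \exp\!\left(-\beta' D / b\right),
\]
where $b$ is the Laplace scale prescribed by \cref{definition:noised-agreement}.

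The main step is then to verify $\beta' D / b \ge \ln(8/\delta)$ for $T_1$ large enough, splitting into the two cases in the definition of $b$. When $b = 1$ it suffices that $T_1 \ge \ln(8/\delta)/\beta'$. When $b = \gamma\sqrt{D\ln(1/\deltagr)}/\epsagr$ (which happens whenever $D \ge 5$), the ratio equals $\beta'\sqrt{D}\,\epsagr / (\gamma\sqrt{\ln(1/\deltagr)})$, and the bound needed becomes $D \ge (\gamma/(\beta'\epsagr))^2 \ln(1/\deltagr)\ln^2(8/\delta)$. Both conditions are enforced by adding the constraint
\begin{equation} \label{eq:T1_large_enough_3}
    T_1 \;\ge\; \max\!\left(5,\; \tfrac{\ln(8/\delta)}{\beta'},\; \tfrac{\gamma^2 \ln(1/\deltagr)\ln^2(8/\delta)}{\beta'^2\epsagr^2}\right),
\end{equation}
after which \cref{fact:degree_enough} delivers the claimed bound of $\delta/8$.

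There is no real obstacle here; the key insight, already embedded in \cref{definition:noised-agreement}, is that scaling the agreement noise as $\sqrt{D}$ rather than $D$ makes a deviation of $\beta' D$ equivalent to roughly $\sqrt{D}$ standard deviations, so the Laplace tail decays super-polynomially in $D$ once $T_1$ exceeds a polylogarithmic function of $1/\delta$ and $1/\deltagr$.
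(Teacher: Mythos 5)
Your proof is correct and follows essentially the same approach as the paper: reduce bad event 1 to the tail event $\cE_{x,y} < -\beta'\max(d(x),d(y))$, then split on which branch of the $\max$ in the Laplace scale is active and force $T_1$ large enough so the tail bound is below $\delta/8$ in both cases. The only cosmetic difference is that you bound $\prob{|\cE_{x,y}| > \beta' D}$ rather than the one-sided tail $\prob{\cE_{x,y} < -\beta' D} = \tfrac12\exp(-\beta' D/b)$, which loses a factor of $2$ and hence gives $\ln(8/\delta)$ in place of the paper's $\ln(4/\delta)$ in the $T_1$ constraint; this is immaterial since $T_1$ is a free parameter.
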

\begin{proof}
    Start by recalling that by \eqref{eq:high_degree}, $d(x), d(y) \ge T_1$.
    We have that the sought probability is at most
    \begin{align*}
        \prob{\cE_{x,y} < - \beta' \cdot \max(d(x), d(y))}
        \le \frac12 \exp \left( - \frac{\beta' \cdot \max(d(x), d(y))}{\cE} \right)
    \end{align*}
    where we use $\cE$ to denote the magnitude of $\cE_{x,y}$, i.e.,
    \[\cE =  \max \left( 1, \frac{\gamma \sqrt{\max(d(x),d(y)) \cdot \ln (1/\deltagr)}}{\epsagr} \right)  \,. \]
    We will satisfy both
    \[ \frac12 \exp\left(- \beta' \cdot \max(d(x),d(y))\right) \le \frac{\delta}{8} \]
    and
    \[ \frac12 \exp\left(-\frac{\epsagr \cdot \beta' \cdot \max(d(x),d(y))}{\gamma \sqrt{\max(d(x),d(y)) \cdot \ln (1/\deltagr)}} \right)  \le \frac{\delta}{8}  \,. \]
    For the former, by applying \cref{fact:degree_enough} (for $c=8$, $A=\beta'$ and $d=\max(d(x),d(y))$) we get that it is enough to have
    $\max(d(x),d(y)) \ge \frac{\ln(4/\delta)}{\beta'}$,
    which holds when $T_1$ is large enough:
    \begin{equation} \label{eq:T1_large_enough_3}
        T_1 \ge \frac{\ln(4/\delta)}{\beta'} \,.
    \end{equation}
    For the latter, we want to satisfy
    \[ \frac12 \exp\left(-\frac{\epsagr \cdot \beta' \cdot \sqrt{\max(d(x),d(y))}}{\gamma \sqrt{\ln (1/\deltagr)}} \right)  \le \frac{\delta}{8}  \,. \]
    Use \cref{fact:degree_enough} (for $c=8$, $A=\frac{\epsagr \cdot \beta'}{\gamma \sqrt{\ln (1/\deltagr)}}$ and $d=\sqrt{\max(d(x),d(y))}$) to get that it is enough to have
    \[
    \sqrt{\max(d(x), d(y))} \ge \frac{\ln(4/\delta) \cdot \gamma \cdot \sqrt{\ln(1/\deltagr)}}{\epsagr \cdot \beta'} \,,
    \]
    which is true when $T_1$ is large enough:
    \begin{equation} \label{eq:T1_large_enough_4}
        T_1 \ge \left( \frac{\ln(4/\delta) \cdot \gamma}{\epsagr \cdot \beta'} \right)^2 \cdot \ln(1/\deltagr) \,.
    \end{equation}
\end{proof}

\begin{claim} \label{claim:bad_event_2}
    The probability of bad event 2,
    conditioned on bad events 4 and 5 not happening,
    is at most $\delta/32$.
\end{claim}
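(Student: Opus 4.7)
The plan is to reduce bad event 2 to a Laplace tail bound on $Y_x$, and then apply \cref{fact:degree_enough} with the assumption that $d(x) \ge T_1$ (which follows from bad event 4 not happening, by~\eqref{eq:high_degree}).

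First I would unpack the two conditions defining bad event 2. That $x$ is \tvlight{} means $l(x) \ge (\lambda + \lambda') d(x)$, while $x$ being heavy means $\hat{l}(x) = l(x) + Y_x \le \lambda d(x)$. Subtracting, we obtain
\[
Y_x \;=\; \hat{l}(x) - l(x) \;\le\; \lambda d(x) - (\lambda + \lambda') d(x) \;=\; -\lambda' d(x) \,.
\]
So the probability of bad event 2 is at most $\prob{Y_x \le -\lambda' d(x)}$. Note that $Y_x$ is independent of $l(x)$ and of $d(x)$, and also independent of the random variables underlying bad events 4 and 5 (degree noise $Z_x$, $Z_y$), so the conditioning does not disturb the distribution of $Y_x$; the conditioning only serves to guarantee $d(x) \ge T_1$.

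Next I would apply \cref{lem:folk} to $Y_x \sim \Lap(8/\epsilon)$ to get
\[
\prob{Y_x \le -\lambda' d(x)} \;\le\; \tfrac{1}{2} \exp\!\left( -\frac{\epsilon \lambda' d(x)}{8} \right).
\]
Invoking \cref{fact:degree_enough} with $A = \epsilon \lambda'/8$, $c = 32$, and $d = d(x) \ge T_1$ (which holds by the assumption that bad event 4 does not happen, see~\eqref{eq:high_degree}), the right-hand side is at most $\delta/32$ provided
\begin{equation} \label{eq:T1_large_enough_5}
T_1 \;\ge\; \frac{8 \ln(16/\delta)}{\epsilon \lambda'} \,.
\end{equation}
This matches the list of lower bounds on $T_1$ in \cref{sec:algorithm}.

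There is no real obstacle here: the argument is a direct Laplace tail bound combined with the degree lower bound from the conditioning. The only care needed is to observe that $Y_x$ is independent from the events we are conditioning on so that the Laplace tail bound applies unconditionally.
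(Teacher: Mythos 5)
Your proof is correct and takes essentially the same approach as the paper's: reduce bad event 2 to the Laplace tail bound $\prob{Y_x \le -\lambda' d(x)}$ and invoke \cref{fact:degree_enough} with $c=32$, $A=\lambda'\epsilon/8$, and $d=d(x)\ge T_1$, arriving at the identical constraint on $T_1$. You additionally make explicit the independence of $Y_x$ from the conditioned events (which the paper leaves implicit), and you correctly write $Y_x \le -\lambda' d(x)$ where the paper's text has an apparent sign typo (``$Y_x < \lambda' d(x)$''); the paper's subsequent tail bound confirms the intended sign is yours.
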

\begin{proof}
    Start by recalling that by \eqref{eq:high_degree}, $d(x) \ge T_1$.
    If $x$ is TV-light but heavy, then we must have $Y_x < \lambda' \cdot d(x)$.
    We have that the sought probability is at most
    \[ \frac12 \exp \left( - \frac{\lambda' \cdot d(x) \cdot \epsilon}{8} \right)
    \]
    and by \cref{fact:degree_enough}
    (with $c=32$, $d=d(x)$ and $A = \frac{\lambda' \cdot \epsilon}{8}$)
    this is at most $\delta/32$ because $d(x) \ge T_1$ and $T_1$ is large enough:
    \begin{equation} \label{eq:T1_large_enough_5}
        T_1 \ge \frac{8 \ln(16/\delta)}{\lambda' \cdot \epsilon} \,.
    \end{equation}
\end{proof}

\begin{claim} \label{claim:bad_event_4}
    The probability of bad event 4 is at most $\delta/32$. 
\end{claim}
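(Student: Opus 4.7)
The plan is to unpack the definition of bad event 4 and reduce it to a one-sided Laplace tail bound. Recall that by \cref{line:noised_degree} of \cref{alg:main}, a vertex $v$ belongs to $H$ exactly when $\hd(v) = d(v) + Z_v \ge T_0$, with $Z_v \sim \Lap(8/\epsilon)$, and that $T_0 = T_1 + \frac{8 \log(16/\delta)}{\epsilon}$.

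First I would observe that if $x \in H$ and simultaneously $d(x) < T_1$, then
\[
    Z_x \;\ge\; T_0 - d(x) \;>\; T_0 - T_1 \;=\; \frac{8 \log(16/\delta)}{\epsilon}.
\]
Hence the bad event is contained in the event $\{Z_x > 8\log(16/\delta)/\epsilon\}$, and it suffices to bound the probability of the latter.

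Next I would apply \cref{lem:folk} (the one-sided Laplace tail) with $b = 8/\epsilon$ and $z = 8 \log(16/\delta)/\epsilon$, giving
\[
    \prob{Z_x > \tfrac{8\log(16/\delta)}{\epsilon}} \;=\; \tfrac{1}{2} \exp\!\left(-\tfrac{z}{b}\right) \;=\; \tfrac{1}{2}\exp\!\left(-\log(16/\delta)\right) \;=\; \tfrac{\delta}{32}.
\]
Combining with the previous containment yields $\prob{\text{bad event 4}} \le \delta/32$, as claimed.

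There is essentially no obstacle here: the statement is engineered so that the $\frac{8 \log(16/\delta)}{\epsilon}$ slack in the definition of $T_0$ exactly matches the Laplace tail at level $\delta/32$. The only thing to notice is that we do not need to condition on anything (unlike \cref{claim:bad_event_1,claim:bad_event_2}), because the event $\{d(x) < T_1\} \cap \{x \in H\}$ is already controlled purely through the noise $Z_x$, independently of all other randomness in the algorithm.
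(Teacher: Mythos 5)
Your proof is correct and follows exactly the paper's argument: reduce bad event 4 to the Laplace tail event $\{Z_x > T_0 - T_1\}$ and apply the one-sided bound from \cref{lem:folk}. The closing observation that no conditioning is needed (since the event is controlled purely by $Z_x$) is accurate, though the paper leaves it implicit.
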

\begin{proof}
    For bad event 4 to happen, we must have
    $Z_x \ge T_0 - T_1 = \frac{8 \ln(16/\delta)}{\epsilon}$;
    as $Z_x \sim \Lap(8/\epsilon)$,
    this happens with probability $\frac12 \exp(-\ln(16/\delta)) = \delta/32$.
\end{proof}
The following two facts are more involved versions of of \cref{fact:degree_enough}.
\begin{fact}
    \label{fact:lambert_function}
    Let $A, d \ge 0$.
    If $d \ge \frac{1.6 \ln\left(\frac{4}{\delta A}\right)}{A}$,
    then $\frac12 \exp(-A \cdot d) \le \frac{\delta}{8 d}$.
\end{fact}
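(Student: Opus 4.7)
The plan is to reduce the target inequality to a one-variable comparison and handle it by elementary calculus. Taking logarithms, the conclusion $\tfrac12 \exp(-Ad) \le \delta/(8d)$ is equivalent to $Ad \ge \ln(4d/\delta)$. Introduce the abbreviation $L = \ln(4/(A\delta))$ so that $\ln(4d/\delta) = L + \ln(Ad)$. Setting $u = Ad$, the conclusion becomes $u - \ln u \ge L$ while the hypothesis $d \ge 1.6 L / A$ is exactly $u \ge 1.6 L$. (The edge case $A = 0$ or $L \le 0$ makes the hypothesis vacuous and the conclusion trivial when interpreted as a limit, so assume $A, L > 0$ from now on.)

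The claim therefore reduces to: $u \ge 1.6L$ implies $u - \ln u \ge L$. I would split into two regimes. If $L \le 1$, recall that $u \mapsto u - \ln u$ attains its global minimum $1$ at $u = 1$, so $u - \ln u \ge 1 \ge L$ for all $u > 0$, and we are done. If $L > 1$, then $u \ge 1.6 L > 1$ places $u$ on the strictly increasing branch of $u - \ln u$, so it suffices to verify the inequality at the boundary $u = 1.6 L$; plugging in gives $1.6 L - \ln(1.6 L) \ge L$, i.e.
\[
0.6\, L \;\ge\; \ln 1.6 + \ln L.
\]

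To finish I would treat $h(L) = 0.6 L - \ln L - \ln 1.6$ as a single-variable function, compute $h'(L) = 0.6 - 1/L$, and note that its unique critical point $L = 5/3$ is a minimum with value $h(5/3) = 1 - \ln(5/3) - \ln 1.6 = 1 - \ln(8/3) > 0$ (since $\ln(8/3) \approx 0.981$). Thus $h(L) > 0$ for all $L > 0$, completing the argument.

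The proof is essentially routine; the one point that demands attention is that the constant $1.6$ is near-tight. Solving $1 - \ln(c/(c-1)) = 0$ shows that any coefficient smaller than $e/(e-1) \approx 1.582$ would make $h$ dip below zero near $L = 1/(c-1)$, so the slack $1 - \ln(8/3) \approx 0.019$ is small and must be verified carefully. Apart from that numerical check and the split into the two regimes $L \le 1$ and $L > 1$ (which respectively use the global minimum of $u - \ln u$ and its monotonicity), no deeper idea is required.
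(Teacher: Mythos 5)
Your proof is correct and uses the same reduction as the paper: substituting $u = Ad$ and writing $\ln(4d/\delta) = \ln(4/(\delta A)) + \ln(Ad)$ brings the claim to the analytic inequality ``$u \ge 1.6L$ implies $u - \ln u \ge L$,'' which is exactly the inequality the paper invokes (with $\alpha = e^L$) but leaves unproven. You additionally supply a clean calculus proof of that inequality, including the near-tight constant $e/(e-1)$; this fills in a step the paper merely asserts.
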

\begin{proof}
    We use the following analytic inequality:
    for $\alpha, x > 0$, if $x \ge 1.6 \ln(\alpha)$,
    then $x \ge \ln(\alpha x)$.
    We substitute $x = A \cdot d$ and $\alpha = \frac{4}{\delta A}$.
    Then by the analytic inequality,
    $A \cdot d \ge \ln\left(\frac{4 d}{\delta}\right)$.
    Negate and then exponentiate both sides.
\end{proof}

\begin{fact}
    \label{fact:lambert_function_2}
    Let $A, d \ge 0$.
    If $\sqrt{d} \ge \frac{2.8 \cdot \left( 1 + \ln\left( \frac{2}{\sqrt{\delta} A} \right) \right)}{A}$,
    then $\frac12 \exp(-A \cdot \sqrt{d}) \le \frac{\delta}{8 d}$.
\end{fact}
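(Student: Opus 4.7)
The plan is to mirror the proof of \cref{fact:lambert_function}: take logarithms to reduce the claim to a transcendental inequality in a single variable, and then establish that inequality by a convexity argument.

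First, let $u = \sqrt{d}$. The conclusion $\frac{1}{2} e^{-Au} \le \delta/(8u^2)$ is equivalent, after taking logs and rearranging, to $Au - 2 \ln u \ge \ln(4/\delta)$. Introduce the abbreviation $L := \ln\!\left(2/(\sqrt{\delta}\,A)\right)$, so that $\ln(4/\delta) = 2L + 2\ln A$, and substitute $v = Au$ so $\ln u = \ln v - \ln A$. After cancelling the $2\ln A$ term on each side, the hypothesis becomes $v \ge v_0 := 2.8(1+L)$, and the goal becomes $v - 2 \ln v \ge 2L$.

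Next, I would use monotonicity to reduce the goal to the endpoint $v = v_0$. The function $v \mapsto v - 2 \ln v$ is increasing on $[2,\infty)$, and in the nontrivial regime $L \ge 0$ we have $v_0 \ge 2.8 > 2$ (while if $L < 0$ the inequality $v - 2 \ln v \ge 2 - 2\ln 2 > 0 \ge 2L$ is immediate, and if $1+L \le 0$ the hypothesis is vacuous and the same bound handles it). Plugging in $v = v_0$ and subtracting $2L$ from both sides reduces the claim to the purely analytic inequality
\begin{equation*}
    h(L) := (2.8 - 2 \ln 2.8) + 0.8\,L - 2 \ln(1+L) \ge 0 \qquad \text{for all } L \ge 0.
\end{equation*}

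Finally, I would verify $h \ge 0$ by noting that $h''(L) = 2/(1+L)^2 > 0$, so $h$ is strictly convex, with unique critical point at $L^* = 1.5$ (from $h'(L) = 0.8 - 2/(1+L) = 0$). A direct evaluation gives $h(L^*) = (2.8 - 2\ln 2.8) + 1.2 - 2\ln 2.5 > 0$, which closes the proof. The main obstacle is numerical rather than conceptual: the constant $2.8$ in the hypothesis is tuned precisely so that the minimum value $h(L^*)$ remains positive, and a noticeably smaller constant would fail near $L = 1.5$, forcing a worse dependence in the downstream bounds.
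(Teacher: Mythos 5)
Your proof is correct and matches the paper's approach: both reduce, via the substitutions $x=A\sqrt{d}$ and $\alpha = 2/(\sqrt{\delta}A)$ (your $v$ and $L=\ln\alpha$), to the single-variable analytic inequality $x \ge 2.8(\ln\alpha + 1) \Rightarrow x \ge 2\ln(\alpha x)$, i.e.\ $v - 2\ln v \ge 2L$. The only difference is that the paper states this analytic inequality without proof, whereas you supply the (correct) monotonicity-plus-convexity argument locating the critical point at $L^*=1.5$ and checking positivity there.
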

\begin{proof}
    We use the following analytic inequality:
    for $\alpha, x > 0$, if $x \ge 2.8 (\ln(\alpha) + 1)$,
    then $x \ge 2 \ln(\alpha x)$.
    We substitute $x = A \sqrt{d}$ and $\alpha = \frac{2}{\sqrt{\delta} A}$.
    Then by the analytic inequality,
    $A \cdot \sqrt{d} \ge \ln\left(\frac{4 d}{\delta}\right)$.
    Negate and then exponentiate both sides.
\end{proof}

\begin{claim} \label{claim:bad_event_6a}
    For any $z \in N(y) \setminus \{x,y\}$, the probability of bad event 6a for $z$,
    conditioned on bad events 4 and 5 not happening,
    is at most $\frac{\delta}{8 d(y)}$.
\end{claim}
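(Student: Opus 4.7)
The plan is to adapt the proof of \cref{claim:bad_event_2} to handle the additional twist that here we deal with a neighbor $z$ of $y$, whose degree is not directly controlled by the non-occurrence of bad events 4 and 5. The core probabilistic argument is the same: for $z$ to be simultaneously TV-light and heavy, we need $l(z) \ge (\lambda+\lambda') d(z)$ together with $\hat{l}(z) \le \lambda d(z)$, which force $Y_z < -\lambda' d(z)$, an event of probability at most $\tfrac12 \exp(-\lambda' d(z) \epsilon / 8)$ by the Laplace tail bound in \cref{lem:folk}.

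The next step is to translate this into a bound that depends on $d(y)$ rather than on $d(z)$. Here I would exploit the other clause of bad event 6a, namely that $y$ and $z$ do not TV-disagree, which gives $|N(y) \triangle N(z)| < (\beta+\beta') \max(d(y),d(z))$ and hence $d(z) \ge (1-\beta-\beta') d(y)$. Coupled with $d(y) \ge T_1$, which follows from bad event 5 not happening together with the standing assumption that $y \in H$, this yields the uniform lower bound $d(z) \ge (1-\beta-\beta') T_1$, and therefore the probability in question is at most $\tfrac12 \exp\bigl(-\lambda'(1-\beta-\beta') d(y) \epsilon / 8\bigr)$.

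To finish, I would apply \cref{fact:lambert_function} with $A = \lambda'(1-\beta-\beta') \epsilon / 8$ and $d = d(y)$, concluding that the probability is at most $\delta/(8 d(y))$ provided $T_1 \ge 1.6 \ln(4/(\delta A))/A$. This adds one more lower bound to the list of constraints on $T_1$ referenced in \cref{sec:algorithm}. The main subtlety, and the reason the proof uses \cref{fact:lambert_function} rather than the simpler \cref{fact:degree_enough} employed in \cref{claim:bad_event_2}, is that we must produce an extra factor of $1/d(y)$ in the bound rather than a pure constant. This sharper tail is essential because downstream we will union-bound over all $z \in N(y) \setminus \{x,y\}$, a set of size at most $d(y)$, in order to control the total probability of bad event 6 by $O(\delta)$; I expect no other significant obstacle.
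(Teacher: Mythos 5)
Your proposal is correct and follows essentially the same route as the paper: bound $\Pr[Y_z < -\lambda' d(z)]$ via the Laplace tail, use the non-TV-disagreement clause of bad event 6a to pass from $d(z)$ to $(1-\beta-\beta')d(y)$, and then invoke \cref{fact:lambert_function} (rather than \cref{fact:degree_enough}) precisely because the union bound over $N(y)\setminus\{x,y\}$ demands a $\delta/(8d(y))$ tail rather than a constant. The resulting constraint on $T_1$ matches \eqref{eq:T1_large_enough_6}.
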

\begin{proof}
    The proof is similar as for \cref{claim:bad_event_2} but somewhat more involved as $d(y)$ appears also in the probability bound.

    When $z$ is \tvlight but heavy, we must have $Y_z < - \lambda' \cdot d(z)$.
    When $y$ and $z$ do not \tvdis, we have $d(z) \ge (1 - \beta - \beta') d(y)$.
    Thus, if bad event 6a happens, we must have
    $Y_z < -\lambda' \cdot (1 - \beta - \beta') d(y)$.
    Thus the sought probability is at most
    \begin{align*}
        \prob{Y_z < - \lambda' \cdot (1 - \beta - \beta') d(y)}
        = \frac12 \exp\left(- \frac{\lambda' \cdot (1 - \beta - \beta') d(y) \cdot \epsilon}{8}\right) \,.
    \end{align*}
    By \cref{fact:lambert_function} (invoked for $d = d(y)$ and $A = \frac{\lambda' \cdot (1 - \beta - \beta') \cdot \epsilon}{8}$), this is at most $\frac{\delta}{8 d(y)}$
    because $d(y) \ge T_1$ by \eqref{eq:high_degree}
    and $T_1$ is large enough:
    \begin{equation} \label{eq:T1_large_enough_6}
        T_1 \ge \frac{1.6 \ln\left(\frac{4 \cdot 8}{\delta \lambda' \cdot (1 - \beta - \beta') \cdot \epsilon}\right) \cdot 8}{\lambda' \cdot (1 - \beta - \beta') \cdot \epsilon} \,.
    \end{equation}
\end{proof}

\begin{claim} \label{claim:bad_event_6b}
    For any $z \in N(y) \setminus \{x,y\}$, the probability of bad event 6b for $z$,
    conditioned on bad events 4 and 5 not happening,
    is at most $\frac{\delta}{8 d(y)}$.
\end{claim}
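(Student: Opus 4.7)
The plan is to mimic the proof of \cref{claim:bad_event_1}, targeting the sharper bound $\delta/(8 d(y))$ in place of $\delta/8$. First I would observe that the occurrence of bad event 6b forces $|N(y) \triangle N(z)| \ge (\beta + \beta') \max(d(y), d(z))$ while $|N(y) \triangle N(z)| + \cE_{y,z} < \beta \cdot \max(d(y), d(z))$, which together require the noise to satisfy $\cE_{y,z} < -\beta' \max(d(y), d(z))$. By the Laplace tail bound (\cref{lem:folk}), this event has probability at most $\tfrac{1}{2} \exp(-\beta' \max(d(y),d(z)) / \mathcal{M})$, where $\mathcal{M}$ is the scale of $\cE_{y,z}$ specified in \cref{definition:noised-agreement}, namely $\mathcal{M} = \max\!\left(1,\, \gamma \sqrt{\max(5, d(y), d(z)) \ln(1/\deltagr)}/\epsagr\right)$.

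Next I would split into the two standard sub-cases according to which term realises $\mathcal{M}$. When $\mathcal{M} = 1$, I apply \cref{fact:lambert_function} with $A = \beta'$ and $d = \max(d(y), d(z))$ to obtain $\tfrac{1}{2}\exp(-\beta' d) \le \delta/(8d)$; since $d \ge d(y)$, this is at most $\delta/(8 d(y))$ as required. When instead $\mathcal{M}$ is governed by the square-root term (and noting that by \eqref{eq:high_degree} one has $d(y), d(z) \ge T_1 \ge 5$ once $T_1$ is a large enough constant), the exponent simplifies to $-\beta' \epsagr \sqrt{\max(d(y), d(z))}/(\gamma\sqrt{\ln(1/\deltagr)})$, and I would invoke \cref{fact:lambert_function_2} with $A = \beta' \epsagr/(\gamma\sqrt{\ln(1/\deltagr)})$ and $d = \max(d(y), d(z))$ to obtain again $\delta/(8d) \le \delta/(8 d(y))$.

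Each of these two applications produces a lower-bound requirement on $d = \max(d(y), d(z))$, which via \eqref{eq:high_degree} translates into two further lower bounds on $T_1$ that would complete the list referenced in \cref{sec:algorithm}. I do not anticipate a real obstacle here, only the subtlety of using \cref{fact:lambert_function} and \cref{fact:lambert_function_2} rather than the coarser \cref{fact:degree_enough}: this is essential so that the resulting bound carries a $1/d$ factor in the denominator, which is precisely what will allow a subsequent union bound over the at most $d(y)$ neighbours $z \in N(y) \setminus \{x,y\}$ to fold the per-neighbour probabilities back into total probability $O(\delta)$ for bad event 6 as a whole. Using \cref{fact:degree_enough} as in \cref{claim:bad_event_1} would only yield $\delta/8$ per neighbour, which would be inadequate.
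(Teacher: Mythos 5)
Your proposal matches the paper's proof essentially line for line: bad event 6b forces $\cE_{y,z} < -\beta'\max(d(y),d(z))$, and you handle the two branches of the noise scale via \cref{fact:lambert_function} and \cref{fact:lambert_function_2} respectively, each time extracting the crucial $1/d(y)$ denominator and adding a matching lower-bound constraint on $T_1$ (these are \eqref{eq:T1_large_enough_7} and \eqref{eq:T1_large_enough_8}). One small imprecision: you assert $d(z) \ge T_1$, but \eqref{eq:high_degree} only gives $d(x), d(y) \ge T_1$, not $d(z)$; this is harmless here since $\max(5,d(y),d(z)) = \max(d(y),d(z))$ already follows from $d(y) \ge T_1 \ge 5$.
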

\begin{proof}
    The proof is similar as for \cref{claim:bad_event_1}
    but somewhat more involved as $d(y)$ appears also in the probability bound.
    Start by recalling that by \eqref{eq:high_degree}, $d(y) \ge T_1$.
    We have that the sought probability is at most
    \begin{align*}
        \prob{\cE_{y,z} < - \beta' \cdot \max(d(y), d(z))}
        \le \frac12 \exp \left( - \frac{\beta' \cdot \max(d(y), d(z))}{\cE} \right)
    \end{align*}
    where we use $\cE$ to denote the magnitude of $\cE_{y,z}$, i.e.,
    \[\cE =  \max \left( 1, \frac{\gamma \sqrt{\max(d(y),d(z)) \cdot \ln (1/\deltagr)}}{\epsagr} \right)  \,. \]
    We will satisfy both
    \begin{equation}
        \label{eq:sdfertfsdf}
        \frac12 \exp\left(- \beta' \cdot \max(d(y),d(z))\right) \le \frac12 \exp\left(- \beta' \cdot d(y)\right) \le \frac{\delta}{8 d(y)}
    \end{equation}
    and
    \begin{equation}
        \label{eq:dfkjjifh}
        \frac12 \exp\left(-\frac{\epsagr \cdot \beta' \cdot \max(d(y),d(z))}{\gamma \sqrt{\max(d(y),d(z)) \cdot \ln (1/\deltagr)}} \right)  \le
    \frac12 \exp\left(-\frac{\epsagr \cdot \beta' \cdot \sqrt{d(y)}}{\gamma \sqrt{\ln (1/\deltagr)}} \right)  \le
    \frac{\delta}{8 d(y)}  \,. 
    \end{equation}
    For the former, by applying \cref{fact:lambert_function} (for $A=\beta'$ and $d=d(y)$) we get that \eqref{eq:sdfertfsdf} holds because $d(y) \ge T_1$ and $T_1$ is large enough:
    \begin{equation} \label{eq:T1_large_enough_7}
        T_1 \ge \frac{1.6 \ln\left(\frac{4}{\delta \cdot \beta'}\right)}{\beta'} \,.
    \end{equation}
    For the latter, by applying \cref{fact:lambert_function_2} (for $A = \frac{\epsagr \cdot \beta'}{\gamma \sqrt{\ln (1/\deltagr)}}$ and $d=d(y)$) we get that \eqref{eq:dfkjjifh} holds because $d(y) \ge T_1$ and $T_1$ is large enough:
    \begin{equation} \label{eq:T1_large_enough_8}
        T_1
        \ge \left( \frac{2.8 \left( 1 + \ln \left( \frac{2}{\sqrt{\delta} A} \right) \right)}{A} \right)^2
        =
        \left( \frac{2.8 \left( 1 + \ln \left( \frac{2 \gamma \sqrt{\ln (1/\deltagr)}}{\sqrt{\delta} \epsagr \cdot \beta'} \right) \right) \gamma \sqrt{\ln (1/\deltagr)}}{\epsagr \cdot \beta'} \right)^2 \,.
    \end{equation}
\end{proof}

Now we may conclude the proof of \cref{thm:last_step}.
We use the property that if $A$, $B$ are events, then
$\prob{A \cup B} \le \prob{A} + \prob{B \mid \text{not $A$}}$
(with $A$ being bad events 4 or 5).
By \cref{claim:bad_event_4},
the probability of bad events 4 or 5 is at most $\delta/16$.
Conditioned on these not happening,
bad event 1 is handled by \cref{claim:bad_event_1}
and bad events 2--3 are handled by \cref{claim:bad_event_2};
these incur $\delta/8 + 2 \cdot \delta/32$,
in total $\delta/4$ so far.
Next, there are $d(y)$ bad events of type 6a (and the same for 6b),
thus we get $2 \cdot d(y) \cdot \frac{\delta}{8 d(y)} = \delta/4$
by \cref{claim:bad_event_6a,claim:bad_event_6b};
and we get the same from bad events 7a and 7b.
Summing everything up yields $\frac34 \delta$.
\hfill\ensuremath{\blacksquare}

\section{Analysis of Approximation} \label{sec:approximation_analysis}

For vectors $\myvec{\beta} \in \bbR_{\ge 0}^{V \times V}$ and $\myvec{\lambda} \in \bbR_{\ge 0}^V$, let $\AlgCC(\myvec{\beta}, \myvec{\lambda})$ be the algorithm from \cite{cohen2021correlation} that uses $\myvec{\beta}_{u,v}$ to decide an agreement between $u$ and $v$ and uses $\myvec{\lambda}_v$ to decide whether $v$ is light or heavy.
Let $\AlgCC(\myvec{\beta}, \myvec{\lambda}, \Erem)$ (stated as \cref{alg:AlgCC}) be a variant of $\AlgCC(\myvec{\beta}, \myvec{\lambda})$
that at the very first step removes $\Erem$, then executes the remaining steps, and finally (as in \cref{alg:main}) outputs light vertices as singleton clusters.
\begin{algorithm2e}[h]
\LinesNumbered
  \Input{$G=(V,E)$: a graph \\
		$\myvec{\beta} \in \bbR_{\ge 0}^{V \times V}$ : agreement parameter \\
		$\myvec{\lambda} \in \bbR_{\ge 0}^V$ : threshold for light vertices \\
		$\Erem$ : a subset of edges to be removed}

	Remove the edges in $\Erem$.

	Discard from $G$ the edges that are not in agreement where $u$ and $v$ are in the agreement if $|N(u)\triangle N(v)| < \myvec{\beta}_{u, v} \cdot \max(d(u), d(v))$. (First compute the set of these edges. Then remove this set.)
   
    Let $l(v)$ be the number of edges incident to $v$ discarded in the previous steps. Call a vertex $v$ \emph{light} if $l(v)> \myvec{\lambda}_v d(v)$, and  otherwise call $v$ \emph{heavy}.

    Discard all edges whose both endpoints are light.
    Call the current graph  $\hG$, or the \emph{sparsified graph}. Compute its connected components.
    Output  the heavy vertices in each component $C$ as a cluster. Each light vertex  is output as a singleton.
    \label{line:AlgCC-light-singleton}

  \caption{$\AlgCC(\myvec{\beta}, \myvec{\lambda}, \Erem)$, used for the approximation analysis.}
	\label{alg:AlgCC}
\end{algorithm2e}

The strategy of our proof is to map the behavior of \cref{alg:main} to $\AlgCC(\myvec{\beta}, \myvec{\lambda}, \Erem)$ for appropriately set $\myvec{\beta}, \myvec{\lambda}$, and $\Erem$.
We remark that $\AlgCC$ is not actually executed by our DP-approach, but it is rather a hypothetical algorithm which (when appropriately instantiated) resembles approximation guarantees of \cref{alg:main}. Moreover, \AlgCC has similar structure to the approach of \cite{cohen2021correlation}, enabling us to reuse some of the results from that prior work to establish approximation guarantees of \AlgCC; certain steps of \AlgCC (such as the removal of $\Erem$) are analyzed independently of the prior work.

Given two vectors $\myvec{x}$ and $\myvec{y}$ labeled by a set $\cS$, we say that $\myvec{x} \le \myvec{y}$ iff $\myvec{x}_s \le \myvec{y}_s$ for each $s \in \cS$.

\begin{restatable}{lemma}{corollarysamecluster}
\label{corollary:same-cluster-different-clusters}
    Let $\myvec{\beta^L}, \myvec{\beta^U} \in \bbR_{\ge 0}^{V \times V}$ and $\myvec{\lambda^L}, \myvec{\lambda^U} \in \bbR_{\ge 0}^V$ such that $\myvec{\beta^U} \ge \myvec{\beta^L}$ and $\myvec{\lambda^U} \ge \myvec{\lambda^L}$.
    \begin{enumerate}[(i)]
        \item\label{item:u-and-v-same-cluster} If $u$ and $v$ are in the same cluster of $\AlgCC(\myvec{\beta^L}, \myvec{\lambda^L}, \Erem)$, then $u$ and $v$ are in the same cluster of $\AlgCC(\myvec{\beta^U}, \myvec{\lambda^U}, \Erem)$.
        \item\label{item:u-and-v-different-clusters} If $u$ and $v$ are in different clusters of $\AlgCC(\myvec{\beta^U}, \myvec{\lambda^U}, \Erem)$, then $u$ and $v$ are different clusters of $\AlgCC(\myvec{\beta^L}, \myvec{\lambda^L}, \Erem)$.
    \end{enumerate}
\end{restatable}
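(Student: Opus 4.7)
}
The plan is to prove part~\eqref{item:u-and-v-same-cluster} directly via a monotonicity argument, and then observe that part~\eqref{item:u-and-v-different-clusters} is simply its contrapositive (under the same $\Erem$). So the heart of the matter is showing that increasing $\myvec{\beta}$ and $\myvec{\lambda}$ produces a \emph{coarser} clustering, i.e., that any two vertices clustered together by the lower parameters are also clustered together by the upper parameters.

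First, I would fix the notation. Let $E^A_{\myvec{\beta}}$ denote the set of edges surviving the agreement step of $\AlgCC(\myvec{\beta}, \myvec{\lambda}, \Erem)$, and let $l_{\myvec{\beta}}(v)$, $\hG_{\myvec{\beta}, \myvec{\lambda}}$ denote the corresponding discard-counts and sparsified graph. I would then verify three monotonicity statements, each a one-line computation:
\begin{enumerate}
    \item[(a)] $E^A_{\myvec{\beta^L}} \subseteq E^A_{\myvec{\beta^U}}$, because the agreement predicate $|N(u) \triangle N(v)| < \myvec{\beta}_{u,v} \cdot \max(d(u),d(v))$ only becomes easier to satisfy as $\myvec{\beta}_{u,v}$ grows (note $d(\cdot)$ is computed on $G$ and is therefore independent of $\myvec{\beta}$).
    \item[(b)] Consequently $l_{\myvec{\beta^U}}(v) \le l_{\myvec{\beta^L}}(v)$ for every $v$, since fewer incident edges are discarded at the agreement step (the $\Erem$-contribution to $l(v)$, if any, is identical on both sides).
    \item[(c)] The set of light vertices shrinks: if $v$ is heavy under $(\myvec{\beta^L}, \myvec{\lambda^L})$, i.e., $l_{\myvec{\beta^L}}(v) \le \myvec{\lambda^L_v} \, d(v)$, then by (b) and $\myvec{\lambda^U_v} \ge \myvec{\lambda^L_v}$ we get $l_{\myvec{\beta^U}}(v) \le l_{\myvec{\beta^L}}(v) \le \myvec{\lambda^L_v}\, d(v) \le \myvec{\lambda^U_v}\, d(v)$, so $v$ is heavy under $(\myvec{\beta^U}, \myvec{\lambda^U})$.
\end{enumerate}
Combining (a) and (c), the only edges removed between $\hG_{\myvec{\beta^L}, \myvec{\lambda^L}}$ and $\hG_{\myvec{\beta^U}, \myvec{\lambda^U}}$ from $E^A$ are light--light edges, and the upper-parameter light set is contained in the lower-parameter light set, so we obtain $E(\hG_{\myvec{\beta^L}, \myvec{\lambda^L}}) \subseteq E(\hG_{\myvec{\beta^U}, \myvec{\lambda^U}})$.

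Now I would conclude~\eqref{item:u-and-v-same-cluster}. Suppose $u$ and $v$ are in the same cluster of $\AlgCC(\myvec{\beta^L}, \myvec{\lambda^L}, \Erem)$. If $u = v$ this is trivial, so assume $u \ne v$. By the output rule (light vertices are singletons), both $u$ and $v$ must be heavy under the lower parameters and must lie in a common connected component of $\hG_{\myvec{\beta^L}, \myvec{\lambda^L}}$. By (c), both are heavy under the upper parameters as well, and by the edge-inclusion just established they remain connected in $\hG_{\myvec{\beta^U}, \myvec{\lambda^U}}$. Hence they are in the same cluster of $\AlgCC(\myvec{\beta^U}, \myvec{\lambda^U}, \Erem)$. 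Part~\eqref{item:u-and-v-different-clusters} follows as the contrapositive of~\eqref{item:u-and-v-same-cluster}.

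I do not expect a serious obstacle: the lemma is essentially a bookkeeping argument about monotonicity of the three operations (agreement, lightness, component-formation) in the parameter pair $(\myvec{\beta}, \myvec{\lambda})$. The one place requiring care is step (b) and the treatment of light vertices in the final step: one has to notice that the light--light edge removal does not break the inclusion $\hG_{\myvec{\beta^L}, \myvec{\lambda^L}} \subseteq \hG_{\myvec{\beta^U}, \myvec{\lambda^U}}$ precisely because the light set shrinks, and that light vertices are always output as singletons so do not interfere with the argument.
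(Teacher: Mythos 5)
Your proof is correct and follows essentially the same route as the paper: the paper first isolates the same monotonicity facts as a separate lemma (\cref{lemma:implications-betaL-betaU}), stating your items (a)--(c) together with their contrapositives, and then concludes part~\eqref{item:u-and-v-same-cluster} by noting that a connecting path survives (your edge-inclusion step) and part~\eqref{item:u-and-v-different-clusters} by contraposition. The only difference is that you are slightly more careful at the end, explicitly invoking heaviness-preservation so that $u$ and $v$ are not output as singletons in the upper-parameter run, whereas the paper's concluding sentence leaves this step implicit.
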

The proof of \cref{corollary:same-cluster-different-clusters} is given in \cref{sec:corollary:same-cluster-different-clusters}. We now derive the following claim that enables us to sandwich $\cost(\AlgCC(\myvec{\beta}, \myvec{\lambda}, \Erem))$ between two other instances of $\AlgCC$.

\begin{lemma}
\label{lemma:sandwich-beta-lambda}
    Let $\myvec{\beta^L}, \myvec{\beta}, \myvec{\beta^U} \in \bbR_{\ge 0}^{V \times V}$ and $\myvec{\lambda^L}, \myvec{\lambda}, \myvec{\lambda^U} \in \bbR_{\ge 0}^V$ such that $\myvec{\beta^U} \ge \myvec{\beta} \ge \myvec{\beta^L}$ and $\myvec{\lambda^U} \ge \myvec{\lambda} \ge \myvec{\lambda^L}$. Then 
    \[
        \cost(\AlgCC(\myvec{\beta}, \myvec{\lambda}, \Erem)) \le \cost(\AlgCC(\myvec{\beta^U}, \myvec{\lambda^U}, \Erem)) + \cost(\AlgCC(\myvec{\beta^L}, \myvec{\lambda^L}, \Erem)).
    \]
\end{lemma}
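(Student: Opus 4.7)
The plan is to leverage the monotonicity result of \cref{corollary:same-cluster-different-clusters} by decomposing the cost of correlation clustering into its two natural types of disagreements, namely cut positive edges and uncut negative edges, and then to charge each type to a different instance on the right-hand side.

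First, I would recall the monotonicity picture implied by \cref{corollary:same-cluster-different-clusters}: since $\myvec{\beta^L} \le \myvec{\beta} \le \myvec{\beta^U}$ and $\myvec{\lambda^L} \le \myvec{\lambda} \le \myvec{\lambda^U}$, the clustering produced by $\AlgCC(\myvec{\beta^L}, \myvec{\lambda^L}, \Erem)$ refines the one produced by $\AlgCC(\myvec{\beta}, \myvec{\lambda}, \Erem)$, which in turn refines the one produced by $\AlgCC(\myvec{\beta^U}, \myvec{\lambda^U}, \Erem)$. More precisely, applying part~\eqref{item:u-and-v-same-cluster} of \cref{corollary:same-cluster-different-clusters} to the pair $(\myvec{\beta},\myvec{\lambda}) \le (\myvec{\beta^U},\myvec{\lambda^U})$ gives that any two vertices co-clustered in the middle instance are also co-clustered in the upper instance, and applying part~\eqref{item:u-and-v-different-clusters} to the pair $(\myvec{\beta^L},\myvec{\lambda^L}) \le (\myvec{\beta},\myvec{\lambda})$ gives that any two vertices separated in the middle instance are also separated in the lower instance.

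Next I would decompose $\cost(\AlgCC(\myvec{\beta}, \myvec{\lambda}, \Erem))$ into the number of positive edges whose endpoints lie in different clusters of the middle instance plus the number of negative edges whose endpoints lie in the same cluster of the middle instance. By the two implications above, every positive edge cut by the middle instance is also cut by $\AlgCC(\myvec{\beta^L}, \myvec{\lambda^L}, \Erem)$, so the positive disagreements of the middle instance are bounded by the positive disagreements (hence by the total cost) of $\AlgCC(\myvec{\beta^L}, \myvec{\lambda^L}, \Erem)$. Symmetrically, every negative edge inside a cluster of the middle instance is also inside a cluster of $\AlgCC(\myvec{\beta^U}, \myvec{\lambda^U}, \Erem)$, so the negative disagreements of the middle instance are bounded by those (hence by the total cost) of $\AlgCC(\myvec{\beta^U}, \myvec{\lambda^U}, \Erem)$. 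Summing the two bounds gives the claimed inequality.

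There is no serious obstacle here beyond cleanly setting up the refinement relation and making sure to apply the correct direction of \cref{corollary:same-cluster-different-clusters} to each type of disagreement; the argument is essentially a two-line accounting once the monotonicity is in place. The only subtlety worth double-checking is that $\Erem$ is identical in all three invocations, so the set of positive and negative edges on which the cost is evaluated is the same, and the edge-by-edge charging above is valid.
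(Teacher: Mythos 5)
Your proposal is correct and matches the paper's proof essentially verbatim: both decompose $\cost(\AlgCC(\myvec{\beta}, \myvec{\lambda}, \Erem))$ into negative-edge disagreements (charged via \cref{corollary:same-cluster-different-clusters}~\eqref{item:u-and-v-same-cluster} to the upper instance) and positive-edge disagreements (charged via part~\eqref{item:u-and-v-different-clusters} to the lower instance). The refinement framing and the remark about $\Erem$ being held fixed are fine observations but do not change the argument.
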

\begin{proof}
    We first upper-bound the cost of $\AlgCC(\myvec{\beta}, \myvec{\lambda}, \Erem)$ incurred by ``-'' edges. If a ``-'' edge $\{u, v\}$ adds to the cost of clustering, it is because $u$ and $v$ are in the same cluster. By \cref{corollary:same-cluster-different-clusters}~\eqref{item:u-and-v-same-cluster}, if $u$ and $v$ are in the same cluster of $\AlgCC(\myvec{\beta}, \myvec{\lambda}, \Erem)$, then they are in the same cluster of $\AlgCC(\myvec{\beta^U}, \myvec{\lambda^U}, \Erem)$ as well. Hence, the cost of $\AlgCC(\myvec{\beta}, \myvec{\lambda}, \Erem)$ incurred by ``-'' edges is upper-bounded by $\cost(\AlgCC(\myvec{\beta^U}, \myvec{\lambda^U}, \Erem))$.
    
    In a similar way, we upper-bound the cost of $\AlgCC(\myvec{\beta}, \myvec{\lambda}, \Erem)$ incurred by ``+'' edges by \\ $\cost(\AlgCC(\myvec{\beta^L}, \myvec{\lambda^L}, \Erem))$. If a ``+'' edge $\{u, v\}$ adds to the cost of clustering, it is because $u$ and $v$ are in different clusters. By \cref{corollary:same-cluster-different-clusters}~\eqref{item:u-and-v-different-clusters}, if $u$ and $v$ are in different cluster of $\AlgCC(\myvec{\beta}, \myvec{\lambda}, \Erem)$, then they are in different clusters of $\AlgCC(\myvec{\beta^L}, \myvec{\lambda^L}, \Erem)$ as well. Hence, the cost of the output of $\AlgCC(\myvec{\beta}, \myvec{\lambda}, \Erem)$ incurred by ``+'' edges is upper-bounded by $\cost(\AlgCC(\myvec{\beta^L}, \myvec{\lambda^L}, \Erem))$.
\end{proof}

We now analyze the effect of removing edges incident to vertices which are not in $H$ defined on \cref{line:noised_degree} of \cref{alg:main}. To simplify the analysis, we first ignore the step that outputs light vertices as singletons (\cref{line:light-to-singletons} of \cref{alg:main} and \cref{line:AlgCC-light-singleton} of \cref{alg:AlgCC}).
For a threshold $T \in \bbR_{\ge 0}$, let $E_{\le T}$ a \emph{subset} of edges incident to vertices of degree at most $T$.

\begin{lemma}
\label{lemma:additive-cost}
    Let $\AlgCC'$ be a version of $\AlgCC$ that does not make singletons of light vertices on \cref{line:AlgCC-light-singleton} of \cref{alg:AlgCC}.
    Let $\myvec{\beta} \in \bbR_{\ge 0}^{V \times V}$ and $\myvec{\lambda} \in \bbR_{\ge 0}^V$ be two constant vectors, i.e., $\myvec{\beta} = \beta \myvec{1}$ and $\myvec{\lambda} = \lambda \myvec{1}$. Assume that $5 \beta + 2\lambda < 1$.
    Then, it holds
    \[
        \cost(\AlgCC'(\myvec{\beta}, \myvec{\lambda}, E_{\le T})) \le O(OPT/(\beta \lambda))  + O(n \cdot T /(1-4\beta)^3)\,,
    \]
    where OPT denotes the cost of the optimum clustering for the input graph.
\end{lemma}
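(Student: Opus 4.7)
The plan is to reduce to the approximation analysis of~\cite{cohen2021correlation} by treating the removal step $\Erem = E_{\le T}$ as a modification of the input graph. Let $G' := G \setminus E_{\le T}$. After its first line, the execution of $\AlgCC'(\myvec{\beta}, \myvec{\lambda}, E_{\le T})$ on $G$ is identical to that of $\AlgCC'(\myvec{\beta}, \myvec{\lambda}, \emptyset)$ on $G'$, so the two produce the same output clustering $C$.

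I would then establish three elementary bounds and combine them. First, $|E_{\le T}| \le nT$, since there are at most $n$ vertices of degree at most $T$ and each contributes at most $T$ incident edges. Second, $OPT_{G'} \le OPT$: evaluating the optimal clustering $C^*$ of $G$ on $G'$, removing edges of $E_{\le T} \subseteq E^+(G)$ can only decrease the number of $+$ edges crossing clusters of $C^*$. Third, $\cost_G(C) \le \cost_{G'}(C) + |E_{\le T}|$, because the two costs agree on all edges outside $E_{\le T}$ and each edge of $E_{\le T}$ contributes at most $1$ to the difference. The core step is to invoke the approximation guarantee of~\cite{cohen2021correlation} on the execution of $\AlgCC'$ on $G'$, which gives $\cost_{G'}(C) \le O(OPT_{G'}/(\beta\lambda))$ under the regime $5\beta + 2\lambda < 1$. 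Since $\AlgCC'$ differs from the algorithm of~\cite{cohen2021correlation} only by omitting the post-processing that converts light vertices into singletons, and their approximation bound is stated for the clustering induced by connected components of the sparsified graph before that post-processing, the guarantee transfers. Chaining the three bounds then yields $\cost_G(C) \le O(OPT/(\beta\lambda)) + O(nT)$.

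The main technical obstacle is recovering the precise additive term in the form $O(nT/(1-4\beta)^3)$. The factor $1/(1-4\beta)$ appears naturally in the Cohen-Addad et al.\ charging argument when relating $d(u)$ and $d(v)$ for pairs $(u,v)$ in agreement, and the cube likely arises from applying such comparisons at several stages once the $E_{\le T}$ edges are accounted for (once to compare degrees across an agreement, a second time through the lightness threshold, and a third time to absorb the $E_{\le T}$ slack itself). Since $\AlgCC'$ in this lemma uses uniform parameters $\myvec{\beta} = \beta\myvec{1}$ and $\myvec{\lambda} = \lambda\myvec{1}$, tracking these factors is routine bookkeeping given the reduction above, and no new conceptual idea beyond the reduction to $\AlgCC'$ on $G'$ is required.
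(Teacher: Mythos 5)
Your reduction to running $\AlgCC'$ on the modified instance $G' := G \setminus E_{\le T}$ is an attractive idea, but it has a fatal bug in the claim $\mathrm{OPT}_{G'} \le \mathrm{OPT}_G$. In the correlation-clustering setting the graph is complete, so removing a positive edge turns it into a \emph{negative} edge; it does not simply vanish. Evaluating the optimal clustering $C^*$ of $G$ on $G'$ therefore does not only lose the $+$ edges that cross clusters of $C^*$ (which, as you say, can only lower cost) --- it also \emph{gains} a unit of cost for every edge of $E_{\le T}$ whose endpoints lie inside the same cluster of $C^*$. For instance, if $G$ is entirely positive then $\mathrm{OPT}_G = 0$, yet after flipping even a single edge to negative $\mathrm{OPT}_{G'}$ is strictly positive. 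The correct bound is only $\mathrm{OPT}_{G'} \le \mathrm{OPT}_G + |E_{\le T}| \le \mathrm{OPT}_G + nT$. Plugging this corrected bound through your chain gives
\[
\cost_G(C) \le \cost_{G'}(C) + nT \le O\!\left(\frac{\mathrm{OPT}_{G'}}{\beta\lambda}\right) + nT \le O\!\left(\frac{\mathrm{OPT}_G}{\beta\lambda}\right) + O\!\left(\frac{nT}{\beta\lambda}\right),
\]
which has additive term $O(nT/(\beta\lambda))$ rather than the lemma's $O(nT/(1-4\beta)^3)$; for constant $\beta,\lambda$ these agree up to constants, so the downstream theorem would survive, but the lemma as stated is not recovered.

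There is also a second, unresolved issue that you yourself flag: you never actually derive the $(1-4\beta)^3$ factor, and the speculation about its origin is off. In the paper's argument that factor does not come from re-applying the approximation guarantee or from absorbing the $E_{\le T}$ slack; rather, it comes from a chain of degree-comparability bounds. The paper shows, using the $4\beta$-weak-agreement property of~\cite{cohen2021correlation}, that any two vertices in the same output cluster have degrees within a $(1-4\beta)^2$ factor, and that edges between a vertex of degree $\le T$ and a vertex of degree $\ge T/(1-\beta)+1$ are discarded by the agreement step anyway. Hence deleting $E_{\le T}$ cannot alter how any vertex of degree at least $T/(1-4\beta)^3$ is clustered, and the only extra loss is extra cut $+$ edges among the remaining $O(n)$ vertices of degree $< T/(1-4\beta)^3$, which is $O(nT/(1-4\beta)^3)$. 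In other words, the paper never treats $G\setminus E_{\le T}$ as a fresh correlation-clustering instance at all, which is what sidesteps the $\mathrm{OPT}_{G'}$ issue entirely. You should either adopt that direct argument, or, if you wish to keep your reduction, replace $\mathrm{OPT}_{G'}\le\mathrm{OPT}_G$ by the corrected $\mathrm{OPT}_{G'}\le\mathrm{OPT}_G + nT$ and state honestly that you obtain a bound with a different (weaker) dependence on $\beta$ and $\lambda$.
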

\begin{proof}
    Consider a non-singleton cluster $C$ output by $\AlgCC'(\myvec{\beta}, \myvec{\lambda}, \emptyset)$.
    Let $u$ be a vertex in $C$. We now show that for any $v \in C$, such that $u$ or $v$ is heavy, it holds that $d(v) \ge (1 - 4 \beta) d(u)$. To that end, we recall that in \cite[Lemma 3.3 of the arXiv version]{cohen2021correlation} was shown
    \begin{equation}\label{eq:4-weak-agreement}
        |N(u) \triangle N(v)| \le 4 \beta \max\{d(u), d(v)\}.
    \end{equation}
    Assume that $d(u) \ge d(v)$, as otherwise $d(v) \ge (1-4\beta) d(u)$ holds directly. Then, from \cref{eq:4-weak-agreement} we have
    \[
        d(u) - d(v) \le |N(u) \triangle N(v)| \le 4 \beta d(u),
    \]
    further implying
    \[
        d(v) \ge (1 - 4 \beta) d(u).
    \]
    Moreover, this provides a relation between $d(v)$ and $d(u)$ even if both vertices are light.
    To see that, fix any heavy vertex $z$ in the cluster. Any vertex $u$ has $d(u) \le d(z) / (1-4 \beta)$ and also $d(u) \ge (1-4 \beta) d(z)$. 
    This implies that if $u$ and $v$ belong to the same cluster than $d(u) \ge (1-4\beta)^2 d(v)$, even if both $u$ and $v$ are light.
    
    Let $E_{\le T}$ be a subset (any such) of edges incident to vertices with degree at most $T$.
    We will show that forcing $\AlgCC'$ to remove $E_{\le T}$ does not affect how vertices of degree at least $T / (1 - 4\beta)^3$ are clustered by $\AlgCC'$. To see that, observe that a vertex $x$ having degree at most $T$ and a vertex $y$ having degree at least $T/(1 - \beta) + 1$ are not in agreement. Hence, forcing $\AlgCC'$ to remove $E_{\le T}$ does not affect whether vertex $y$ is light or not.
    
    However, removing $E_{\le T}$ might affect whether a vertex $z$ with degree $T/(1-\beta) < T/(1 - 4\beta)$ is light or not.
    Nevertheless, from our discussion above, a vertex $y$ with degree at least $T/(1 - 4\beta)^3$ is not clustered together with $z$ by $\AlgCC'(\beta, \lambda, \emptyset)$, regardless of whether $z$ is heavy or light. 
    
    This implies that the cost of clustering vertices of degree at least $T / (1 - 4\beta)^3$ by $\AlgCC'(\beta, \lambda, E_{\le T})$ is upper-bounded by $\cost(\AlgCC'(\myvec{\beta}, \myvec{\lambda}, \emptyset)) \le O(OPT/(\beta \lambda))$. Notice that the inequality follows since $\AlgCC'(\myvec{\beta}, \myvec{\lambda}, \emptyset)$ is a $O(1/(\beta \lambda))$-approximation of $OPT$ and $\beta < 0.2$.
    
    It remains to account for the cost effect of $\AlgCC'(\myvec{\beta}, \myvec{\lambda}, E_{\le T})$ on the vertices of degree less than $T/(1 - 4 \beta)^3$. This part of the analysis follows from the fact that forcing $\AlgCC'$ to remove $E_{\le T}$ only reduces connectivity compared to the output of $\AlgCC'$ without removing $E_{\le T}$. 
    That is, in addition to removing edges even between vertices that might be in agreement, removal of $E_{\le T}$ increases a chance for a vertex to become light.
    Hence, the clusters of $\AlgCC'$ with removals of $E_{\le T}$ are only potentially further clustered compared to the output of $\AlgCC'$ without the removal. This means that $\AlgCC'$ with the removal of $E_{\le T}$ potentially cuts additional ``+'' edges, but it does not include additional ``-'' edges in the same cluster. Given that only vertices of degree at most $T/(1-4\beta)^3$ are affected, the number of additional ``+'' edges cut is $O(n \cdot T/(1-4\beta)^3)$.
    
    This completes the analysis.
\end{proof}

\begin{restatable}{lemma}{thmapproxproof}
\label{thm:approx-proof}
Let \cref{alg:main}' be a version of \cref{alg:main} that does not make singletons of light vertices on \cref{line:light-to-singletons}.
Assume that $5 \beta + 2\lambda < 1/1.1$ and also assume that $\beta$ and $\lambda$ are positive constants.
With probability at least $1 - n^{-2}$, \cref{alg:main}' provides a solution which has $O(1)$ multiplicative and $O\rb{n \cdot \rb{\tfrac{\log{n}}{\eps} + \tfrac{\log^2 n \cdot \log(1/\delta)}{\min(1, \eps^2)}}}$ additive approximation.
\end{restatable}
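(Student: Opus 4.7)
My plan is to reduce the analysis of \cref{alg:main}' to that of $\AlgCC'$ via the sandwiching machinery of \cref{lemma:sandwich-beta-lambda} and \cref{lemma:additive-cost}. The core observation is that, under a high-probability ``good'' event that bounds all Laplace noise, the noised-agreement check in \cref{line:noised_agreement} and the noised-lightness check in \cref{line:light} coincide with deterministic agreement and lightness checks whose parameters deviate from $\beta,\lambda$ by only a small additive slack, while the edges discarded because of a low-degree endpoint form a set $\Erem \subseteq E_{\le T}$ for a threshold $T = T_0 + O(\log n/\eps)$.

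Concretely, first I would define a good event $\mathcal{G}$ on which $|Z_v|,|Y_v| \le O(\log n/\eps)$ for every $v$, and
\[
|\cE_{u,v}| \le O\!\rb{\max\!\rb{\log n,\; \tfrac{\sqrt{\max(5,d(u),d(v))\,\log n\,\log(1/\delta)}}{\eps}}}
\]
for every $u \ne v$; by \cref{lem:folk} and a union bound over the $O(n^2)$ noise variables, $\prob{\mathcal{G}} \ge 1-n^{-2}$. Conditioning on $\mathcal{G}$, vertices in $H$ have true degree at least $T_1/2$ and vertices outside $H$ have true degree at most $T_{\text{hi}} := T_0 + O(\log n/\eps)$, so $\Erem \subseteq E_{\le T_{\text{hi}}}$. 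For any pair $u,v \in H$, the polylogarithmic lower bound on $T_1$ (condition~\eqref{eq:constraint-on-T_1}) gives
\[
\frac{|\cE_{u,v}|}{\max(d(u),d(v))} \;\le\; \frac{O(\log n\,\sqrt{\log(1/\delta)}/\eps)}{\sqrt{T_1}} \;\le\; \beta'
\]
for any prescribed small constant $\beta'$, and analogously the relative lightness-noise slack is at most a prescribed $\lambda'$. Hence the output of \cref{alg:main}' equals the output of $\AlgCC'(\myvec{\beta},\myvec{\lambda},\Erem)$ for some vectors with $\beta-\beta' \le \myvec{\beta}_{u,v} \le \beta+\beta'$ and $\lambda-\lambda' \le \myvec{\lambda}_v \le \lambda+\lambda'$.

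Choosing $\beta',\lambda'$ small enough that $5(\beta+\beta')+2(\lambda+\lambda')<1$ (feasible from $5\beta+2\lambda<1/1.1$), I would then apply \cref{lemma:sandwich-beta-lambda} with $\myvec{\beta^U}=(\beta+\beta')\myvec{1}$, $\myvec{\beta^L}=(\beta-\beta')\myvec{1}$, and analogously for $\myvec{\lambda}$, and invoke \cref{lemma:additive-cost} on each of the two resulting instances. This yields a clustering of cost at most
\[
O(\mathrm{OPT}/(\beta\lambda)) + O(n\cdot T_{\text{hi}}) \;=\; O(1)\cdot\mathrm{OPT} + O\!\rb{n\!\rb{\tfrac{\log n}{\eps}+\tfrac{\log^2 n\,\log(1/\delta)}{\min(1,\eps^2)}}},
\]
once we substitute the polylogarithmic bound $T_1 = O(\log^2 n\,\log(1/\delta)/\eps^2)$ stipulated in \cref{sec:algorithm}.

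The main obstacle is controlling the agreement noise $\cE_{u,v}$: since its magnitude scales like $\sqrt{\max(d(u),d(v))}\cdot\mathrm{polylog}(n)/\eps$, its relative effect on the agreement threshold only becomes a prescribed small constant once $\max(d(u),d(v)) \ge \Omega(\log^2 n\,\log(1/\delta)/\eps^2)$. This is exactly what forces the polylogarithmic lower bound on $T_1$ and, via the $O(n\cdot T_{\text{hi}})$ term coming out of \cref{lemma:additive-cost}, determines the dominant $n\log^2 n\,\log(1/\delta)/\eps^2$ contribution to the additive error. Everything else amounts to bookkeeping on top of the sandwiching and additive-cost lemmas already established.
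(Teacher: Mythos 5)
Your proposal takes essentially the same approach as the paper's proof: condition on a high-probability event bounding all Laplace noise, reinterpret the noised agreement and lightness checks as deterministic checks with slightly perturbed parameters, observe that the edges removed due to low (noised) degree form a subset of $E_{\le T}$ for $T = T_0 + O(\log n/\eps)$, and then apply \cref{lemma:sandwich-beta-lambda} followed by \cref{lemma:additive-cost}. The only cosmetic difference is that you frame the perturbations additively ($\beta \pm \beta'$, $\lambda \pm \lambda'$) whereas the paper frames them multiplicatively (the effective parameters lie in $[0.9\beta, 1.1\beta]$ and $[0.9\lambda, 1.1\lambda]$, which is why the hypothesis is phrased as $5\beta + 2\lambda < 1/1.1$); since $\beta, \lambda$ are positive constants these two framings are interchangeable, and in both cases the perturbed parameters still satisfy the $5\beta + 2\lambda < 1$ precondition of \cref{lemma:additive-cost}.
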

\begin{proof}
    We now analyze under which condition noised agreement and $\hl(v)$ can be seen as a slight perturbation of $\beta$ and $\lambda$. That will enable us to employ \cref{lemma:sandwich-beta-lambda,lemma:additive-cost} to conclude the proof of this theorem.
    \paragraph{Analyzing noised agreement.}
    Recall that a noised agreement (\cref{definition:noised-agreement}) states
    \[
        |N(u)\triangle N(v)|+\cE_{u,v} < \beta \cdot
\max(d(u), d(v)).
    \]
    This inequality can be rewritten as
    \[
        |N(u)\triangle N(v)| < \rb{1 - \frac{\cE_{u,v}}{\beta \cdot \max(d(u), d(v))}} \beta \cdot
\max(d(u), d(v)).
    \]
    As a reminder, $\cE_{u,v}$ is drawn from $\Lap(C_{u, v} \cdot \sqrt{\max(d(u), d(v)) \ln(1/\delta)} / \epsagr)$, where $C_{u, v}$ can be upper-bounded by $C = \sqrt{4 \epsagr + 1} + 1$.
    Let $b = C \cdot \sqrt{\max(d(u), d(v)) \ln(1/\delta)} / \epsagr$.
    From \cref{lem:folk} we have that
    \[
        \prob{|\cE_{u, v}| > 5 \cdot b \cdot \log{n}} \le n^{-5}.
    \]
    Therefore, with probability at least $1-n^{-5}$ we have that
    \[
        \left|\frac{\cE_{u,v}}{\beta \cdot \max(d(u), d(v))}\right| \le  \frac{5 \cdot \log n \cdot C \cdot \sqrt{\max(d(u), d(v)) \ln(1/\delta)} }{\epsagr \cdot \beta \cdot \max(d(u), d(v))} = \frac{5 \cdot \log n \cdot C \cdot \sqrt{\ln(1/\delta)}}{\epsagr \cdot \beta \cdot \sqrt{\max(d(u), d(v))}}
    \]
    Therefore, for $\max(d(u), d(v)) \ge \frac{2500 \cdot C^2 \cdot \log^2 n \cdot \log(1/\delta)}{\beta^2 \cdot \epsagr^2}$ we have that with probability at least $1-n^{-5}$ it holds
    \[
        1 - \frac{\cE_{u,v}}{\beta \cdot \max(d(u), d(v))} \in [9/10, 11/10].
    \]
    
    \paragraph{Analyzing noised $l(v)$.}
    As a reminder, $\hl(v) = l(v) + Y_v$, where $Y_v$ is drawn from $\Lap(8 / \eps)$. The condition $\hl(v) > \lambda d(v)$ can be rewritten as
    \[
        l(v) > \rb{1 - \frac{Y_v}{\lambda d(v)}} \lambda d(v).
    \]
    Also, we have
    \[
        \prob{|Y_v| > \frac{40 \log{n}}{\eps}} < n^{-5}.
    \]
    Hence, if $d(v) \ge \frac{400 \log{n}}{\lambda \eps}$ then with probability at least $1 -  n^{-5}$ we have that
    \[
        1 - \frac{Y_v}{\lambda d(v)} \in [9/10, 11/10].
    \]
    \paragraph{Analyzing noised degrees.}
    Recall that noised degree $\hd(v)$ is defined as $\hd(v) = d(v) + Z_v$, where $Z_v$ is drawn from $\Lap(8/\eps)$. From \cref{lem:folk} we have
    \[
        \prob{|Z_v| > \frac{40 \log{n}}{\eps}} < n^{-5}.
    \]
    Hence, with probability at least $1-n^{-5}$, a vertex of degree at least $T_0 + 40 \log{n} / \eps$ is in $H$ defined on \cref{line:noised_degree} of \cref{alg:main}. Also, with probability at least $1 - n^{-5}$ a vertex with degree less than $T_0 - 40 \log{n} / \eps$ is not in $H$.
    
    \paragraph{Combining the ingredients.}
    Define
    \[
        T' = \max\rb{\frac{400 \log{n}}{\lambda \eps}, \frac{2500 \cdot C^2 \cdot \log^2 n \cdot \log(1/\delta)}{\beta^2 \cdot \epsagr^2}}
    \]
    Our analysis shows that for a vertex $v$ such that $d(v) \ge T'$ the following holds with probability at least $1 - 2 n^{-5}$:
    \begin{enumerate}[(i)]
        \item\label{item:perturb-beta} The perturbation by $\cE_{u,v}$ in \cref{definition:noised-agreement} can be seen as multiplicatively perturbing $\myvec{\beta}_{u,v}$ by a number from the interval $[-1/10, 1/10]$.
        \item\label{item:perturb-lambda} The perturbation of $l(v)$ by $Y_v$ can be seen as multiplicatively perturbing $\myvec{\lambda}_v$ by a number from the interval $[-1/10, 1/10]$.
    \end{enumerate}
    Let $T = T_0 + \tfrac{40 \log n}{\eps}$.
    Let $T_0 \ge T' + \tfrac{40 \log n}{\eps}$. Note that this imposes a constraint on $T_1$, which is
    \begin{equation}\label{eq:constraint-on-T_1}
        T_1 \ge T' + \frac{40 \log n}{\eps} - \frac{8\log (16/\delta)}{\epsilon}.
    \end{equation}
    Then, following our analysis above, each vertex in $H$ has degree at least $T'$, and each vertex of degree at least $T$ is in $H$. Let $E_{\le T}$ be the set of edges incident to vertices which are not in $H$; these edges are effectively removed from the graph.
    Observe that for a vertex $u$ which do not belong to $H$ it is irrelevant what $\myvec{\beta}_{u, \cdot}$ values are or what $\myvec{\lambda}_u$ is, as all its incident edges are removed. 
    To conclude the proof, define $\myvec{\beta^L} = 0.9 \cdot \beta \cdot \myvec{1}$, $\myvec{\beta^U} = 1.1 \cdot \beta \cdot \myvec{1}$, $\myvec{\lambda^L} = 0.9 \cdot \lambda \cdot \myvec{1}$, and $\myvec{\lambda^U} = 1.1 \cdot \lambda \cdot \myvec{1}$. By \cref{lemma:sandwich-beta-lambda} and Properties~\ref{item:perturb-beta} and \ref{item:perturb-lambda} we have that
    \[
        \cost(\cref{alg:main}') \le \cost(\AlgCC(\myvec{\beta^L}, \myvec{\lambda^L}, E_{\le T})) + \cost(\AlgCC(\myvec{\beta^U}, \myvec{\lambda^U}, E_{\le T})).
    \]
    By \cref{lemma:additive-cost} the latter sum is upper-bounded by $O(OPT/(\beta \lambda))  + O(n \cdot T /(1-4\beta)^3)$. Note that we replace the condition $5 \beta + 2\lambda$ in the statement of \cref{lemma:additive-cost} by $5 \beta + 2\lambda < 1/1.1$ in this lemma so to account for the perturbations. Moreover, we can upper-bound $T$ by
    \[
        T \le O\rb{\frac{\log{n}}{\lambda \eps} + \frac{\log^2 n \cdot \log(1/\delta)}{\beta^2 \cdot \min(1, \eps^2)}}.
    \]
    In addition, all discussed bound hold across all events with probability at least $1 - n^{-2}$. This concludes the analysis.
\end{proof}
\cref{thm:approx-proof} does not take into account the cost incurred by making singleton-clusters from the light vertices, as performed on \cref{line:light-to-singletons} of \cref{alg:main}. The next claim upper-bounds that cost as well.

\begin{restatable}{lemma}{lemapprosingletons}
\label{lem:appro-singletons}
    Consider all lights vertices defined in \cref{line:light-to-singletons} of \cref{alg:main}. Assume that $5 \beta + 2\lambda < 1/1.1$. Then, with probability at least $1 - n^{-2}$, making as singleton clusters any subset of those light vertices increases the cost of clustering by $O(\mathrm{OPT}/(\beta \cdot \lambda)^2)$, where $\mathrm{OPT}$ denotes the cost of the optimum clustering for the input graph.
\end{restatable}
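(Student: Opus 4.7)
The plan is to bound the cost increase caused by turning a subset $S$ of light vertices into singletons by $\sum_{v \in S} d(v)$, and then to charge this sum to $\mathrm{OPT}$ by exploiting that every light vertex has lost a large fraction of its neighbors to disagreement, and such disagreements can themselves be charged to $\mathrm{OPT}$ in the spirit of \cite{cohen2021correlation}.

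First I would observe that if we take the clustering produced by \cref{alg:main}' and extract each $v \in S$ as its own singleton, the net cost change is upper-bounded by the number of ``$+$'' edges previously kept within a cluster that become cut. Each such edge has at least one endpoint in $S$, so the total is at most $\sum_{v \in S} d(v) \le \sum_{v \text{ light}} d(v)$; the ``$-$'' edges that were previously within a cluster and now disappear from the cost only help us. Thus it suffices to bound $\sum_{v\text{ light}} d(v)$.

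Next, I would import the noise concentration already used in the proof of \cref{thm:approx-proof}: with probability at least $1 - n^{-2}$, every $|Y_v|$ is $O(\log n/\epsilon)$. Hence any light vertex satisfies $l(v) \ge \lambda d(v) - O(\log n/\epsilon)$. I would then split light vertices into two groups. For those with $d(v)$ above a threshold $T^\star$ chosen so that $O(\log n / \epsilon) \le (\lambda/2) d(v)$, we obtain $d(v) \le (2/\lambda)\, l(v)$. For those with $d(v) < T^\star$, their total degree is at most $O(n T^\star)$, which is already of the order of the additive error in \cref{thm:approx-proof}. Summing, $\sum_{v\text{ light}} d(v) \le (2/\lambda)\sum_v l(v) + O(n T^\star)$, and $\sum_v l(v)$ equals twice the number of edges removed in \cref{line:noised_agreement}.

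Finally, I would bound the edges removed in \cref{line:noised_agreement} by splitting them into (i) those incident to a vertex not in $H$, whose total count is $O(n T)$ and is absorbed into the additive error, and (ii) edges between two vertices of $H$ that are not in noised agreement. Under the high-probability noise bound, for edges in group (ii), the Laplace term $\mathcal{E}_{u,v}$ is dominated by $(\beta/2)\max(d(u),d(v))$ (using that such $u,v$ have polylogarithmic degree at least $T^\star$, and the noise is only of order $\sqrt{\max(d(u),d(v))\ln(1/\deltagr)}/\epsagr$), so $|N(u)\triangle N(v)| \ge (\beta/2)\max(d(u),d(v))$. At that point the charging argument from \cite{cohen2021correlation} (essentially Lemma~3.3 of the arXiv version, which shows that each such ``weakly disagreeing'' positive pair forces $\Omega(\beta \max(d(u),d(v)))$ disagreements for any optimum clustering on the induced triangles) gives that group (ii) has size $O(\mathrm{OPT}/\beta)$. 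Combining yields $\sum_{v\text{ light}} d(v) = O(\mathrm{OPT}/(\beta\lambda))$ up to terms already absorbed by the additive error, which is in fact stronger than the claimed $O(\mathrm{OPT}/(\beta\lambda)^2)$.

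The main obstacle is step (ii): transferring the charging lemma of \cite{cohen2021correlation}, which is stated for deterministic $\beta$-agreement, to the noised-agreement setting without losing the multiplicative dependency on $\beta$. This is the reason for the mild constant-factor slack (the $\beta/2$ instead of $\beta$) and for having to separate out the low-degree group (i) whose noise would otherwise dominate; both effects are harmless here because the slack is absorbed into the $1/(\beta\lambda)^2$ bound and the low-degree contribution is dominated by the additive error already present in \cref{thm:approx-proof}.
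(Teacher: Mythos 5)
Your proposal takes a genuinely different route from the paper. The paper's proof is per-vertex: for each light vertex $v$ placed in a non-singleton cluster $C$, it splits into two cases according to whether $v$ has at least a $\myvec{\lambda}_v/2$ fraction of neighbors outside $C$ (in which case the clustering is \emph{already} paying $\Omega(\lambda d(v))$ for $v$, so turning $v$ into a singleton increases the cost by only an $O(1/\lambda)$ factor) or not (in which case $\Omega(\lambda d(v))$ neighbors of $v$ inside $C$ disagree with $v$, which yields a multiset $Z(v)$ of $\Omega(\beta\lambda\, d(v)^2)$ edges the clustering is \emph{currently} paying for; distributing the singleton cost $d(v)$ over $Z(v)$ and bounding the multiplicity of each edge by $O(d(v))$ gives an $O(1/(\beta\lambda))$ charge per paid edge). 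Crucially the paper charges against the \emph{current} clustering's cost, which is itself $O(\mathrm{OPT}/(\beta\lambda))$, hence the final $O(\mathrm{OPT}/(\beta\lambda)^2)$ bound. Your approach is more global: you reduce everything to bounding $\sum_v l(v)$, i.e.\ twice the number of edges removed in \cref{line:noised_agreement}, and charge directly against $\mathrm{OPT}$, which is why you land on the stronger-looking $O(\mathrm{OPT}/(\beta\lambda))$. The bookkeeping in your first steps (cost increase $\le \sum_{v \in S} d(v)$, lightness $\Rightarrow d(v) \le (2/\lambda) l(v)$ above a polylogarithmic degree threshold, the $O(nT^\star)$ low-degree leftover) is fine.

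The gap is in the last step. You assert that the number of disagreeing (post-noise) edges between $H$-vertices is $O(\mathrm{OPT}/\beta)$, citing Lemma~3.3 of \cite{cohen2021correlation}. That lemma (as used in this paper, see the display preceding Eq.~\eqref{eq:4-weak-agreement}) is the \emph{structural} statement that vertices landing in the same non-singleton cluster are in weak agreement, i.e.\ it upper-bounds $|N(u)\triangle N(v)|$ for such pairs; it is not a charging lemma of the form ``each weakly disagreeing positive pair forces $\Omega(\beta\max(d(u),d(v)))$ disagreements in $\mathrm{OPT}$,'' and it does not directly bound the count of removed edges. To make your step go through you would need to spell out the bad-triangle charging with its multiplicity analysis: for a removed edge $(u,v)$ with $|N(u)\triangle N(v)| \ge (\beta/2)\max(d(u),d(v))$, each $z$ in the symmetric difference gives a bad triangle $\{u,v,z\}$ in which $\mathrm{OPT}$ has at least one disagreement, and then one must argue that a fixed $\mathrm{OPT}$-disagreement $(a,b)$ can absorb charge from at most $O(d(a)+d(b))$ pairs $(\text{removed edge},z)$, so that after dividing each removed edge's contribution by $(\beta/2)\max(d(u),d(v))$ the per-disagreement total is $O(1/\beta)$. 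That argument seems to work, but it is a nontrivial step and the reference you give is not a substitute for it. As written, the proposal has a concrete hole at exactly the point where all of the $\mathrm{OPT}$-dependence enters.

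Two smaller remarks. First, you would also have to handle the low-degree and non-$H$ vertices as you sketch, and note that with your convention $\sum_v l(v)$ already counts edges to non-$H$ endpoints, so the $O(nT)$ contribution has to be split off before applying the charging. Second, if your last step can in fact be made rigorous, you would get $O(\mathrm{OPT}/(\beta\lambda))$ rather than the paper's $O(\mathrm{OPT}/(\beta\lambda)^2)$; since $\beta,\lambda$ are constants this does not change \cref{thm:approx_main}, but it is a genuinely different and cleaner route if completed.
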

\begin{proof}
    Consider first a single light vertex $v$ which is not a singleton cluster.
    Let $C$ be the cluster of $\hG'$ that $v$ initially belongs to. We consider two cases. First, recall that from our proof of \cref{thm:approx-proof} that, with probability at least $1 - n^{-2}$, we have that $0.9 \lambda \le \myvec{\lambda}_v \le 1.1 \lambda$ and $0.9 \beta \le \myvec{\beta}_{u, v} \le 1.1 \beta$, where $\myvec{\lambda}$ and $\myvec{\beta}$ are inputs to \AlgCC.
    
    \paragraph{Case 1: $v$ has at least $\myvec{\lambda}_v/2$ fraction of neighbors outside $C$.}
        In this case, the cost of having $v$ in $C$ is already at least $d(v) \cdot \myvec{\lambda}_v/2 \ge d(v) \cdot 0.9 \cdot \lambda/2$, while having $v$ as a singleton has cost $d(v)$.
        
    \paragraph{Case 2: $v$ has less then $\myvec{\lambda}_v/2$ fraction of neighbors outside $C$.}
        Since $v$ is not in agreement with at least $\myvec{\lambda}_v$ fraction of its neighbors, this case implies that at least $\myvec{\lambda}_v/2 \ge 0.9 \cdot \lambda / 2$ fraction of those neighbors are in $C$. We now develop a charging arguments to derive the advertised approximation.
        
        Let $x \in C$ be a vertex that $v$ is not in a agreement with. Then, for a fixed $x$ and $v$ in \emph{the same} cluster of $\hG'$, there are at least $O(d(v) \beta)$ vertices $z$ (incident to $x$ or $v$, but not to the other vertex) that the current clustering is paying for. In other words, the current clustering is paying for edges of the form $\{z, x\}$ and $\{z, v\}$; as a remark, $z$ does not have to belong to $C$. Let $Z(v)$ denote the \emph{multiset} of all such edges for a given vertex $v$. We charge each edge in $Z(v)$ by $O(1/(\beta \lambda))$.
        
        On the other hand, making $v$ a singleton increases the cost of clustering by at most $d(v)$. We now want to argue that there is enough charging so that we can distribute the cost $d(v)$ (for making $v$ a singleton cluster) over $Z(v)$ and, moreover, do that for all light vertices $v$ simultaneously.
        There are at least $O(\beta \cdot d(v) \cdot \lambda \cdot d(v))$ edges in $Z(v)$; recall that $Z(v)$ is a multiset. We distribute uniformly the cost $d(v)$ (for making $v$ a singleton) across $Z(v)$, incurring $O(1 / (\beta \cdot \lambda \cdot d(v)))$ cost per an element of $Z(v)$.
        
        Now it remains to comment on how many times an edge appears in the union of all $Z(\cdot)$ multisets. Edge $z_e = \{x, y\}$ in included in $Z(\cdot)$ when $x$ and its neighbor, or $y$ and its neighbor are considered. Moreover, those neighbors belong to the same cluster of $\hG'$ and hence have similar degrees (i.e., as shown in the proof of \cref{lemma:additive-cost}, their degrees differ by at most $(1-4\beta)^2$ factor).
        Hence, an edge $z_e \in Z(v)$ appears $O(d(v))$ times across all $Z(\cdot)$, which concludes our analysis.
\end{proof}
Combining \cref{thm:approx-proof,lem:appro-singletons}, we derive our final approximation guarantee.
\begin{theorem} \label{thm:approx_main}
    Assume that $5 \beta + 2\lambda < 1/1.1$ and also assume that $\beta$ and $\lambda$ are positive constants. Then, with probability at least $1 - n^{-2}$
    \[
        \cost(\cref{alg:main}) \le O(\mathrm{OPT}) + O\rb{n \cdot \rb{\frac{\log{n}}{\eps} + \tfrac{\log^2 n \cdot \log(1/\delta)}{\min(1, \eps^2)}}}.
    \]
\end{theorem}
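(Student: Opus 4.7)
The plan is to obtain \cref{thm:approx_main} by a near-mechanical combination of the two ingredients already established: \cref{thm:approx-proof}, which analyzes the cost of \cref{alg:main}' (the variant that does \emph{not} turn light vertices into singletons), and \cref{lem:appro-singletons}, which controls the additional cost incurred by the singleton step on \cref{line:light-to-singletons} of \cref{alg:main}. The assumption $5\beta + 2\lambda < 1/1.1$ is exactly what both lemmas require, and the positivity and constancy of $\beta, \lambda$ will let us fold the $(\beta\lambda)$-dependent multiplicative blowups into $O(1)$.

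First, I would invoke \cref{thm:approx-proof} to obtain that, with probability at least $1 - n^{-2}$,
\[
    \cost(\text{\cref{alg:main}}') \le O(1) \cdot \mathrm{OPT} + O\!\left( n \cdot \left( \frac{\log n}{\eps} + \frac{\log^2 n \cdot \log(1/\delta)}{\min(1,\eps^2)} \right) \right).
\]
Next, I would apply \cref{lem:appro-singletons} (also with probability at least $1 - n^{-2}$) to argue that the extra cost paid by \cref{alg:main} for converting any subset of the light vertices into singleton clusters is at most $O(\mathrm{OPT}/(\beta\lambda)^2)$. Since $\beta$ and $\lambda$ are positive constants by assumption, this extra term is itself $O(\mathrm{OPT})$, and can be absorbed into the multiplicative part of the guarantee.

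The final step is to union-bound the two bad events, each of probability at most $n^{-2}$. The combined failure probability is at most $2 n^{-2}$; absorbing the factor of $2$ into slightly tightened constants (or re-stating the high-probability threshold as $1 - O(n^{-2})$ and rescaling) gives the advertised $1 - n^{-2}$ success probability. Summing the two cost bounds on the complementary event yields
\[
    \cost(\text{\cref{alg:main}}) \le O(\mathrm{OPT}) + O\!\left( n \cdot \left( \frac{\log n}{\eps} + \frac{\log^2 n \cdot \log(1/\delta)}{\min(1,\eps^2)} \right) \right),
\]
which is exactly the statement of \cref{thm:approx_main}.

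There is essentially no main obstacle here, as the hard work has been done in \cref{thm:approx-proof} and \cref{lem:appro-singletons}; the only subtlety worth double-checking is that the two high-probability events are defined over (parts of) the same random experiment, so their union bound is valid, and that the cost of \cref{alg:main} really does decompose into the cost of \cref{alg:main}' plus the singleton-conversion penalty handled by \cref{lem:appro-singletons} (which is immediate from the definitions, since \cref{alg:main}' differs from \cref{alg:main} only by the singleton step).
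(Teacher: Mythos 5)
Your proposal is correct and matches the paper's own (very terse) proof, which simply states that the theorem follows by combining \cref{thm:approx-proof} and \cref{lem:appro-singletons}. You fill in exactly the intended details: the additive composition of the two cost bounds, the absorption of the $O(\mathrm{OPT}/(\beta\lambda)^2)$ singleton penalty into the $O(\mathrm{OPT})$ term using the constancy of $\beta,\lambda$, and the union bound over the two failure events.
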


\subsection{Proof of \cref{corollary:same-cluster-different-clusters}}
\label{sec:corollary:same-cluster-different-clusters}
First, we prove the following claim.
\begin{lemma}\label{lemma:implications-betaL-betaU}
    Let $\myvec{\beta^L}, \myvec{\beta^U} \in \bbR_{\ge 0}^{V \times V}$ and $\myvec{\lambda^L}, \myvec{\lambda^U} \in \bbR_{\ge 0}^V$ such that $\myvec{\beta^U} \ge \myvec{\beta^L}$ and $\myvec{\lambda^U} \ge \myvec{\lambda^L}$. Let $\Erem$ be a subset of edges. Then, the following holds:
    \begin{enumerate}[(A)]
        \item\label{item:light} If $v$ is light in $\AlgCC(\myvec{\beta^U}, \myvec{\lambda^U}, \Erem)$, then $v$ is light in $\AlgCC(\myvec{\beta^L}, \myvec{\lambda^L}, \Erem)$.
        \item\label{item:heavy} If $v$ is heavy in $\AlgCC(\myvec{\beta^L}, \myvec{\lambda^L}, \Erem)$, then $v$ is heavy in $\AlgCC(\myvec{\beta^U}, \myvec{\lambda^U}, \Erem)$.
        \item\label{item:e-removed} If an edge $e$ is removed in $\AlgCC(\myvec{\beta^U}, \myvec{\lambda^U}, \Erem)$, then $e$ is removed in $\AlgCC(\myvec{\beta^L}, \myvec{\lambda^L}, \Erem)$ as well.
        \item\label{item:e-remains} If an edge $e$ remains in $\AlgCC(\myvec{\beta^L}, \myvec{\lambda^L}, \Erem)$, then $e$ remains in $\AlgCC(\myvec{\beta^U}, \myvec{\lambda^U}, \Erem)$ as well.
    \end{enumerate}
    
\end{lemma}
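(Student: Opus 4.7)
The plan is to prove the four implications in the order \eqref{item:e-removed}, \eqref{item:e-remains}, \eqref{item:light}, \eqref{item:heavy}, since the statements about lightness rely on monotonicity of $l(v)$ with respect to which edges the agreement step discards. In both executions $\AlgCC(\myvec{\beta^L}, \myvec{\lambda^L}, \Erem)$ and $\AlgCC(\myvec{\beta^U}, \myvec{\lambda^U}, \Erem)$ the first step removes the same edge set $\Erem$; hence, after this step, the neighborhoods $N(u)$, $N(v)$ and degrees $d(u)$, $d(v)$ used in the agreement test of the next step are identical in the two executions. This is the key observation from which monotonicity follows.

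For \eqref{item:e-removed} and \eqref{item:e-remains}, fix an edge $e = \{u,v\}$ surviving the removal of $\Erem$. Whether $e$ is discarded in the agreement step depends only on the inequality $|N(u)\triangle N(v)| < \myvec{\beta}_{u,v} \cdot \max(d(u), d(v))$. Since the left-hand side and $\max(d(u), d(v))$ are identical in the two executions, and $\myvec{\beta^U}_{u,v} \ge \myvec{\beta^L}_{u,v}$, the inequality is at least as easy to satisfy under $\myvec{\beta^U}$ as under $\myvec{\beta^L}$. In particular, if the edge is removed under $\myvec{\beta^U}$ (meaning the inequality fails under the larger threshold), then it must also fail under the smaller threshold and hence be removed under $\myvec{\beta^L}$; this gives \eqref{item:e-removed}. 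The contrapositive gives \eqref{item:e-remains}.

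For \eqref{item:light} and \eqref{item:heavy}, denote by $l^L(v)$ and $l^U(v)$ the quantity $l(v)$ in the two executions. A direct consequence of \eqref{item:e-removed} is that every edge incident to $v$ that is discarded in the execution with $\myvec{\beta^U}$ (either by $\Erem$ or by the disagreement step) is also discarded in the one with $\myvec{\beta^L}$, so $l^L(v) \ge l^U(v)$. Combined with $\myvec{\lambda^L}_v \le \myvec{\lambda^U}_v$ and the fact that $d(v)$ is the same in both executions (it is the degree in the input graph $G$, independent of what is removed later), we deduce: if $l^U(v) > \myvec{\lambda^U}_v \cdot d(v)$, then $l^L(v) \ge l^U(v) > \myvec{\lambda^U}_v \cdot d(v) \ge \myvec{\lambda^L}_v \cdot d(v)$, proving \eqref{item:light}. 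The contrapositive, using the same chain in reverse, gives \eqref{item:heavy}.

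I expect no substantial obstacle here; the whole argument is pure monotonicity, and the only subtlety is making sure that $N(u)$, $N(v)$ and $d(u)$, $d(v)$ are defined identically in the two executions. This is guaranteed because the first step (removing $\Erem$) is identical in both, and $d(v)$ in the lightness test refers to the original input degree (as in \cref{alg:main}), so the parameters of comparison do not themselves depend on $\myvec{\beta}$ or $\myvec{\lambda}$.
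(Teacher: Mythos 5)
There is a genuine gap in the treatment of Properties~\eqref{item:e-removed} and \eqref{item:e-remains}. In \cref{alg:AlgCC}, an edge $e = \{u,v\}$ is removed for one of \emph{three} reasons: $e \in \Erem$, or $u,v$ are not in agreement, or both $u$ and $v$ are light (this last removal happens in the step that produces the sparsified graph $\hG$). Your proof of \eqref{item:e-removed} only addresses the first two causes; the parenthetical "(meaning the inequality fails under the larger threshold)" explicitly restricts attention to the agreement step. Consequently, if an edge survives the agreement step under $\myvec{\beta^U}$ but both its endpoints become light under $\myvec{\lambda^U}$, your argument says nothing about that edge in the $\myvec{\beta^L}, \myvec{\lambda^L}$ run. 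Since Property~\eqref{item:e-remains} is used downstream (in \cref{corollary:same-cluster-different-clusters}) to argue about connectivity in the final sparsified graph $\hG$ --- after the light-light removal --- this case cannot be dropped.

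The fix requires reversing your proof order. The paper proves \eqref{item:light} and \eqref{item:heavy} first (using only the agreement monotonicity you correctly establish, since $l(v)$ only counts edges removed by $\Erem$ and the agreement step, not the light-light step). With \eqref{item:light} in hand, the light-light case of \eqref{item:e-removed} closes: if $e$ is removed in the $\myvec{\beta^U}, \myvec{\lambda^U}$ run because both endpoints are light, then by \eqref{item:light} both endpoints are light in the $\myvec{\beta^L}, \myvec{\lambda^L}$ run too, so $e$ is removed there as well. Your chosen order \eqref{item:e-removed}, \eqref{item:e-remains}, \eqref{item:light}, \eqref{item:heavy} cannot be repaired without going back to \eqref{item:e-removed} after \eqref{item:light}; what you actually proved under the label \eqref{item:e-removed} is only agreement-step monotonicity, which is the correct ingredient but not the full statement. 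Your proofs of \eqref{item:light}, \eqref{item:heavy} themselves (establishing $l^L(v) \ge l^U(v)$ and comparing to $\myvec{\lambda}_v d(v)$) are fine and match the paper's argument.
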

\begin{proof}
    Observe that $|N(u) \triangle N(v)| \le \myvec{\beta^L}_{u,v} \max\{d(u), d(v)\}$ implies $|N(u) \triangle N(v)| \le \myvec{\beta^U}_{u, v} \max\{d(u), d(v)\}$ as $\myvec{\beta^L}_{u,v} \le \myvec{\beta^U}_{u,v}$.
    Hence, if $u$ and $v$ are in agreement in $\AlgCC(\myvec{\beta^L}, \myvec{\lambda^L}, \Erem)$, then $u$ and $v$ are in agreement in $\AlgCC(\myvec{\beta^U}, \myvec{\lambda^U}, \Erem)$ as well. Similarly, if $u$ and $v$ are not in agreement in $\AlgCC(\myvec{\beta^U}, \myvec{\lambda^U}, \Erem)$, then $u$ and $v$ are not in agreement in $\AlgCC(\myvec{\beta^L}, \myvec{\lambda^L}, \Erem)$ as well.
    These observations immediately yield Properties~\ref{item:light} and \ref{item:heavy}. 
    
    To prove Properties~\ref{item:e-removed} and \ref{item:e-remains}, observe that an edge $e = \{u, v\}$ is removed from a graph if $u$ and $v$ are not in agreement, or if $u$ and $v$ are light, or if $e \in \Erem$. From our discussion above and from Property~\ref{item:light}, if $e$ is removed from $\AlgCC(\myvec{\beta^U}, \myvec{\lambda^U}, \Erem)$, then $e$ is removed from $\AlgCC(\myvec{\beta^L}, \myvec{\lambda^L}, \Erem)$ as well. On the other hand, $e \notin \Erem$ remains in $\AlgCC(\myvec{\beta^L}, \myvec{\lambda^L}, \Erem)$ if $u$ and $v$ are in agreement, and if $u$ or $v$ is heavy. Property~\ref{item:heavy} and our discussion about vertices in agreement imply Property~\ref{item:e-remains}.\footnote{Also, by contraposition, Property~\ref{item:e-remains} follows from Property~\ref{item:e-removed} and Property~\ref{item:heavy} follows from Property~\ref{item:light}.}
\end{proof}
As a corollary, we obtain the proof of \cref{corollary:same-cluster-different-clusters}.
\corollarysamecluster*
\begin{proof}
    \begin{enumerate}[(i)]
        \item\ Consider a path $P$ between $u$ and $v$ that makes them being in the same cluster/component in $\AlgCC(\myvec{\beta^L}, \myvec{\lambda^L}, \Erem)$. Then, by \cref{lemma:implications-betaL-betaU}~\ref{item:e-remains} $P$ remains in $\AlgCC(\myvec{\beta^U}, \myvec{\lambda^U}, \Erem)$ as well. Hence, $u$ and $v$ are in the same cluster of $\AlgCC(\myvec{\beta^U}, \myvec{\lambda^U}, \Erem)$.
    
    \item Follows from Property~\ref{item:u-and-v-same-cluster} by contraposition.
    \end{enumerate}
\end{proof}

\section{Lower bound}\label{sec:lower_bound}

In this section
we show that any private algorithm for correlation clustering must incur at least $\Omega(n)$ additive error in the approximation guarantee,
regardless of its multiplicative approximation ratio.
The following is a restatement of \cref{thm:lower_bound}.
\begin{theorem}\label{thm:lower_bound2}
    Let $\cA$ be an $(\epsilon,\delta)$-DP algorithm for correlation clustering on unweighted complete graphs,
    where $\epsilon \le 1$ and $\delta \le 0.1$.
    Then the expected cost of $\cA$ is at least $n/20$, even when restricted to instances whose optimal cost is $0$.
\end{theorem}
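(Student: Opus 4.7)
The plan is a packing/averaging argument over subsets of a perfect matching. Assume for simplicity that $n$ is even. Pair the vertices as $(2i-1, 2i)$ for $i \in [n/2]$, and for every $S \subseteq [n/2]$ define the instance $G_S$ to be the complete signed graph in which the edge $(2i-1,2i)$ is positive exactly when $i \in S$ and every other edge is negative. The clustering $\{\{2i-1, 2i\} : i \in S\} \cup \{\{v\} : v \text{ not matched in } S\}$ witnesses $\mathrm{OPT}(G_S) = 0$, so every instance in the family is admissible for the theorem.

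Given an algorithm $\cA$, let $X_i(G) \in \{0,1\}$ be the indicator that $\cA(G)$ places $2i-1$ and $2i$ in the same cluster. Keeping only the cost contributions of the matching edges gives
\[
    \mathrm{cost}(\cA(G_S)) \ge \sum_{i \in S}(1 - X_i(G_S)) + \sum_{i \notin S} X_i(G_S).
\]
I would then draw $S$ uniformly from $2^{[n/2]}$. For each coordinate $i$, condition on $S_i := S \setminus \{i\}$ and set $a_i := \mathbb{E}[X_i(G_{S_i \cup \{i\}})]$ and $b_i := \mathbb{E}[X_i(G_{S_i})]$. Since $G_{S_i \cup \{i\}}$ and $G_{S_i}$ differ in exactly the single edge $(2i-1, 2i)$, they are adjacent, so $(\epsilon,\delta)$-DP yields $a_i \le e^{\epsilon} b_i + \delta$ and symmetrically $b_i \le e^{\epsilon} a_i + \delta$.

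The core calculation is to lower-bound the per-pair contribution
\[
    \tfrac{1}{2}(1 - a_i) + \tfrac{1}{2} b_i = \tfrac{1}{2}\bigl(1 - (a_i - b_i)\bigr).
\]
A short two-variable optimization over $a_i, b_i \in [0,1]$ subject to the DP constraints shows that $a_i - b_i \le 1 - (1-\delta)\,e^{-\epsilon}$; the maximum is attained at $a_i = 1$, $b_i = (1-\delta)e^{-\epsilon}$. Averaging the per-pair bound over $S$ and summing over $i \in [n/2]$ yields
\[
    \mathbb{E}_S\bigl[\mathbb{E}[\mathrm{cost}(\cA(G_S))]\bigr] \;\ge\; \frac{n(1-\delta)\,e^{-\epsilon}}{4} \;\ge\; \frac{0.9\,n}{4e} \;>\; \frac{n}{20}
\]
for $\epsilon \le 1$ and $\delta \le 0.1$. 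By averaging, some $S^\ast$ achieves $\mathbb{E}[\mathrm{cost}(\cA(G_{S^\ast}))] \ge n/20$, and since $\mathrm{OPT}(G_{S^\ast}) = 0$ the theorem follows.

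The conceptual obstacle is choosing the right family of instances: a naive two-instance argument against a single adjacent pair $G_{\emptyset}$ vs.\ $G_{\{1\}}$ only gives an $\Omega(1)$ lower bound, because the DP guarantee only bounds distinguishability on a single bit. The linear factor comes from packing $n/2$ disjoint one-bit obstructions into the same family, so that an $\Omega(1)$ slack on each pair adds up linearly. Once the family is in place, the only technical work is the DP-constrained scalar optimization above, which is routine.
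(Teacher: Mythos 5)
Your proof is correct and uses essentially the same construction and argument as the paper: the same family of matching-based instances with $\mathrm{OPT}=0$, the same pairing of adjacent instances differing in one matching edge, the same one-sided DP inequality, and an averaging step over all $2^{n/2}$ instances. The only difference is organizational — you do the per-pair scalar optimization $a_i - b_i \le 1 - (1-\delta)e^{-\epsilon}$ directly and average, while the paper sums the inequalities and derives a numerical contradiction — but the two are equivalent.
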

\begin{proof}
    \newcommand{\largervector}[3]{#3[#1 \leftarrow #2]}
    \newcommand{\marg}[2]{p^{(#1)}_{#2}}
    \newcommand{\marginal}[3]{\marg{#1}{\largervector{#1}{#2}{#3}}}
    Fix an even number $n = 2m$ of vertices and consider the fixed perfect matching $(1,2)$, $(3,4)$, $\ldots, (2m-1,2m)$.
    For every vector $\tau \in \{0,1\}^m$ we consider the instance $I_\tau$ obtained by having plus-edges $(2i-1,2i)$ for those $i=1,...,m$ where $\tau_i = 1$ (and minus-edges for $i$ with $\tau_i = 0$, as well as everywhere outside this perfect matching).
    Note that this instance is a complete unweighted graph and has optimal cost $0$.
    
    For $\tau \in \{0,1\}^m$ and $i \in \{1,...,m\}$ define $\marg{i}{\tau}$ to be the marginal probability that vertices $2i-1$ and $2i$ are in the same cluster when $\cA$ is run on the instance $I_\tau$.
    
    Finally,
    for $\sigma \in \{0,1\}^{m-1}$, $i \in \{1,...,m\}$ and $b \in \{0,1\}$ let $\largervector{i}{b}{\sigma}$ be the vector $\sigma$ with the bit $b$ inserted at the $i$-th position to obtain an $m$-dimensional vector (note that $\sigma$ is $(m-1)$-dimensional).
    Note that $I_{\largervector{i}{0}{\sigma}}$ and $I_{\largervector{i}{1}{\sigma}}$ are adjacent instances.
    Thus $(\epsilon,\delta)$-privacy gives
    \begin{equation}
        \label{eq:lb_privacy}
        \marginal{i}{1}{\sigma} \le e^\epsilon \cdot \marginal{i}{0}{\sigma} + \delta
    \end{equation}
    for all $i$ and $\sigma$.
    
    Towards a contradiction assume that $\cA$ achieves expected cost at most $0.05 n = 0.1 m$ on every instance $I_\tau$.
    In particular, the expected cost on the matching minus-edges is at most $0.1 m$, i.e.,
    \[ 0.1 m \ge \sum_{i : \tau_i = 0} \marg{i}{\tau} \,.  \]
    Summing this up over all vectors $\tau \in \{0,1\}^m$ we get
    \begin{equation}
        \label{eq:pi0sigma}
        2^m \cdot 0.1 m \ge \sum_{\tau \in \{0,1\}^m} \sum_{i : \tau_i = 0} \marg{i}{\tau} = \sum_i \sum_{\sigma \in \{0,1\}^{m-1}} \marginal{i}{0}{\sigma}
    \end{equation}
    and similarly since the expected cost on the matching plus-edges is at most $0.1 m$, we get
    \begin{align*}
    2^m \cdot 0.1 m
    &\ge \sum_{\tau \in \{0,1\}^m} \sum_{i : \tau_i = 1} (1 - \marg{i}{\tau}) \\
    &= \sum_i \sum_{\sigma \in \{0,1\}^{m-1}} (1 - \marginal{i}{1}{\sigma}) \\
    &\overset{\eqref{eq:lb_privacy}}{\ge} \sum_i \sum_{\sigma \in \{0,1\}^{m-1}} (1 - e^\epsilon \cdot \marginal{i}{0}{\sigma} - \delta) \\
    &= (1 - \delta) \cdot m \cdot 2^{m-1} - e^\epsilon \cdot \sum_i \sum_{\sigma \in \{0,1\}^{m-1}} \marginal{i}{0}{\sigma} \\
    &\overset{\eqref{eq:pi0sigma}}{\ge} (1 - \delta) \cdot m \cdot 2^{m-1} - e^\epsilon \cdot 2^m \cdot 0.1 m \\
    &\ge 0.45 \cdot m \cdot 2^m - 0.1 e \cdot 2^m \cdot m \,.
    \end{align*}
    Dividing by $2^m \cdot m$ gives $0.1 \ge 0.45 - 0.1e$,
    which is a contradiction.
\end{proof}

\bibliographystyle{alpha} 
\bibliography{references}

\newcommand{\etalchar}[1]{$^{#1}$}
\begin{thebibliography}{KCMNT08}

\bibitem[ACN08]{ailon2008aggregating}
Nir Ailon, Moses Charikar, and Alantha Newman.
\newblock Aggregating inconsistent information: ranking and clustering.
\newblock {\em Journal of the ACM (JACM)}, 55(5):1--27, 2008.

\bibitem[AHK{\etalchar{+}}09]{agrawal2009generating}
Rakesh Agrawal, Alan Halverson, Krishnaram Kenthapadi, Nina Mishra, and
  Panayiotis Tsaparas.
\newblock Generating labels from clicks.
\newblock In {\em Proceedings of the Second ACM International Conference on Web
  Search and Data Mining}, pages 172--181, 2009.

\bibitem[ARS09]{arasu2009large}
Arvind Arasu, Christopher R{\'e}, and Dan Suciu.
\newblock Large-scale deduplication with constraints using dedupalog.
\newblock In {\em 2009 IEEE 25th International Conference on Data Engineering},
  pages 952--963. IEEE, 2009.

\bibitem[AU19]{DBLP:conf/nips/AroraU19}
Raman Arora and Jalaj Upadhyay.
\newblock On differentially private graph sparsification and applications.
\newblock In Hanna~M. Wallach, Hugo Larochelle, Alina Beygelzimer, Florence
  d'Alch{\'{e}}{-}Buc, Emily~B. Fox, and Roman Garnett, editors, {\em Advances
  in Neural Information Processing Systems 32: Annual Conference on Neural
  Information Processing Systems 2019, NeurIPS 2019, December 8-14, 2019,
  Vancouver, BC, Canada}, pages 13378--13389, 2019.

\bibitem[AW22]{DBLP:conf/innovations/Assadi022}
Sepehr Assadi and Chen Wang.
\newblock Sublinear time and space algorithms for correlation clustering via
  sparse-dense decompositions.
\newblock In Mark Braverman, editor, {\em 13th Innovations in Theoretical
  Computer Science Conference, {ITCS} 2022, January 31 - February 3, 2022,
  Berkeley, CA, {USA}}, volume 215 of {\em LIPIcs}, pages 10:1--10:20. Schloss
  Dagstuhl - Leibniz-Zentrum f{\"{u}}r Informatik, 2022.

\bibitem[BBC04]{bansal2004correlation}
Nikhil Bansal, Avrim Blum, and Shuchi Chawla.
\newblock Correlation clustering.
\newblock {\em Machine learning}, 56(1):89--113, 2004.

\bibitem[BBDS12]{DBLP:conf/focs/BlockiBDS12}
Jeremiah Blocki, Avrim Blum, Anupam Datta, and Or~Sheffet.
\newblock The johnson-lindenstrauss transform itself preserves differential
  privacy.
\newblock In {\em 53rd Annual {IEEE} Symposium on Foundations of Computer
  Science, {FOCS} 2012, New Brunswick, NJ, USA, October 20-23, 2012}, pages
  410--419. {IEEE} Computer Society, 2012.

\bibitem[BBDS13]{DBLP:conf/innovations/BlockiBDS13}
Jeremiah Blocki, Avrim Blum, Anupam Datta, and Or~Sheffet.
\newblock Differentially private data analysis of social networks via
  restricted sensitivity.
\newblock In Robert~D. Kleinberg, editor, {\em Innovations in Theoretical
  Computer Science, {ITCS} '13, Berkeley, CA, USA, January 9-12, 2013}, pages
  87--96. {ACM}, 2013.

\bibitem[BCSZ18]{DBLP:conf/focs/BorgsCSZ18}
Christian Borgs, Jennifer~T. Chayes, Adam~D. Smith, and Ilias Zadik.
\newblock Revealing network structure, confidentially: Improved rates for
  node-private graphon estimation.
\newblock In Mikkel Thorup, editor, {\em 59th {IEEE} Annual Symposium on
  Foundations of Computer Science, {FOCS} 2018, Paris, France, October 7-9,
  2018}, pages 533--543. {IEEE} Computer Society, 2018.

\bibitem[BDL{\etalchar{+}}17]{balcan}
Maria{-}Florina Balcan, Travis Dick, Yingyu Liang, Wenlong Mou, and Hongyang
  Zhang.
\newblock Differentially private clustering in high-dimensional euclidean
  spaces.
\newblock In Doina Precup and Yee~Whye Teh, editors, {\em Proceedings of the
  34th International Conference on Machine Learning, {ICML}}, volume~70 of {\em
  Proceedings of Machine Learning Research}, pages 322--331. {PMLR}, 2017.

\bibitem[BEK21]{bun2021differentially}
Mark Bun, Marek Elias, and Janardhan Kulkarni.
\newblock Differentially private correlation clustering.
\newblock In {\em International Conference on Machine Learning}, pages
  1136--1146. PMLR, 2021.

\bibitem[BGU13]{bonchi2013overlapping}
Francesco Bonchi, Aristides Gionis, and Antti Ukkonen.
\newblock Overlapping correlation clustering.
\newblock {\em Knowledge and information systems}, 35(1):1--32, 2013.

\bibitem[BNSV15]{DBLP:conf/focs/BunNSV15}
Mark Bun, Kobbi Nissim, Uri Stemmer, and Salil~P. Vadhan.
\newblock Differentially private release and learning of threshold functions.
\newblock In Venkatesan Guruswami, editor, {\em {IEEE} 56th Annual Symposium on
  Foundations of Computer Science, {FOCS} 2015, Berkeley, CA, USA, 17-20
  October, 2015}, pages 634--649. {IEEE} Computer Society, 2015.

\bibitem[CALM{\etalchar{+}}21]{cohen2021correlation}
Vincent Cohen-Addad, Silvio Lattanzi, Slobodan Mitrovi{\'c}, Ashkan
  Norouzi-Fard, Nikos Parotsidis, and Jakub Tarnawski.
\newblock Correlation clustering in constant many parallel rounds.
\newblock {\em arXiv preprint arXiv:2106.08448}, 2021.

\bibitem[CGKM21]{badih_local}
Alisa Chang, Badih Ghazi, Ravi Kumar, and Pasin Manurangsi.
\newblock Locally private k-means in one round.
\newblock {\em CoRR}, abs/2104.09734, 2021.

\bibitem[CGW05]{charikar2005clustering}
Moses Charikar, Venkatesan Guruswami, and Anthony Wirth.
\newblock Clustering with qualitative information.
\newblock {\em Journal of Computer and System Sciences}, 71(3):360--383, 2005.

\bibitem[CKP08]{chakrabarti2008graph}
Deepayan Chakrabarti, Ravi Kumar, and Kunal Punera.
\newblock A graph-theoretic approach to webpage segmentation.
\newblock In {\em Proceedings of the 17th international conference on World
  Wide Web}, pages 377--386, 2008.

\bibitem[CMSY15]{chawla2015near}
Shuchi Chawla, Konstantin Makarychev, Tselil Schramm, and Grigory Yaroslavtsev.
\newblock Near optimal lp rounding algorithm for correlationclustering on
  complete and complete k-partite graphs.
\newblock In {\em Proceedings of the forty-seventh annual ACM symposium on
  Theory of computing}, pages 219--228, 2015.

\bibitem[CNX20]{anamayclustering}
Anamay Chaturvedi, Huy~L. Nguyen, and Eric Xu.
\newblock Differentially private k-means clustering via exponential mechanism
  and max cover.
\newblock {\em CoRR}, abs/2009.01220, 2020.

\bibitem[CSX12]{chen2012clustering}
Yudong Chen, Sujay Sanghavi, and Huan Xu.
\newblock Clustering sparse graphs.
\newblock In {\em Proceedings of the 25th International Conference on Neural
  Information Processing Systems-Volume 2}, pages 2204--2212, 2012.

\bibitem[DEFI06]{demaine2006correlation}
Erik~D Demaine, Dotan Emanuel, Amos Fiat, and Nicole Immorlica.
\newblock Correlation clustering in general weighted graphs.
\newblock {\em Theoretical Computer Science}, 361(2-3):172--187, 2006.

\bibitem[DMNS06]{DBLP:conf/tcc/DworkMNS06}
Cynthia Dwork, Frank McSherry, Kobbi Nissim, and Adam~D. Smith.
\newblock Calibrating noise to sensitivity in private data analysis.
\newblock In Shai Halevi and Tal Rabin, editors, {\em Theory of Cryptography,
  Third Theory of Cryptography Conference, {TCC} 2006, New York, NY, USA, March
  4-7, 2006, Proceedings}, volume 3876 of {\em Lecture Notes in Computer
  Science}, pages 265--284. Springer, 2006.

\bibitem[DR{\etalchar{+}}14]{dwork2014algorithmic}
Cynthia Dwork, Aaron Roth, et~al.
\newblock The algorithmic foundations of differential privacy.
\newblock {\em Found. Trends Theor. Comput. Sci.}, 9(3-4):211--407, 2014.

\bibitem[DRV10]{DBLP:conf/focs/DworkRV10}
Cynthia Dwork, Guy~N. Rothblum, and Salil~P. Vadhan.
\newblock Boosting and differential privacy.
\newblock In {\em 51th Annual {IEEE} Symposium on Foundations of Computer
  Science, {FOCS} 2010, October 23-26, 2010, Las Vegas, Nevada, {USA}}, pages
  51--60, 2010.

\bibitem[Dwo06]{DBLP:conf/icalp/Dwork06}
Cynthia Dwork.
\newblock Differential privacy.
\newblock In {\em Automata, Languages and Programming, 33rd International
  Colloquium, {ICALP} 2006, Venice, Italy, July 10-14, 2006, Proceedings, Part
  {II}}, pages 1--12, 2006.

\bibitem[EKKL20]{DBLP:conf/soda/EliasKKL20}
Marek Eli{\'{a}}s, Michael Kapralov, Janardhan Kulkarni, and Yin~Tat Lee.
\newblock Differentially private release of synthetic graphs.
\newblock In {\em Proceedings of the 2020 {ACM-SIAM} Symposium on Discrete
  Algorithms, {SODA} 2020, Salt Lake City, UT, USA, January 5-8, 2020}, pages
  560--578, 2020.

\bibitem[FHS21]{DBLP:journals/corr/abs-2106-00508}
Alireza Farhadi, MohammadTaghi Hajiaghayi, and Elaine Shi.
\newblock Differentially private densest subgraph.
\newblock {\em CoRR}, abs/2106.00508, 2021.

\bibitem[GG05]{giotis2005correlation}
Ioannis Giotis and Venkatesan Guruswami.
\newblock Correlation clustering with a fixed number of clusters.
\newblock {\em arXiv preprint cs/0504023}, 2005.

\bibitem[GKM20]{badih_approximation}
Badih Ghazi, Ravi Kumar, and Pasin Manurangsi.
\newblock Differentially private clustering: Tight approximation ratios.
\newblock In Hugo Larochelle, Marc'Aurelio Ranzato, Raia Hadsell,
  Maria{-}Florina Balcan, and Hsuan{-}Tien Lin, editors, {\em Advances in
  Neural Information Processing Systems}, 2020.

\bibitem[GLM{\etalchar{+}}10]{DBLP:conf/soda/GuptaLMRT10}
Anupam Gupta, Katrina Ligett, Frank McSherry, Aaron Roth, and Kunal Talwar.
\newblock Differentially private combinatorial optimization.
\newblock In {\em Proceedings of the Twenty-First Annual {ACM-SIAM} Symposium
  on Discrete Algorithms, {SODA} 2010, Austin, Texas, USA, January 17-19,
  2010}, pages 1106--1125, 2010.

\bibitem[GRU12]{DBLP:conf/tcc/GuptaRU12}
Anupam Gupta, Aaron Roth, and Jonathan~R. Ullman.
\newblock Iterative constructions and private data release.
\newblock In Ronald Cramer, editor, {\em Theory of Cryptography - 9th Theory of
  Cryptography Conference, {TCC} 2012, Taormina, Sicily, Italy, March 19-21,
  2012. Proceedings}, volume 7194 of {\em Lecture Notes in Computer Science},
  pages 339--356. Springer, 2012.

\bibitem[HLMJ09]{DBLP:conf/icdm/HayLMJ09}
Michael Hay, Chao Li, Gerome Miklau, and David~D. Jensen.
\newblock Accurate estimation of the degree distribution of private networks.
\newblock In Wei Wang, Hillol Kargupta, Sanjay Ranka, Philip~S. Yu, and Xindong
  Wu, editors, {\em {ICDM} 2009, The Ninth {IEEE} International Conference on
  Data Mining, Miami, Florida, USA, 6-9 December 2009}, pages 169--178. {IEEE}
  Computer Society, 2009.

\bibitem[KCMNT08]{kalashnikov2008web}
Dmitri~V Kalashnikov, Zhaoqi Chen, Sharad Mehrotra, and Rabia Nuray-Turan.
\newblock Web people search via connection analysis.
\newblock {\em IEEE Transactions on Knowledge and Data Engineering},
  20(11):1550--1565, 2008.

\bibitem[KNRS13]{DBLP:conf/tcc/KasiviswanathanNRS13}
Shiva~Prasad Kasiviswanathan, Kobbi Nissim, Sofya Raskhodnikova, and Adam~D.
  Smith.
\newblock Analyzing graphs with node differential privacy.
\newblock In Amit Sahai, editor, {\em Theory of Cryptography - 10th Theory of
  Cryptography Conference, {TCC} 2013, Tokyo, Japan, March 3-6, 2013.
  Proceedings}, volume 7785 of {\em Lecture Notes in Computer Science}, pages
  457--476. Springer, 2013.

\bibitem[KRSY11]{DBLP:journals/pvldb/KarwaRSY11}
Vishesh Karwa, Sofya Raskhodnikova, Adam~D. Smith, and Grigory Yaroslavtsev.
\newblock Private analysis of graph structure.
\newblock {\em Proc. {VLDB} Endow.}, 4(11):1146--1157, 2011.

\bibitem[Liu22]{Daogao2022}
Daogao Liu.
\newblock Better private algorithms for correlation clustering.
\newblock {\em CoRR}, arXiv:2202.10747, 2022.

\bibitem[LS20]{clustering_with_convergence}
Zhigang Lu and Hong Shen.
\newblock Differentially private k-means clustering with guaranteed
  convergence.
\newblock {\em CoRR}, abs/2002.01043, 2020.

\bibitem[MT07]{conf/focs/McSherryT07}
Frank McSherry and Kunal Talwar.
\newblock Mechanism design via differential privacy.
\newblock In {\em 48th Annual {IEEE} Symposium on Foundations of Computer
  Science {(FOCS} 2007), October 20-23, 2007, Providence, RI, USA,
  Proceedings}, pages 94--103, 2007.

\bibitem[NV21]{DBLP:conf/icml/NguyenV21}
Dung Nguyen and Anil Vullikanti.
\newblock Differentially private densest subgraph detection.
\newblock In Marina Meila and Tong Zhang, editors, {\em Proceedings of the 38th
  International Conference on Machine Learning, {ICML} 2021, 18-24 July 2021,
  Virtual Event}, volume 139 of {\em Proceedings of Machine Learning Research},
  pages 8140--8151. {PMLR}, 2021.

\bibitem[RHMS09]{DBLP:conf/pods/RastogiHMS09}
Vibhor Rastogi, Michael Hay, Gerome Miklau, and Dan Suciu.
\newblock Relationship privacy: output perturbation for queries with joins.
\newblock In Jan Paredaens and Jianwen Su, editors, {\em Proceedings of the
  Twenty-Eigth {ACM} {SIGMOD-SIGACT-SIGART} Symposium on Principles of Database
  Systems, {PODS} 2009, June 19 - July 1, 2009, Providence, Rhode Island,
  {USA}}, pages 107--116. {ACM}, 2009.

\bibitem[Swa04]{swamy2004correlation}
Chaitanya Swamy.
\newblock Correlation clustering: maximizing agreements via semidefinite
  programming.
\newblock In {\em SODA}, volume~4, pages 526--527. Citeseer, 2004.

\bibitem[US19]{DBLP:conf/nips/UllmanS19}
Jonathan~R. Ullman and Adam Sealfon.
\newblock Efficiently estimating erdos-renyi graphs with node differential
  privacy.
\newblock In Hanna~M. Wallach, Hugo Larochelle, Alina Beygelzimer, Florence
  d'Alch{\'{e}}{-}Buc, Emily~B. Fox, and Roman Garnett, editors, {\em Advances
  in Neural Information Processing Systems 32: Annual Conference on Neural
  Information Processing Systems 2019, NeurIPS 2019, December 8-14, 2019,
  Vancouver, BC, Canada}, pages 3765--3775, 2019.

\end{thebibliography}

\end{document}